\newcommand{\ra}[1]{\renewcommand{\arraystretch}{#1}}
\newtheorem{theorem}{Theorem}
\newtheorem{lemma}{Lemma}
\newtheorem{proposition}{Proposition}
\newcommand*{\eg}{e.g.\@\xspace}
\newcommand*{\ie}{i.e.\@\xspace}
\def\ee{\mathbf{e}}
\def\gg{\mathbf{g}}
\def\hh{\mathbf{h}}
\def\xx{\mathbf{x}}
\def\AA{\mathbf{A}}
\def\BB{\mathbf{B}}
\def\II{\mathbf{I}}
\def\QQ{\mathbf{Q}}
\def\SSS{\mathbf{S}}
\def\VV{\mathbf{V}}
\def\WW{\mathbf{W}}
\def\XX{\mathbf{X}}
\def\YY{\mathbf{Y}}
\def\ZZ{\mathbf{Z}}
\def\dD{\mathcal{D}}
\def\nN{\mathcal{N}}
\def\oO{\mathcal{O}}
\def\rR{\mathcal{R}}
\def\Ee{\mathbb{E}}
\def\Pe{\mathbb{P}}
\def\Re{\mathbb{R}}
\def\btheta{{\bm\theta}}
\DeclareMathOperator*{\argmin}{arg\,min}
\DeclareMathOperator{\trace}{tr}
\DeclareMathOperator{\diag}{diag}
\DeclareMathOperator{\var}{Var}
\def\trans{^{\intercal}}
\DeclareMathSymbol{@}{\mathord}{letters}{"3B}
\newcommand\abs[1]{\left\lvert#1\right\rvert}
\newcommand\norm[1]{\left\lVert#1\right\rVert}
\newcommand\katelyn[1]{\textcolor{red}{Katelyn says: #1}}
\newcommand\mypara[1]{\vspace{0mm}\noindent\textbf{#1}}
\def\latex/{\LaTeX}
\def\bibtex/{\hologo{BibTeX}}
\newcommand{\p}{\stackrel{p}{\to}}
\newcommand{\Thetag}{\btheta_\gamma}
\newcommand{\tasktheta}{\btheta_j}
\newcommand{\sigmag}{\sigma_\gamma^2}
\newcommand{\Qg}{\QQ_\gamma}
\newcommand{\Ag}{\AA_\gamma(\alpha)}
\newcommand{\drsol}{\btheta_{drs}^*}
\newcommand{\lossgt}{\rR(\btheta;\gamma)}
\newcommand{\drest}{\hat{\btheta}_{drs}}
\newcommand{\mamlsol}{\btheta_{maml}^*(\alpha)}
\newcommand{\mamllossgt}{\rR(\btheta-\alpha\nabla_{\btheta}\rR(\btheta;\gamma);\gamma)}
\newcommand{\mamlest}{\hat{\btheta}_{maml}(\alpha)}
\newcommand{\xgi}{\xx_{j,i}}
\newcommand{\ygi}{y_{j,i}}
\title{Modeling and Optimization Trade-off in Meta-learning}
\author{%
  Katelyn Gao \\
  Intel Labs \\
  %Santa Clara, CA 95051\\
  %\texttt{katelyn.gao@intel.com} \\
  \And
  Ozan Sener \\
  Intel Labs \\
  %Munich, Germany \\
  %\texttt{ozan.sener@intel.com} \\
}
\begin{document}

\maketitle

\begin{abstract}
By searching for shared inductive biases across tasks, meta-learning promises to accelerate learning on novel tasks, but with the cost of solving a complex bilevel optimization problem. We introduce and rigorously define the trade-off between accurate modeling and optimization ease in meta-learning.
At one end, classic meta-learning algorithms account for the structure of meta-learning but solve a complex optimization problem, while at the other end domain randomized search (otherwise known as joint training) ignores the structure of meta-learning and solves a single level optimization problem.
Taking MAML as the representative meta-learning algorithm, we theoretically characterize the trade-off for general non-convex risk functions as well as linear regression, for which we are able to provide explicit bounds on the errors associated with modeling and optimization. We also empirically study this trade-off for meta-reinforcement learning benchmarks.
\end{abstract}

\section{Introduction}\label{sec:intro}
\label{sec:preliminaries}

Arguably, the major bottleneck of applying machine learning to many practical problems is the cost associated with data and/or labeling. While the cost of labeling and data makes supervised learning problems expensive, the high sample complexity of reinforcement learning makes it downright inapplicable for many practical settings. Meta-learning (or in general multi-task learning) is designed to ease the sample complexity of these methods. 
It has had success stories on a wide range of problems including image recognition and reinforcement learning~\citep{metalearningsurvey}.

In the classical risk minimization setting, for a task $\gamma$, the learner solves the problem
\begin{equation}
    \min_{\btheta} \rR(\btheta; \gamma) \triangleq \Ee_{\xi} \left[\hat{\rR}(\btheta,\xi ;\gamma)\right] 
    \label{eq:test_problem}
\end{equation}
where $\rR(\btheta;\gamma)$ is the risk function which the learner can only access via noisy evaluations $\hat{\rR}(\btheta,\xi;\gamma)$. 
Meta-learning, or `learning to learn'~\citep{thrun98}, makes the observation that if the learner has access to a collection of tasks sampled from a distribution $p(\gamma)$, it can utilize an offline meta-training stage to search for shared inductive biases that assist in learning future tasks from $p(\gamma)$.
Under the PAC framework, \citet{baxter00} shows that given sufficiently many tasks and data per task during meta-training, there are guarantees on the generalization of learned biases to novel tasks.

%Since we are interested in meta-learning setting, $\gamma$ is the task-id and the shared risk function enables inductive bias between problem. Meta-learning makes the observation that, if the learner has access to collection of tasks sampled from distribution $p(\gamma)$, it can utilizes an offline meta-learning step to share inductive bias. 

Specifically, consider an optimization algorithm $\texttt{OPT}(\gamma, \btheta_{meta})$ which solves the problem of meta-test task $\gamma$ using the meta solution $\btheta_{meta}$. This meta solution is typically a policy initialization for reinforcement learning or shared features for supervised learning. However, it can be any useful knowledge which can be learned in the meta-training stage. The family of meta-learning methods solve, where in practice $\texttt{OPT}$ is approximated by $\hat{\texttt{OPT}}$ that uses $N$ calls to an oracle \footnote{For example, if $\texttt{OPT}(\gamma,\btheta_{meta})$ is gradient descent on the task risk function $\rR(\btheta; \gamma)$, $\hat{\texttt{OPT}}(\gamma,\btheta_{meta},N)$ would be SGD on the usual empirical risk minimization function~\eqref{eq:erm}.};
\begin{equation}
    \min_{\btheta} \rR^{meta}(\btheta_{meta}) \triangleq \Ee_{\gamma \sim p(\gamma),\xi} \left[ \hat{\rR}(\texttt{OPT}(\gamma, \btheta_{meta}) ,\xi; \gamma) \right]
    \label{eq:meta_population}
\end{equation}

This setting is intuitive and theoretically sound. However, it corresponds to a complicated bilevel optimization problem. Bilevel optimization is a notoriously hard problem and even the case of a well-behaved inner problem (\eg linear program as $\textsc{OPT}$) can be NP-hard \citep{nphard} in the general case. Hence, one can rightfully ask, is it feasible to solve the meta problem in \eqref{eq:meta_population}? This question is rather more important for the case of reinforcement learning as even solving the empirical-risk minimization in \eqref{eq:test_problem} has prohibitively high sample complexity.

Meta-learning proposes to accurately use the structure of the problem by introducing a very costly optimization problem. One obvious question is, \emph{can we trade off modeling accuracy for computational ease?} Unfortunately, there is no general principled approach for controlling this trade-off as it requires understanding domain specific properties of the meta problem. Instead, we focus on the case of meta-information $\btheta_{meta}$ as the initialization of an iterative optimizer for meta-test task $\gamma$, $\btheta_{meta} = \btheta^0_\gamma$ and drop the subscript \emph{meta} as it is clear from the context. This covers many existing algorithms, including MAML~\citep{maml}, which is able to approximate any learning algorithm when combined with deep networks~\citep{finnuniversality}.
For this case of meta-learning the initialization, a simple and direct alternative would be solving the pseudo-meta problem
\begin{equation}
    \min_{\btheta} \rR^{drs}(\btheta) \triangleq \Ee_{\gamma \sim p(\gamma), \xi} \left[ \hat{\rR}(\btheta,\xi; \gamma) \right]  
    \label{eq:dr_population}
\end{equation}
We call this domain randomized search (DRS) since it corresponds to the domain randomization method from \citet{dr} and it does direct search over a distribution of domains (tasks).
\footnote{It has also been referred to as joint training in the literature~\citep{onlinemetalearning}.}

It might not be clear to the reader how DRS solves meta-learning. It is important to reiterate that this is only the case if the meta-learned information is an initialization. However, we believe an approximate form of meta-learning without bilevel structure can be found in other cases with the help of domain knowledge. In this paper, we rigorously prove that DRS is an effective meta-learning algorithm for learning an initialization by showing DRS decreases sample complexity during meta-testing. %Moreover, we also study the modeling and optimization trade-off of DRS and MAML both empirically and theoretically.

These two approaches correspond to the two extremes of the modeling and optimization trade-off. Meta-learning corresponds to an \emph{accurate modeling} and \emph{a computationally harder optimization}, whereas DRS corresponds to a \emph{less accurate modeling} and \emph{computationally easier optimization}. In this paper, we try to understand this trade-off and specifically attempt to answer the following question; \emph{Given a fixed and finite budget for meta-training and meta-testing, which algorithm is more desirable?} In order to answer this question, we provide a collection of theoretical and empirical answers. Taking MAML to be the representative meta-learning algorithm;
\begin{itemize}
    \item We empirically study this trade-off in meta-reinforcement learning (Section~\ref{sec:metarlexp}).
    \item We analyze the sample complexity of DRS and MAML for a general non-convex function, and illustrate the interplay of the modeling error and optimization error (Section~\ref{sec:complexity}).
    \item We theoretically analyze the meta-linear regression case, which is fully tractable, and explicitly characterize the trade-off with simulations that confirm our results (Section~\ref{sec:linreg}).
\end{itemize}
Code to reproduce the experiments can be found at \url{https://github.com/intel-isl/MetaLearningTradeoffs}.

\subsection{Formulation, Background and Summary of Results}

We are interested in a distribution $p(\gamma)$ of problems with risk functions $\rR(\btheta, \gamma)$ for the task ids $\gamma$. Given a specific task, we assume we have access to $2N$ i.i.d. samples from the risk function. This corresponds to sampling data-points and labels for the supervised learning problem, and sampling episodes with their corresponding reward values for reinforcement learning\footnote{Risk can be seen as the chosen loss function for supervised learning and negative of reward for RL.}. Classical empirical risk minimization separately solves for each $\gamma$
\begin{equation}\label{eq:erm}
    \min_{\btheta} \tilde{\rR}(\btheta; \gamma) \triangleq \frac{1}{2N} \sum_{i=1}^{2N} \hat{\rR}(\btheta, \xi_i; \gamma)
\end{equation}
where $\xi_i$ are datapoints for supervised learning and episodes for RL. Empirical risk minimization for MAML can be defined over $M$ meta-training tasks sampled from $p(\gamma)$ as;
\begin{equation}
    \min_{\btheta} \tilde{\rR}^{maml}(\btheta) \triangleq \frac{1}{M} \sum_{j=1}^M  \frac{1}{N} \sum_{i=1}^{N} \hat{\rR}(\hat{\texttt{OPT}}(\gamma_j, \btheta, N) ,\xi_{i,j}; \gamma_j)
    \label{eq:maml_empirical}
\end{equation}
Meta-learning needs samples for both the inner optimization and outer meta problem; usually half of the samples are used for each. Empirical risk minimization for DRS is
\begin{equation}
    \min_{\btheta} \tilde{\rR}^{drs}(\btheta) \triangleq \frac{1}{M} \sum_{i=1}^M \frac{1}{2N} \sum_{i=1}^{2N} \hat{\rR}(\btheta,\xi_{i,j}; \gamma_j)
    \label{eq:empirical_drs_problem}
\end{equation}

Denote the solutions of problems \eqref{eq:meta_population} and \eqref{eq:dr_population} by $\btheta_{maml}^{\star}$ and $\btheta_{drs}^{\star}$, which are the minimum population risk solutions. We call the risk of these solutions $\rR^{drs}(\btheta_{drs}^\star)$ and $\rR^{maml}(\btheta_{maml}^\star)$ the \textbf{modeling error} as they are the best each method can get with the best optimizer and infinite data and computation. However, we only have access to empirical risk in \eqref{eq:maml_empirical} and \eqref{eq:empirical_drs_problem}, as well as a finite computation budget for meta-training, $T^{tr}$. Hence, instead of optimal solutions, we have solutions $\btheta_{maml}^{T^{tr}}$ and $\btheta_{drs}^{T^{tr}}$. We call the difference between these empirical solutions and $\btheta_{maml}^{\star}$ and $\btheta_{drs}^{\star}$ the \textbf{optimization error}.

We expect the optimization error of MAML to be significantly higher as bilevel optimization is much harder. More specifically, for general non-convex risk functions, in order for stochastic gradient descent (SGD) to reach an $\epsilon$-stationary point, DRS needs $\mathcal{O}(\nicefrac{1}{\epsilon^4})$ samples \citep{sgd_convergence} while MAML needs $\mathcal{O}(\nicefrac{1}{\epsilon^6})$~\citep{mamlconvergence}. Moreover, MAML uses half of its samples for the inner optimization; hence, the effective number of samples is reduced resulting in additional optimization error. The optimization error strongly depends on the sample budget ($N$ and $M$) which can be afforded. As more samples will decrease optimization errors for both methods but with different rates. On the other hand, we know MAML has lower modeling error as it explicitly uses the problem geometry. 

The key question is the trade-off of the modeling error versus the optimization error as a function of $N$ and $M$ since it characterizes which algorithm is better for a given problem and budget. We look at this trade-off in two different settings.
\begin{itemize}
    \item \textbf{General Non-Convex Case:} We study this trade-off empirically for meta-reinforcement learning (Section~\ref{sec:metarlexp}) and theoretically for general non-convex risk functions (Section~\ref{sec:complexity}). Our empirical results suggest that this trade-off is rather nuanced and problem dependent. Our theoretical results shed light on these peculiarities and describe the trade-off.
    
    \item \textbf{Meta-linear regression:} (Section~\ref{sec:linreg}). Empirical risk can be minimized analytically, enabling us to directly compare optimization error and modeling error. Results suggest that there are cut-off values for $N$ and $M$ that determine when the different methods are desirable.
\end{itemize}

\section{Trade-offs in Meta-Reinforcement Learning}\label{sec:metarlexp}
We study the trade-off empirically in meta-reinforcement learning in this section. We compare DRS and MAML on on a wide range of meta-RL benchmarks used in the literature. We consider 1) four robotic locomotion environments introduced by \citet{maml} and \citet{promp}, with varying reward functions (Half Cheetah Rand Vel, Walker2D Rand Vel) or varying system dynamics (Hopper Rand Params, Walker2D Rand Params) and 2) four manipulation environments from MetaWorld~\citep{metaworld} with varying reward functions (ML1-Push, ML1-Reach) or changing manipulation tasks and varying reward functions (ML10, ML45).
\footnote{We do not include ML1-Pick-and-Place because training was unsuccessful for all algorithms of interest.}
All environments utilize the Mujoco simulator~\citep{mujoco}. 

We make two comparisons: 
1) ProMP~\citep{promp}, which combines MAML with PPO~\citep{ppo}, vs. DRS combined with PPO (DRS+PPO)
2) TRPO-MAML~\citep{maml, promp}, which combines MAML with TRPO~\citep{trpo}, vs. DRS combined with TRPO (DRS+TRPO).
Meta-training is controlled such that all algorithms utilize an equal number of steps generated from the simulator for each environment.

During evaluation, we first sample a set of meta-test tasks. For each meta-test task, starting at a trained policy, we repeat the following five times: generate a small number of episodes from the current policy and update the policy using the policy gradient algorithm from the inner optimization of ProMP and TRPO-MAML.
We compute the average episodic reward after $t$ updates, for $t=0,1,\dots,5$.
These statistics are then averaged over all sampled tasks.
We evaluate the policies learned by all algorithms at various checkpoints during meta-training, in total five evaluations with different random seeds. For each checkpoint and test update, we compute the probability of DRS being better than MAML using the one-sided Welch t-test. We plot the filled contour plot of the probabilities with respect to the meta-training checkpoint and number of test updates in Figures~\ref{fig:ppo} and~\ref{fig:trpo}. 
We include details of the experimental protocol, Welch t-test, and additional plots of the reward in Appendix~\ref{sec:metarl_details}.

\begin{figure}[t]
\includegraphics[width=\textwidth]{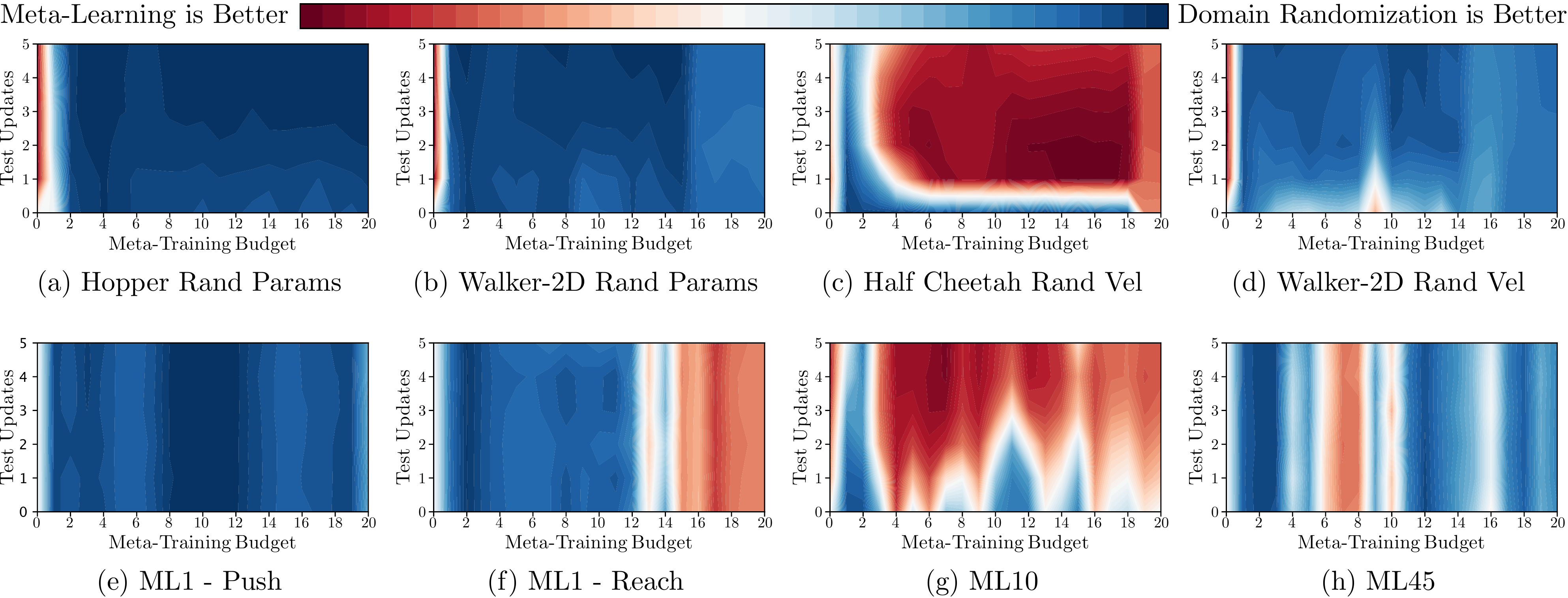}
\vspace{-5mm}
    \caption{DRS+PPO vs. ProMP: Probability that the first method is better than the second.}
\vspace{5mm}
    \label{fig:ppo}
%\end{figure}
%\begin{figure}
\includegraphics[width=\textwidth]{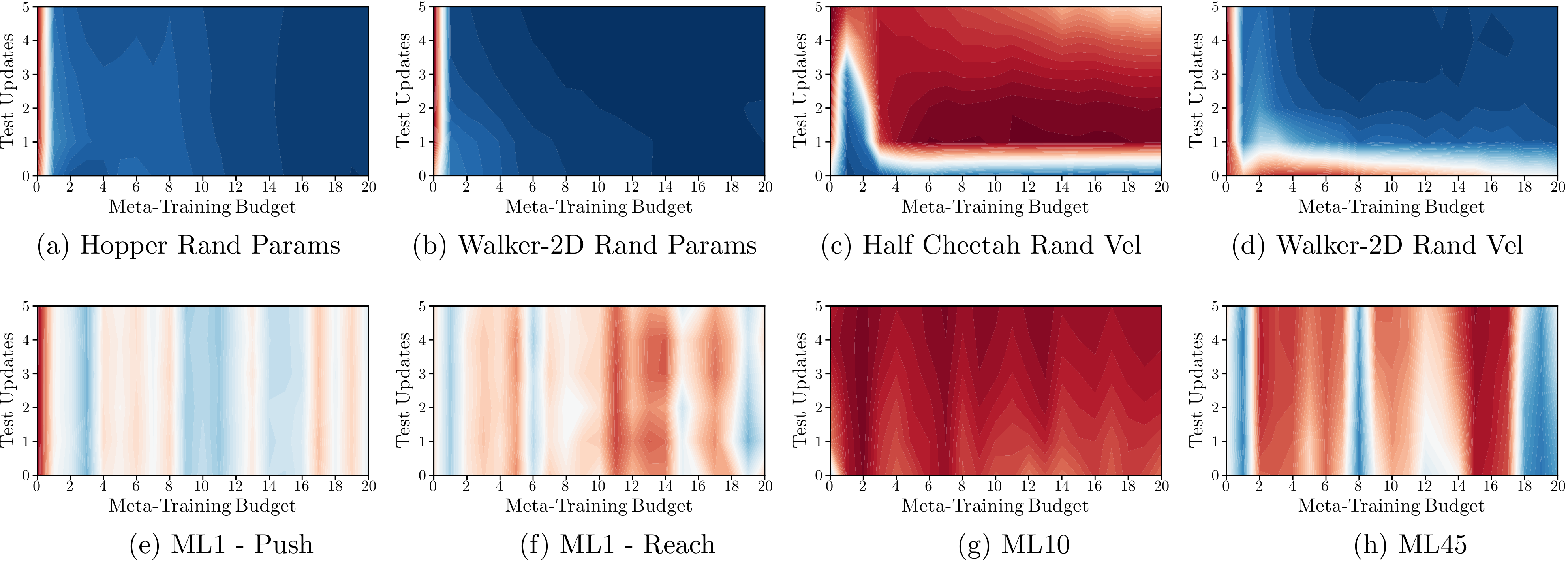}
\vspace{-5mm}
    \caption{DRS+TRPO vs. TRPO-MAML: Probability that the first method is better than the second.}
    \vspace{-5mm}
    \label{fig:trpo}
\end{figure}

For the majority of environments, DRS is either better than or comparable to MAML, \emph{(see Figure~\ref{fig:ppo}-(a,b,d,e,f) and Figure~\ref{fig:trpo}-(a,b,d,e,f))}. For two environments, specifically \emph{Half Cheetah Rand Vel} and \emph{ML10} in Figure~\ref{fig:ppo}-(c,g)\&\ref{fig:trpo}-(c,g), MAML outperforms DRS for larger sample sizes and at least one test update. This confirms our thesis as the higher optimization error of MAML requires a larger sample size for successful learning, dominating its trade-off with modeling error for smaller sample sizes. 
Another surprising result is the case of ML45 (in Figure~\ref{fig:ppo}-(h)\&~\ref{fig:trpo}-(h)) where as the sample size increases it alternates between the cases where DRS is better and MAML is better. We conjecture that this is because of the greater variance in meta training arising from the higher diversity in tasks.

%MAML performs better than DRS only for small quantities of meta-training steps in Figure~\ref{fig:ppo}-(a,b,d,g) and Figure~\ref{fig:trpo}. 
%We believe this is largely due to the fact that DRS does not become effective until some fixed training budget as discussed in Section~\ref{sec:dr_sample_complexity} (see the flat part in Figure~\ref{fig:drtradeoff}). Once the DRS is effective, it consistently performs better than MAML. 
%Partially due to the nonconvexity of the deep reinforcement learning objective, the optimization and statistical errors appear to dominate and makes MAML less effective even for rather large numbers of meta-training steps in Figure~\ref{fig:ppo}-(a,b,e) and Figure~\ref{fig:trpo}-(a,b,d). In the cases of Figure~\ref{fig:ppo}-(d,g,h) and Figure~\ref{fig:trpo}-(c,e,f,g,h), MAML is effective for larger numbers of training steps, but DRS performs nearly as well. It is also interesting to note from Figure~\ref{fig:ppo}-(d,g) and Figure~\ref{fig:trpo}-(e,f,h), that MAML is usually more effective in update sizes $1$ and $2$. This is rather expected as ProMP and TRPO-MAML implicitly assume a single inner optimization step in meta-training and suggests that MAML somewhat overfits.

Another interesting observation is that in the majority of environments, MAML fares better when combined with TRPO than when combined with PPO. This validates the importance of optimization as both TRPO and PPO use exactly the same model and are expected to behave similarly from a statistical perspective. We suspect that this behavior is due to the difference in how the trust region is used for optimization. In TRPO-MAML the constraint on the KL divergence between the current policy and updated policy is strictly enforced, while in ProMP it is transformed into a Lagrangian penalty and thus may not actually be satisfied. Satisfaction of the constraint is more helpful for MAML, whose empirical gradients generally have greater variance than those of DRS.

In conclusion, when the sample complexity is high and available sample size budget is low, DRS is the preferred method. MAML is only effective for large data sets, only in some problems. It is important for practitioners to understand the sample complexity of their problems before choosing which method to apply. In the next section, we provide theoretical analysis to clarify this phenomenon.

\section{Trade-off through the Lens of Optimization Behavior}\label{sec:complexity}
In this section, we analyze the sample complexity of meta-training and meta-test of MAML and DRS. In particular, we consider the interplay of meta-training and meta-testing for a complete analysis.
Specifically, for MAML $\textsc{OPT}(\gamma, \btheta)$ is one step of gradient descent on the task risk function $\rR(\btheta;\gamma)$ starting at $\btheta$ with learning rate $\alpha$.
The objectives of DRS and MAML are
\begin{equation}
     \rR^{drs}(\btheta) \triangleq \Ee_\gamma[\lossgt], \quad\quad   \rR^{maml}(\btheta;\alpha) \triangleq \Ee_\gamma[\mamllossgt].
    \label{eq:objectives}
\end{equation}

We analyze the trade-off between optimization error and modeling error of DRS and MAML for smooth non-convex risk functions from a sample complexity perspective. We denote the modeling errors by $\Lambda$ and, as in Section~\ref{sec:intro}, define them as the expected risks at the globally optimal values of the corresponding objectives. Specifically, \mbox{$\Lambda^{drs}=\rR^{drs}(\btheta^\star_{drs})$} and $\Lambda^{maml}(\alpha) = \rR^{maml}(\btheta^\star_{maml};\alpha)$.% where $\btheta^\star$ is the optimal value and expectation is over randomness of SGD.

Suppose that we have access to task stochastic gradient oracles and task stochastic Hessian oracles.
During meta-training, DRS and MAML optimize $\rR^{drs}(\btheta)$ and $\rR^{maml}(\btheta;\alpha)$ using SGD for $T^{tr}$ steps. 
During meta-testing for a task $\gamma$, both carry out classical risk minimization~\eqref{eq:test_problem} using SGD for $T^{te}$ steps, with the meta-training results $\bm{\theta}^{T^{tr}}_{drs}$ and $\bm{\theta}^{T^{tr}}_{maml}(\alpha)$ as warm starts. 
The metric we use is the Euclidean norm of the gradient at meta-testing since the best we can hope for a non-convex function is first-order stationarity. 

Before we proceed with our analysis we set the notation and make some mild regularity assumptions. Consider task stochastic gradient oracles $\gg(\cdot,\xi;\gamma)$ such that $\Ee_{\xi}[\gg(\btheta,\xi;\gamma)] = \nabla_{\btheta} \rR(\btheta;\gamma)$, and task stochastic Hessian oracles $\hh(\cdot,\xi;\gamma)$ such that $\Ee_\xi[\hh(\btheta,\xi;\gamma)]=\nabla_\btheta^2\rR(\btheta;\gamma)$. We assume \textbf{B1:} The risk functions are nonnegative and bounded, $0\leq\mathcal{R}(\btheta;\gamma)\leq \Delta$ for all $\btheta$ and $\gamma$. 
\textbf{B2:} The risk functions are uniformly Lipschitz and smooth $\nicefrac{\|\rR(\btheta^1;\gamma) - \rR(\btheta^2;\gamma)\|}{\|\btheta^1 - \btheta^2\|} \leq L$ and $\nicefrac{\|\nabla_{\btheta}\rR(\btheta^1;\gamma) - \nabla_{\btheta}\rR(\btheta^2;\gamma)\|}{\|\btheta^1 - \btheta^2\|} \leq \mu$ for all $\gamma$.
\textbf{B3:} The task stochastic gradient oracles have bounded variance, $\trace(\var_{\xi}(\gg(\cdot,\xi;\gamma))) \leq V^d$ for all $\gamma$, and the gradients of the risk functions have bounded variance, $\trace(\var_{\gamma}(\nabla_{\btheta} \rR(\btheta;\gamma))) \leq V^t$.
\textbf{B4:} the Hessians of the risk functions are Lipschitz $\nicefrac{\|\nabla_{\btheta}^2\rR(\btheta^1;\gamma) - \nabla_{\btheta}^2\rR(\btheta^2;\gamma)\|}{\|\btheta^1 - \btheta^2\|} \leq \mu^H$ for all $\gamma$.
\textbf{B5:} The task stochastic Hessian oracles have bounded variance $\Ee_\xi\big[\norm{\hh(\btheta,\xi;\gamma)-\nabla_\btheta^2\rR(\btheta;\gamma)}^2\big]\leq V^h$.

We state the sample complexity of DRS and MAML in Theorems~\ref{thm:drsamplecomplexity} and~\ref{thm:mamlsamplecomplexity}, respectively. Proofs can be found in Appendix~\ref{sec:complexity_proofs}.
\begin{theorem}\label{thm:drsamplecomplexity}
Suppose that during each iteration of meta-training DRS, $M$ tasks are sampled each with $2N$ calls to their gradient oracles, and during each iteration of meta-testing, $N$ calls are made to the gradient oracle.
Assume $\textbf{B1-3}$. Then, in expectation over the meta-test task $\gamma$,
\begin{align*}
  \sum_{t=0}^{T^{tr}-1} \norm{\nabla_\btheta \rR^{drs}(\btheta^t)}^2 + \sum_{t=0}^{T^{te}-1} \|\nabla_{\btheta}\rR(\btheta^{t+T^{tr}};\gamma)\|^2  \leq  \sqrt{0.5\big(\Delta + \Lambda^{drs}\big) \big(C^{drs}_{tr}T^{tr} + C^{drs}_{te}T^{te}\big)}
\end{align*}
where $ C^{drs}_{tr}= 
    \mu\left(L^2+\frac{V^t}{M}+\frac{V^d}{2NM}\right)$and $C^{drs}_{te}= \mu\left(L^2+\frac{V^d}{N}\right) $.%, and $\Lambda^{drs}=\rR^{drs}(\drsol)$.
\end{theorem}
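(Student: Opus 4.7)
The plan is to apply the standard descent-lemma analysis for SGD on a smooth nonconvex objective to each of the two phases in isolation, and then stitch them together using the identity $\Ee_\gamma[\rR(\btheta;\gamma)] = \rR^{drs}(\btheta)$. Concretely, since $\rR(\cdot;\gamma)$ is $\mu$-smooth by B2 and the average of $\mu$-smooth functions is $\mu$-smooth, $\rR^{drs}$ is also $\mu$-smooth. For any iteration with stochastic gradient $\gg^t$ of an objective $f$, smoothness gives
\[
    \Ee[f(\btheta^{t+1})\mid\btheta^t] \;\leq\; f(\btheta^t) - \eta\,\|\nabla f(\btheta^t)\|^2 + \tfrac{\mu\eta^2}{2}\,\Ee[\|\gg^t\|^2\mid\btheta^t].
\]
I will apply this to $f=\rR^{drs}$ during meta-training and $f=\rR(\cdot;\gamma)$ during meta-testing, rearrange, and sum.

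Next, I would compute the two second-moment bounds. For the meta-training oracle $\bar\gg = \tfrac{1}{M}\sum_j \tfrac{1}{2N}\sum_i \gg(\btheta,\xi_{i,j};\gamma_j)$, the law of total variance yields
\[
\trace\var(\bar\gg) = \tfrac{1}{M}\trace\var_\gamma\big(\nabla_\btheta\rR(\btheta;\gamma)\big) + \tfrac{1}{M}\Ee_\gamma\!\big[\tfrac{1}{2N}\trace\var_\xi(\gg(\cdot,\xi;\gamma))\big] \leq \tfrac{V^t}{M}+\tfrac{V^d}{2NM},
\]
using independence of tasks and within-task samples together with B3. The analogous meta-testing oracle has variance at most $V^d/N$. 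Combined with the Lipschitz bound $\|\nabla_\btheta \rR(\btheta;\gamma)\|\leq L$ from B2, one obtains $\Ee\|\bar\gg\|^2 \leq L^2 + V^t/M + V^d/(2NM)$ in training and $\Ee\|\bar\gg\|^2\leq L^2 + V^d/N$ in testing, which is precisely why the constants $C^{drs}_{tr}$ and $C^{drs}_{te}$ appear with the $L^2$ term additive to the variance.

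Using a common step size $\eta$ across both phases, summing the descent inequalities, and observing that the telescoping drop across the two phases is
\[
\Ee\big[\rR^{drs}(\btheta^0) - \rR^{drs}(\btheta^{T^{tr}})\big] + \Ee\big[\rR(\btheta^{T^{tr}};\gamma) - \rR(\btheta^{T^{tr}+T^{te}};\gamma)\big]
\]
with $\Ee_\gamma[\rR(\btheta^{T^{tr}};\gamma)] = \rR^{drs}(\btheta^{T^{tr}})$ (since $\btheta^{T^{tr}}$ is independent of the meta-test task), I would bound the total decrease by a quantity of the form $\kappa(\Delta,\Lambda^{drs})$ using B1 together with the fact that the two minima are at least $0$ and $\Lambda^{drs}$, respectively. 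This gives
\[
\Ee[S]\;\leq\;\frac{\kappa(\Delta,\Lambda^{drs})}{\eta} + \frac{\eta}{2}\big(C^{drs}_{tr}T^{tr}+C^{drs}_{te}T^{te}\big),
\]
and the stated bound follows by choosing $\eta$ to equate the two terms, yielding the square-root dependence. The main obstacle, and where care is required, is the variance decomposition for the meta-training oracle and the bookkeeping that lets the inter-phase telescoping go through cleanly once the expectation over $\gamma$ is taken; producing the exact constant $0.5(\Delta+\Lambda^{drs})$ is just a matter of how loosely one bounds the expected function drop and not a fundamentally different argument.
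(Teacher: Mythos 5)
Your proposal is correct and matches the paper's own proof essentially step for step: the per-phase descent lemma under B2, the second-moment bounds $L^2+V^t/M+V^d/(2NM)$ and $L^2+V^d/N$ from B2--B3, the stitching of the two phases via $\Ee_\gamma[\rR(\btheta^{T^{tr}};\gamma)]=\rR^{drs}(\btheta^{T^{tr}})$ together with B1 and $\Lambda^{drs}\ge 0$, and the final optimization over a common step size. The only caveat is cosmetic and shared with the paper itself: balancing $\kappa/\eta+\eta(C^{drs}_{tr}T^{tr}+C^{drs}_{te}T^{te})/2$ actually yields $\sqrt{2\kappa(C^{drs}_{tr}T^{tr}+C^{drs}_{te}T^{te})}$ rather than the stated $\sqrt{0.5\,\kappa(\cdot)}$, so the constant in front is a bookkeeping artifact rather than something your (or the paper's) argument derives.
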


Before we continue with the analysis of MAML, we first discuss the implications of Theorem~\ref{thm:drsamplecomplexity}. Let us examine the bound when $T^{te}$ increases by one and $T^{tr}$, as well as all other variables, are fixed. The change in the left side is the meta-testing gradient after $T^{te}$ iterations, whereas that in the right side is approximated by its gradient wrt $T^{te}$, behaving approximately as $\mathcal{O}(\nicefrac{\sqrt{\Delta+\Lambda^{drs}}}{\sqrt{C_{tr}^{drs}T^{tr} + C_{te}^{drs}T^{te}}})$. Hence, with more meta-training, we get closer to the stationary point of the meta-test task $\gamma$ with the same number of meta-test iterations. In other words our result shows that DRS, although it ignores the meta-learning problem structure as discussed in Section~\ref{sec:intro}, provably solves the problem of meta-learning the initialization of an iterative optimization problem under sensible assumptions. 

\begin{theorem}\label{thm:mamlsamplecomplexity}
Following Algorithm $1$ of~\citet{mamlconvergence}, suppose that during each iteration of meta-training MAML, $M$ tasks are sampled each with $2N$ calls to their gradient oracles, half of which are used in the inner optimization, and $D$ calls to their Hessian oracles.
During each iteration of meta-testing, $N$ calls are made to the gradient oracle. 
Assume $\textbf{B1-5}$, $\alpha\in[0,1/6\mu]$, and $D\geq 2\alpha^2V^h$. Then, in expectation over the meta-test task $\gamma$,
\begin{align*}
    &\sum_{t=0}^{T^{tr}-1} \norm{\nabla_\btheta \rR^{maml}(\btheta^t;\alpha)}^2 + \sum_{t=0}^{T^{te}-1} \|\nabla_{\btheta}\rR(\btheta^{t+T^{tr}};\gamma)\|_2^2 \\ &\leq  T^{tr}\alpha\mu (1+\alpha\mu)^2L\sqrt{V^d/N} +\sqrt{0.5\big(\Delta + \Lambda^{maml}(\alpha)+\alpha L^2\big) \big(C^{maml}_{tr}T^{tr} + C^{maml}_{te}T^{te}\big)}
\end{align*}
where $C^{maml}_{tr}= (4\mu+2\mu^H\alpha L) \left(\left(2+\frac{40}{M}\right)(1+\alpha\mu)^2L^2+\frac{14V^t}{M}+\frac{3V^d(1 + \alpha^2\mu^2M)}{MN}\right)$ and $C^{maml}_{te}= C^{drs}_{te}$. %$\Lambda^{maml}(\alpha)=\rR^{maml}(\mamlsol)$. 
%Moreover, $\mu^{\prime} = 4\mu+2\mu^{H}\alpha L$ is the smoothness constant of the MAML objective.
\end{theorem}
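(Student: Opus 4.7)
The proof will follow the same template as Theorem~\ref{thm:drsamplecomplexity}, splitting the stated bound into a meta-training phase driven by SGD on $\rR^{maml}(\cdot;\alpha)$ and a meta-testing phase driven by SGD on the test-task risk $\rR(\cdot;\gamma)$, then gluing the two phases together through a chained potential and finally optimizing over the two constant step sizes $\eta_{tr}$ and $\eta_{te}$. The two new ingredients relative to the DRS proof are the bilevel structure of the MAML objective, which makes the stochastic meta-gradient \emph{biased}, and the need to translate the end-of-training value of $\rR^{maml}$ into a valid starting value for the test-phase descent inequality.

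For the meta-training phase I will invoke the one-step analysis of Algorithm~$1$ of \citet{mamlconvergence}. Under \textbf{B1--B5} and $\alpha\leq 1/(6\mu)$, the MAML objective is smooth with constant $4\mu+2\mu^H\alpha L$, which is the prefactor in $C^{maml}_{tr}$. Decomposing the stochastic meta-gradient into its Jacobian factor $(I-\alpha\hh(\btheta,\xi;\gamma))$, its inner Monte-Carlo adaptation (using $N$ of the $2N$ samples), and its outer Monte-Carlo gradient, and applying Young's inequality gives a per-step variance matching the bracketed expression in $C^{maml}_{tr}$, provided the assumption $D\geq 2\alpha^2 V^h$ is used to absorb Hessian noise into the $(1+\alpha\mu)^2$ factor. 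The same decomposition yields a per-step bias of order $\alpha\mu(1+\alpha\mu)^2 L\sqrt{V^d/N}$ coming from the Jacobian-times-inner-gradient estimator. Combining the biased-SGD descent lemma on a smooth objective with these bias and variance bounds, and using $\rR^{maml}(\btheta^{T^{tr}};\alpha)\geq \Lambda^{maml}(\alpha)$ together with \textbf{B1}, gives an upper bound on $\sum_{t=0}^{T^{tr}-1}\|\nabla\rR^{maml}(\btheta^t;\alpha)\|^2$ in the $\tfrac{A}{\eta_{tr}}+\eta_{tr}B T^{tr}$ form plus an additive $T^{tr}\cdot\text{bias}$ contribution.

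For the meta-testing phase the argument is identical to that of Theorem~\ref{thm:drsamplecomplexity}: starting from the warm start $\btheta^{T^{tr}}_{maml}(\alpha)$, SGD on $\rR(\cdot;\gamma)$ with $N$ gradient-oracle calls per step has per-step variance $V^d/N$ and smoothness $\mu$, which explains $C^{maml}_{te}=C^{drs}_{te}$. To chain the two phases into a single potential I need to upper bound the initial test-phase value $\rR(\btheta^{T^{tr}}_{maml}(\alpha);\gamma)$ in terms of $\rR^{maml}(\btheta^{T^{tr}}_{maml}(\alpha);\alpha)$. Using the Lipschitz property of $\rR(\cdot;\gamma)$ from \textbf{B2} on one inner gradient step,
\begin{equation*}
\rR(\btheta;\gamma) \;\leq\; \mamllossgt + \alpha L\,\|\nabla_{\btheta}\rR(\btheta;\gamma)\| \;\leq\; \mamllossgt + \alpha L^2,
\end{equation*}
and taking expectation over $\gamma$ yields $\Ee_\gamma \rR(\btheta;\gamma)\leq \rR^{maml}(\btheta;\alpha)+\alpha L^2$, which is precisely the offset that produces the $\Delta+\Lambda^{maml}(\alpha)+\alpha L^2$ factor in the stated bound. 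Adding the two summed bounds, taking expectation over $\gamma$, and optimizing over the two constant step sizes converts each $\tfrac{A}{\eta}+\eta BT$ block into a $\sqrt{A\cdot B\cdot T}$ term and collects the variance constants into $C^{maml}_{tr}$ and $C^{maml}_{te}$; the accumulated bias survives as the additive $T^{tr}\alpha\mu(1+\alpha\mu)^2 L\sqrt{V^d/N}$.

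The main obstacle is the bias/variance bookkeeping of the MAML stochastic gradient in the training phase. Because the estimator contains the Jacobian factor $(I-\alpha\hh(\btheta,\xi;\gamma))$ multiplying an inner Monte-Carlo gradient, both the Jacobian noise (controlled by $D$ and $V^h$) and the inner-loop noise (controlled by $N$ and $V^d$) propagate through a product of random matrices; keeping the variance from exceeding the $(1+\alpha\mu)^2$-type constants and keeping the bias at the claimed $\sqrt{V^d/N}$ rate is exactly where the hypotheses $\alpha\leq 1/(6\mu)$ and $D\geq 2\alpha^2 V^h$ are consumed. This calculation is carried out in \citet{mamlconvergence} and I would invoke it as a black box; the remainder is then a deterministic repackaging in the style of Theorem~\ref{thm:drsamplecomplexity}.
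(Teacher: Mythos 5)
Your proposal matches the paper's proof in all essentials: it uses the smoothness constant $4\mu+2\mu^H\alpha L$ from \citet{mamlconvergence}, invokes their bias/variance bounds for the stochastic MAML gradient (bias of order $(1+\alpha\mu)\alpha\mu\sqrt{V^d/N}$ paired with the gradient-norm bound $(1+\alpha\mu)L$, second moment giving the bracket in $C^{maml}_{tr}$), chains the two phases via the Lipschitz offset $|\Ee_\gamma[\rR(\btheta;\gamma)]-\rR^{maml}(\btheta;\alpha)|\leq\alpha L^2$, and optimizes a common step size exactly as in Theorem~\ref{thm:drsamplecomplexity}. The only difference is cosmetic bookkeeping of where $\Lambda^{maml}(\alpha)$ enters the numerator (the paper gets it from $\Ee_\gamma[\rR(\mamlsol-\alpha\nabla_\btheta\rR(\mamlsol;\gamma);\gamma)]=\Lambda^{maml}(\alpha)$), which does not change the argument.
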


We keep the $\alpha$ as a free parameter since it makes the role of modeling error explicit. First consider a similar argument to discussion of Theorem~\ref{thm:drsamplecomplexity}; the meta-testing gradient after $T^{te}$ iterations behaves approximately as $\mathcal{O}(\nicefrac{\sqrt{\Delta+\Lambda^{maml}}}{\sqrt{C_{tr}^{maml}T^{tr} + C_{te}^{maml}T^{te}}})$. Hence, unsurprisingly meta-training of MAML also provably improves the sample complexity in meta-testing. Next, if we ignore the modeling error by assuming $\Lambda^{maml}(\alpha) = \Lambda^{drs}$, the bound in Theorem~\ref{thm:drsamplecomplexity} is lower than in Theorem~\ref{thm:mamlsamplecomplexity} for all $\alpha > 0$. In other words, if the problem has no specific geometric structure that MAML can utilize, DRS will perform better. On the other hand, if the modeling error is dominant, \ie $\Lambda^{maml}(\alpha) \ll \Lambda^{drs}$, MAML will perform better. For most practical cases, $\Lambda^{maml}(\alpha)$ will be less than $\Lambda^{drs}$, but not significantly. In these cases, the trade-off is governed by the values of $N$ and $M$. 
%Moreover, Theorem \ref{thm:drsamplecomplexity} and \ref{thm:mamlsamplecomplexity} accurately specify what are these cut-off values for $N$ and $M$. 

Another important observation is the strong dependence of the sample complexity on oracle variances as well as smoothness constants. This partially explains the problem dependent behavior of DRS and MAML in Section~\ref{sec:metarlexp}. Choosing the right algorithm requires, in addition to the budget for $N$ and $M$, these problem and domain specific information that are often not practical to compute (or estimate). 
%It is also important to clarify that smoothness constants for functions, gradients and Hessian are not practical to compute (or even estimate). 
Hence, translating these results to practically relevant decision rules needs future work.

%If the coefficient of $T^{tr}$ in $c$ is larger than $\mu(L^2+V^t/M+V^d/2NM)$, MAML is a more effective meta-learning algorithm than DRS, with faster decrease in the meta-testing gradients in terms of the number of meta-training iterations; at a glance, this seems likely to be true.
%However, the bound has a term proportional to the number of meta-training steps, which the bound for DRS, given in Theorem~\ref{thm:drsamplecomplexity}, does not.
%This reflects the greater optimization complexity of MAML, discussed in Section~\ref{sec:intro}.
%In sum, MAML suffers a penalty during meta-training compared to DRS but is expected to have faster adaptation during meta-testing.

\section{Trade-offs in Meta-Linear Regression}\label{sec:linreg}
In this section, we study the linear regression case where we can explicitly characterize the trade-off of optimization error and modeling error.
Since the empirical risk minimization problems corresponding to MAML and DRS in linear regression are analytically solvable, the optimization errors only consist of the statistical errors arising from using empirical risk minimization. 
We first analyze the optimization errors and then discuss the modeling errors of the two approaches. All proofs can be found in Appendix~\ref{sec:linreg_proofs}.

\subsection{Formal Setup and Preliminaries}

For each task $\gamma$, we assume the following data model:
\begin{align}\label{model:lr}
    y_\gamma &= \Thetag\trans \xx_\gamma+\epsilon_\gamma, \quad\quad \epsilon_\gamma \sim (0, \sigmag), \quad\quad \Qg = \Ee[\xx_\gamma \xx_\gamma\trans \mid \gamma].
\end{align}
where $\epsilon_\gamma$ and $\xx_\gamma$ are independent and $\xx_\gamma\in\Re^p$.
We assume the squared error loss;
\begin{equation}\label{eq:lossfunc}
    \lossgt =\dfrac12\Ee[(y_\gamma-\btheta\trans \xx_{\gamma})^2\mid\gamma] = \dfrac12\btheta\trans\Qg\btheta-\Thetag\trans\Qg\btheta+\dfrac12\Thetag\trans\Qg\Thetag+\dfrac12\sigmag.
\end{equation}

The globally optimal (minimum risk) solutions for the DRS and MAML objectives \eqref{eq:objectives} are (\emph{see \ref{app:optimal_sol} for derivations});
\begin{equation}\label{eq:drmamlsols}
    \begin{aligned}
    \drsol &=  \Ee_\gamma[\Qg]^{-1}\Ee_\gamma[\Qg\Thetag] \\
    \mamlsol &=  \Ee_\gamma[(\II-\alpha\Qg)\Qg(\II-\alpha\Qg)]^{-1}\Ee_\gamma[(\II-\alpha\Qg)\Qg(\II-\alpha\Qg)\Thetag].
\end{aligned}
\end{equation}

Both solutions can be viewed as weightings of the task parameters $\Thetag$.
Since the Hessian of $\lossgt$ is $\Qg$, the DRS solution gives greater weight to the tasks whose risk functions have higher curvature, \ie are more sensitive to perturbations in $\btheta$.
Compared to the DRS solution, the MAML solution puts more weight on the tasks with lower curvature.
As the gradient of $\lossgt$ is $\Qg(\btheta-\Thetag)$, for tasks with lower curvature one gradient step on the task risk function takes us a smaller fraction of the distance from the current point $\btheta$ to the stationary point $\Thetag$; thus, starting from the MAML solution enables faster task adaptation overall if the risks are known exactly.

\subsection{Bounds on Optimization Error}

We consider a finite-sample setting where $M$ tasks, independently sampled from $p(\gamma)$ and $2N$ observations per task, sampled according to the model in \eqref{model:lr} are given. We denote the resulting dataset as $\dD \equiv (\xgi,\ygi)$, $j=1,\dots,M, i=1,\dots,2N$.

During meta-training, from~\eqref{eq:empirical_drs_problem}, DRS minimizes the average squared error over all data:
\begin{align}\label{eq:drest}
    \drest &\triangleq \argmin_\btheta \sum_{j=1}^M\sum_{i=1}^{2N}(\ygi-\btheta\trans\xgi)^2.
\end{align}
From~\eqref{eq:maml_empirical}, MAML optimizes for parameters such that, when optimized for each task using SGD on the average squared error over $N$ observations with learning rate $\alpha$, minimizes the average squared error over the remaining $N$ observations of all tasks: 
\begin{align}\label{eq:mamlest}
    \mamlest &\triangleq \argmin_\btheta\sum_{j=1}^M\sum_{i=N+1}^{2N}(\ygi-\tilde{\btheta}_j(\alpha)\trans\xgi)^2, \tilde{\btheta}_j(\alpha)=\btheta-\dfrac{\alpha}{N}\sum_{i=1}^N(\xgi\xgi\trans\btheta-\xgi\ygi)
\end{align}
As a sub-optimality metric, we characterize the distances between the empirical solution and globally optimal solution, $\norm{\drest - \drsol}$ and $\norm{\mamlest - \mamlsol}$, in terms of the finite sample sizes $M$ and $N$ in Theorem~\ref{thm:drbound}\&\ref{thm:mamlbound}. This is the error arising from using empirical samples instead of the population statistics, that is, the statistical error. 

Before we state the theorems, we summarize the assumptions. \textbf{A1:} Bounded Hessian of the task loss, $\norm{\Qg}\leq\beta$. \textbf{A2:} Bounded parameter, feature, and error space, $\norm{\Thetag-\drsol}\leq\tau$, $\norm{\Thetag-\mamlsol}\leq\tau'$ and $\norm{\Thetag}$, $\norm{\xx_{\gamma,i}}$, and $\abs{\epsilon_{\gamma,i}}$ are finite.  \textbf{A3:} The distribution of $\xx_{\gamma,i}$ conditional on $\gamma$ is \textbf{sub-Gaussian} with parameter $K$.
In this setting, the following theorems characterize the statistical error for meta linear-regression.

\begin{theorem}\label{thm:drbound}
Suppose that with probability $1$, \textbf{A1-3} holds. Let $\omega$ be logarithmic in $\delta^{-1}$, $M$, and $p$, and define the functions
\begin{align*}
    & c_1(\Delta,r,s,\theta)=\norm{\theta}\sqrt{2r\Delta}+\sqrt{2s\Delta}, \quad c_2(\Delta)=\beta CK^2\sqrt{p+\Delta}, \quad c_3(\Delta)=\sqrt{\trace(\Ee_\gamma[\sigmag\Qg])\Delta},
\end{align*}
where $C$ is a constant.
If $\lambda_{min}(\Ee_\gamma[\Qg])-\tilde{o}(1)>0$, with probability at least $1-\delta$, ignoring higher order terms, $\norm{\drest-\drsol}$ is bounded above by
\footnote{Here we abuse notation somewhat by defining the variance of a matrix $\BB$ to be $\var(\BB)=\Ee(\BB-\Ee(\BB)^2)$.}
\begin{equation*}
\begin{aligned}
    (\lambda_{min}(\Ee_\gamma[\Qg])-\tilde{o}(1))^{-1}\left(\dfrac{c_1(\omega,\norm{\var_\gamma[\Qg]},\trace(\var_\gamma[\Qg\Thetag]),\drsol)}{\sqrt{M}}+\dfrac{\tau c_2(\omega)/\sqrt{2}+c_3(\omega)}{\sqrt{N}}\right)
\end{aligned}
\end{equation*}
\end{theorem}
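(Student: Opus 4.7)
The plan is to express $\drest$ and $\drsol$ through the normal equations and bound their difference via a perturbation argument. Setting $\hat{\AA} = \tfrac{1}{2NM}\sum_{j,i}\xgi\xgi\trans$ and $\hat{\bb} = \tfrac{1}{2NM}\sum_{j,i}\ygi\xgi$, we have $\drest = \hat{\AA}^{-1}\hat{\bb}$, while $\drsol = \AA^{-1}\bb$ with $\AA = \Ee_\gamma[\Qg]$ and $\bb = \Ee_\gamma[\Qg\btheta_\gamma]$. Exploiting $\AA\drsol = \bb$, I rewrite
\begin{equation*}
    \drest - \drsol \;=\; \hat{\AA}^{-1}\big(\hat{\bb} - \hat{\AA}\drsol\big),
\end{equation*}
which reduces the problem to (i) lower-bounding $\lambda_{\min}(\hat{\AA})$ and (ii) upper-bounding $\|\hat{\bb} - \hat{\AA}\drsol\|$.

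For step (i), Weyl's inequality gives $\lambda_{\min}(\hat{\AA}) \geq \lambda_{\min}(\AA) - \|\hat{\AA} - \AA\|$. I split $\hat{\AA} - \AA = \tfrac{1}{M}\sum_j(\hat{\QQ}_j - \QQ_{\gamma_j}) + \big(\tfrac{1}{M}\sum_j\QQ_{\gamma_j} - \AA\big)$, where $\hat{\QQ}_j = \tfrac{1}{2N}\sum_i \xgi\xgi\trans$. Under sub-Gaussianity (A3) and $\|\Qg\|\leq\beta$ (A1), a Vershynin-type sub-Gaussian covariance estimation bound controls each within-task piece at order $\beta K^2\sqrt{(p+\omega)/N}$, while a matrix Bernstein inequality controls the across-task piece at order $\sqrt{\|\var_\gamma[\Qg]\|\omega/M}$. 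Both deviations are $\tilde{o}(1)$ under the stated regime, producing the $\lambda_{\min}(\Ee_\gamma[\Qg])-\tilde{o}(1)$ denominator.

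For step (ii), substitute $\ygi = \btheta_{\gamma_j}\trans\xgi + \epsilon_{j,i}$ and use the population optimality $\Ee_\gamma[\Qg(\btheta_\gamma - \drsol)] = 0$ to decompose
\begin{equation*}
    \hat{\bb} - \hat{\AA}\drsol \;=\; \underbrace{\tfrac1M\sum_j(\hat{\QQ}_j - \QQ_{\gamma_j})(\btheta_{\gamma_j} - \drsol)}_{T_1} + \underbrace{\tfrac1M\sum_j\QQ_{\gamma_j}(\btheta_{\gamma_j} - \drsol)}_{T_2} + \underbrace{\tfrac{1}{2NM}\sum_{j,i}\epsilon_{j,i}\xgi}_{T_3}.
\end{equation*}
Further split $T_2 = \big(\tfrac1M\sum_j\QQ_{\gamma_j}\btheta_{\gamma_j} - \Ee_\gamma[\Qg\btheta_\gamma]\big) - \big(\tfrac1M\sum_j\QQ_{\gamma_j} - \AA\big)\drsol$; a vector Bernstein bound on the first difference and a matrix Bernstein bound on the second (pulling out $\|\drsol\|$) yield the $c_1(\omega,\cdot,\cdot,\drsol)/\sqrt{M}$ term. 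Bound $T_1$ using the sub-Gaussian covariance estimate on $\hat{\QQ}_j - \QQ_{\gamma_j}$ (with $2N$ samples, producing the $\sqrt{2}$ in the denominator) together with $\|\btheta_{\gamma_j} - \drsol\|\leq\tau$ from A2, giving the $\tau c_2(\omega)/\sqrt{2N}$ term. For $T_3$, apply Hanson--Wright or a vector Bernstein bound per task and combine across tasks (conservatively retaining only the per-task $1/\sqrt{N}$ rate), giving the $c_3(\omega)/\sqrt{N}$ term.

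Combining via a union bound over the three concentration events, dividing by the lower bound on $\lambda_{\min}(\hat{\AA})$, and collecting all sub-leading cross-rate products into the $\tilde{o}(1)$ remainder yields the claimed inequality. The main obstacle is isolating the within-task randomness from the across-task randomness so that the dimension-dependent $\sqrt{(p+\omega)/N}$ rate from sub-Gaussian covariance concentration multiplies only the $\tau$-bounded factor in $T_1$ (keeping $p$-dependence confined to the $1/\sqrt{N}$ piece) instead of coupling with the $1/\sqrt{M}$ task-level contributions — this is what makes the decomposition of $T_2$ around $\Ee_\gamma[\Qg\btheta_\gamma]$ and $\AA\drsol$ (rather than a naive Bernstein on $\QQ_{\gamma_j}(\btheta_{\gamma_j}-\drsol)$) essential for matching the stated form of $c_1$.
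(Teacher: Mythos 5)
Your proposal is correct and follows essentially the same route as the paper's proof: the same decomposition of the numerator into a within-task covariance-error term (bounded by sub-Gaussian covariance estimation with the $\tau$ factor), the two task-level Bernstein terms giving $c_1/\sqrt{M}$, and the noise term giving $c_3/\sqrt{N}$, combined with the same eigenvalue lower bound and union bound. The only cosmetic difference is that you write $\drest=\hat{\AA}^{-1}\hat{\bb}$ via the normal equations while the paper works with the Moore--Penrose pseudoinverse (adding an indicator term for the singular case), which is immaterial on the high-probability event where $\lambda_{\min}(\hat{\AA})>0$.
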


\begin{theorem}\label{thm:mamlbound}
With the same assumptions as Theorem~\ref{thm:drbound}, let $\SSS_\gamma(\alpha)=(\II-\alpha\Qg)\Qg(\II-\alpha\Qg)$.
If $\lambda_{min}(\Ee_\gamma[\SSS_\gamma(\alpha)])-\tilde{o}(1)>0$, with probability at least $1-\delta$, ignoring higher order terms, $\norm{\mamlest-\mamlsol}$ is bounded above by
\begin{align*}
    &(\lambda_{min}(\Ee_\gamma[\SSS_\gamma(\alpha)])-\tilde{o}(1))^{-1}\bigg(\dfrac{c_1(\omega,\norm{\var_\gamma[\SSS_\gamma(\alpha)]},\trace(\var_\gamma[\SSS_\gamma(\alpha)\Thetag]),\mamlsol)}{\sqrt{M}} \\
    &\hspace{4.5cm}+\dfrac{(1+3\alpha\beta)^2\tau'c_2(\omega)+\sqrt{2}(1+\alpha\beta)^2 c_3(\omega)}{\sqrt{N}}\bigg).
\end{align*}
\end{theorem}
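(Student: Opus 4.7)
The plan is to mirror the proof of Theorem~\ref{thm:drbound}, propagating the extra factors introduced by MAML's inner adaptation step. First, setting the gradient of~\eqref{eq:mamlest} to zero yields the normal equation $\hat{\mathbf{S}}\,\mamlest=\hat{\mathbf{u}}$. Writing $\hat{\QQ}_j^{(1)}, \hat{\QQ}_j^{(2)}$ for the empirical covariances on the first and second $N$ samples of task $j$, $\AA_j=\II-\alpha\hat{\QQ}_j^{(1)}$, and $\hat{\SSS}_j(\alpha)=\AA_j\trans\hat{\QQ}_j^{(2)}\AA_j$, we have $\hat{\mathbf{S}}=\tfrac{1}{M}\sum_j\hat{\SSS}_j(\alpha)$; after substituting $\ygi=\btheta_{\gamma_j}\trans\xgi+\epsilon_{j,i}$, $\hat{\mathbf{u}}$ splits into a ``signal'' part $\tfrac{1}{M}\sum_j\hat{\SSS}_j(\alpha)\btheta_{\gamma_j}$ and a ``noise'' part that is a linear combination of $\AA_j\trans\xgi\epsilon_{j,i}$. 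Independence of the two halves will be crucial throughout. Combining this with the defining equation $\Ee_\gamma[\SSS_\gamma(\alpha)]\mamlsol=\Ee_\gamma[\SSS_\gamma(\alpha)\Thetag]$ and the resulting identity $\Ee_\gamma[\SSS_\gamma(\alpha)(\Thetag-\mamlsol)]=\bZero$, I obtain the decomposition
\begin{equation*}
\hat{\mathbf{S}}(\mamlest-\mamlsol)=\underbrace{\tfrac{1}{M}\sum_j(\hat{\SSS}_j(\alpha)-\SSS_{\gamma_j}(\alpha))(\btheta_{\gamma_j}-\mamlsol)}_{\text{within-task (I)}}+\underbrace{\tfrac{1}{M}\sum_j\SSS_{\gamma_j}(\alpha)(\btheta_{\gamma_j}-\mamlsol)}_{\text{across-task (II)}}+\underbrace{(\text{noise})}_{\text{(III)}},
\end{equation*}
where (II) is centered by the identity above.

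Piece (II) is a sum of $M$ i.i.d.\ centered vectors whose variance is controlled by $\norm{\var_\gamma[\SSS_\gamma(\alpha)]}\norm{\mamlsol}^2$ and $\trace(\var_\gamma[\SSS_\gamma(\alpha)\Thetag])$; a vector Bernstein inequality (exactly as in the DRS proof) produces the $c_1(\omega,\cdot,\cdot,\mamlsol)/\sqrt{M}$ contribution. Piece (III), after factoring out $\AA_j\trans$, reduces to the scalar noise quantity already handled in Theorem~\ref{thm:drbound}, multiplied by $\norm{\AA_j}^2\le(1+\alpha\beta)^2$, producing the $\sqrt{2}(1+\alpha\beta)^2c_3(\omega)/\sqrt{N}$ term. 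For piece (I), writing $\hat{\QQ}_j^{(k)}=\Qg+\EE_j^{(k)}$ and expanding $\hat{\SSS}_j(\alpha)-\SSS_{\gamma_j}(\alpha)$ yields three first-order terms (each linear in a single $\EE_j^{(k)}$, sandwiched between factors of $\II-\alpha\Qg$ and $\Qg$) plus higher-order cross terms that are negligible because independence of the two halves makes cross-products $\EE_j^{(1)}\EE_j^{(2)}$ genuinely higher order. Each first-order piece is bounded in operator norm by $c_2(\omega)/\sqrt{N}$ via the same sub-Gaussian covariance concentration used to establish $c_2$ in Theorem~\ref{thm:drbound}; aggregating with prefactors $\norm{\II-\alpha\Qg}\le 1+\alpha\beta$ and using $\norm{\btheta_{\gamma_j}-\mamlsol}\le\tau'$ produces the $(1+3\alpha\beta)^2\tau' c_2(\omega)/\sqrt{N}$ contribution.

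Finally, a standard Weyl-type perturbation argument on $\hat{\mathbf{S}}-\Ee_\gamma[\SSS_\gamma(\alpha)]$---whose operator norm is $\tilde o(1)$ by the very concentrations just used---gives $\norm{\hat{\mathbf{S}}^{-1}}\le(\lambda_{\min}(\Ee_\gamma[\SSS_\gamma(\alpha)])-\tilde o(1))^{-1}$. Multiplying through by this inverse and union-bounding over the $O(1)$ high-probability events yields the claim. The main obstacle will be the cubic algebra in step (I): carefully tracking that the cross-terms between first- and second-half empirical covariances collapse, using independence, into the clean $(1+3\alpha\beta)^2$ prefactor advertised by the theorem, while the remaining higher-order $O(1/N)$ contributions are absorbed by the ``ignoring higher-order terms'' clause.
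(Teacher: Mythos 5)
Your proposal is correct and follows essentially the same route as the paper's proof: the same three-way split of the numerator (centered across-task matrix-Bernstein terms giving the $c_1/\sqrt{M}$ contribution, within-task covariance deviations giving the $(1+3\alpha\beta)$-type $\tau' c_2/\sqrt{N}$ term with higher-order cross terms discarded, and the $\XX\ee$ noise giving $\sqrt{2}(1+\alpha\beta)^2 c_3/\sqrt{N}$), combined with a concentration-based lower bound on the minimum eigenvalue of the empirical design and a union bound over the same events. The only cosmetic differences are that the paper inverts via the Moore--Penrose pseudoinverse/SVD of $\WW(\alpha)$ (with an explicit indicator for rank deficiency) rather than your normal-equations-plus-Weyl argument, and that the $(1+\alpha\beta)^2$ on the noise arises from summing the $\AA_j\trans\XX_{j2}\ee_{j2}$ and $\alpha\AA_j\trans(\XX_{j2}\XX_{j2}\trans/N)\XX_{j1}\ee_{j1}$ contributions rather than from $\norm{\AA_j}^2$ on a single term, but the constant and mechanism coincide.
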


Theorems \ref{thm:drbound}\&\ref{thm:mamlbound} show the statistical errors for MAML and DRS scale similarly in terms of rates with respect to $N$ and $M$. However, the constants are significantly different. Compare the coefficients of $\nicefrac{1}{\sqrt{N}}$.
The coefficient of $c_2(\omega)$ for DRS is $\tau\sqrt{2}^{-1}\lambda_{min}(\Ee_\gamma[\Qg])^{-1}$ and for MAML it is $\tau'(1+3\alpha\beta)^2\lambda_{min}(\Ee_\gamma[\SSS_\gamma(\alpha)])^{-1}$.
When $\alpha\beta\lessapprox 1$, the latter is larger than the former, since we expect that $\tau \approx \tau'$ and the eigenvalues of $\Ee_\gamma[\SSS_\gamma(\alpha)]$ to be shrunken compared to those of $\Ee_\gamma[\Qg]$.
A similar observation holds for the coefficient of $c_3(\omega)$. In other words, the convergence behavior of the MAML estimate has a worse dependence on $N$, indicating that the statistical error for DRS has more favorable behavior.

\subsection{Modeling Error}

Modeling error affects the meta-testing performance of the globally optimal solutions. We expect MAML to perform better as it directly models the meta-learning problem whereas DRS does not. Modeling error by definition depends on the correct model of the world and thus is difficult to characterize without domain knowledge. We study this error in the following theorem assuming that the distribution of tasks is the same for meta-training and meta-testing and discuss its implications for practitioners.

\begin{theorem}\label{thm:loss}
For meta-test task $\gamma$ and arbitrary $\btheta$, let $\tilde{\btheta}_{\gamma}(\alpha)$ be the parameters optimized by one step of SGD using $N$ data points $\oO_\gamma$ and learning rate $\alpha$, as in~\eqref{eq:mamlest}.
Let $\Ag=\SSS_\gamma(\alpha)+\alpha^2(\Ee[\xx_{\gamma,i}\xx_{\gamma,i}\trans\Qg \xx_{\gamma,i}\xx_{\gamma,i}\trans]-\Qg^3)/N$, where $\SSS_\gamma(\alpha)$ is defined in Theorem~\ref{thm:mamlbound}.
The expected losses before and after optimization, as functions of $\btheta$, are
\begin{align*}
    \rR^{drs}(\btheta) &\equiv \Ee_\gamma\left[\norm{\btheta-\Thetag}_{\Qg}^2\right] \quad \text{and} \quad \Ee_\gamma[\Ee_{\oO_\gamma}[\rR(\tilde{\btheta}_{\gamma}(\alpha);\gamma)]] \equiv \Ee_\gamma\left[\norm{\btheta-\Thetag}_{\Ag}^2\right],
\end{align*}
where we have ignored constants and terms that do not include $\btheta$.
The former is minimized by $\drsol$.
As $N\to\infty$, the latter approaches $\rR^{maml}(\btheta;\alpha)$, is minimized by $\mamlsol$ and, for $0<\alpha\leq 1/\beta$, is at most $\rR^{drs}(\drsol)$, the minimum expected loss possible before meta-testing optimization.
\end{theorem}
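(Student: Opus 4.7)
The plan is to reduce the four claims to algebraic manipulations on the quadratic form of the task risk $\rR(\btheta;\gamma)$ and then an operator inequality between $\SSS_\gamma(\alpha)$ and $\Qg$.

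First, I would complete the square in~\eqref{eq:lossfunc} to write $\rR(\btheta;\gamma) = \tfrac12\norm{\btheta-\Thetag}_{\Qg}^2 + \tfrac12\sigmag$. Taking expectation over $\gamma$ immediately yields the stated form of $\rR^{drs}(\btheta)$, and setting the gradient $2\Ee_\gamma[\Qg]\btheta - 2\Ee_\gamma[\Qg\Thetag]$ to zero reproduces the minimizer $\drsol$ from~\eqref{eq:drmamlsols}.

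Next I would expand the inner-update residual. Writing $\hat{\QQ}_\gamma=\tfrac1N\sum_i\xgi\xgi\trans$ and $\hat{\bb}_\gamma=\tfrac1N\sum_i\xgi\epsilon_{\gamma,i}$, the definition in~\eqref{eq:mamlest} together with the data model~\eqref{model:lr} gives $\tilde{\btheta}_\gamma(\alpha)-\Thetag=(\II-\alpha\hat{\QQ}_\gamma)(\btheta-\Thetag)+\alpha\hat{\bb}_\gamma$. Plugging into the quadratic form and taking $\Ee_{\oO_\gamma}$, the cross term vanishes because $\Ee[\hat{\bb}_\gamma\mid\{\xgi\}]=0$ by independence of $\epsilon_{\gamma,i}$ and $\xgi$, and the $\alpha^2\Ee[\hat{\bb}_\gamma\trans\Qg\hat{\bb}_\gamma]$ piece is $\btheta$-free and can be absorbed. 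What remains is $(\btheta-\Thetag)\trans\Ee[(\II-\alpha\hat{\QQ}_\gamma)\Qg(\II-\alpha\hat{\QQ}_\gamma)](\btheta-\Thetag)$. The main computational step is evaluating $\Ee[\hat{\QQ}_\gamma\Qg\hat{\QQ}_\gamma]$: splitting the double sum into the $i=j$ piece (giving $\tfrac1N\Ee[\xx_{\gamma,i}\xx_{\gamma,i}\trans\Qg\xx_{\gamma,i}\xx_{\gamma,i}\trans]$) and the $i\neq j$ piece (giving $\tfrac{N-1}{N}\Qg^3$ by independence across samples) shows that $\Ee[(\II-\alpha\hat{\QQ}_\gamma)\Qg(\II-\alpha\hat{\QQ}_\gamma)]=\SSS_\gamma(\alpha)+\tfrac{\alpha^2}{N}(\Ee[\xx_{\gamma,i}\xx_{\gamma,i}\trans\Qg\xx_{\gamma,i}\xx_{\gamma,i}\trans]-\Qg^3)=\Ag$. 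Taking expectation over $\gamma$ produces the claimed $\Ee_\gamma[\norm{\btheta-\Thetag}_{\Ag}^2]$ form.

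For the $N\to\infty$ limit, the $\tfrac{\alpha^2}{N}$ correction vanishes, so $\Ag\to\SSS_\gamma(\alpha)$; a direct substitution of $\btheta-\alpha\nabla_\btheta\rR(\btheta;\gamma)=\btheta-\alpha\Qg(\btheta-\Thetag)$ into $\rR(\cdot;\gamma)$ shows that $\rR^{maml}(\btheta;\alpha)=\tfrac12\Ee_\gamma[\norm{\btheta-\Thetag}_{\SSS_\gamma(\alpha)}^2]+\tfrac12\Ee_\gamma[\sigmag]$, matching the limit and confirming that its minimizer is the $\mamlsol$ of~\eqref{eq:drmamlsols}.

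The last claim is the one requiring a little care. Since $\mamlsol$ is the minimizer of $\rR^{maml}(\cdot;\alpha)$, it suffices to show $\rR^{maml}(\drsol;\alpha)\leq\rR^{drs}(\drsol)$, for which the constant $\tfrac12\Ee_\gamma[\sigmag]$ cancels on both sides and it is enough to establish the matrix inequality $(\II-\alpha\Qg)\Qg(\II-\alpha\Qg)\preceq\Qg$ almost surely in $\gamma$. Diagonalizing $\Qg$ this amounts to $(1-\alpha\lambda)^2\lambda\leq\lambda$ for every eigenvalue $\lambda$ of $\Qg$; by \textbf{A1}, $\lambda\in[0,\beta]$ and $\alpha\in(0,1/\beta]$ yields $\alpha\lambda\in[0,1]$, so $(1-\alpha\lambda)^2\leq 1$, giving the desired positive-semidefinite inequality. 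Sandwiching with $(\drsol-\Thetag)$ and taking $\Ee_\gamma$ finishes the proof. The only nontrivial step is the $\Ee[\hat{\QQ}_\gamma\Qg\hat{\QQ}_\gamma]$ calculation above; everything else is quadratic-form bookkeeping.
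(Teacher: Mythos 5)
Your proposal is correct. The first three claims are handled essentially as in the paper: the derivation of $\rR^{drs}$, the expansion $\tilde{\btheta}_\gamma(\alpha)-\Thetag=(\II-\alpha\hat{\QQ}_\gamma)(\btheta-\Thetag)+\alpha\hat{\bb}_\gamma$ with the vanishing cross term and $\btheta$-free noise term, and the key computation $\Ee[\hat{\QQ}_\gamma\Qg\hat{\QQ}_\gamma]=\Qg^3+\tfrac1N\big(\Ee[\xx_{\gamma,i}\xx_{\gamma,i}\trans\Qg\xx_{\gamma,i}\xx_{\gamma,i}\trans]-\Qg^3\big)$ via the $i=j$ / $i\neq j$ split are exactly the paper's steps (the paper merely orders the conditioning as $\Ee_{\YY_\gamma}$ then $\Ee_{\XX_\gamma}$). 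Where you genuinely diverge is the last claim. The paper evaluates the minimized post-adaptation loss at $\mamlsol$ as an explicit function of $\alpha$, notes it equals $\rR^{drs}(\drsol)$ at $\alpha=0$, and then differentiates in $\alpha$ (matrix calculus), showing the derivative is $-\Ee_\gamma\big[(\Thetag-\mamlsol)\trans\Qg(\II-\alpha\Qg)\Qg(\Thetag-\mamlsol)\big]\leq 0$ on $[0,1/\beta]$; this yields the stronger fact that the optimal post-adaptation loss is nonincreasing in $\alpha$ on that interval. Your route is a simple suboptimality comparison: $\rR^{maml}(\mamlsol;\alpha)\leq\rR^{maml}(\drsol;\alpha)\leq\rR^{drs}(\drsol)$, the second inequality following from the pointwise operator inequality $\SSS_\gamma(\alpha)=(\II-\alpha\Qg)\Qg(\II-\alpha\Qg)\preceq\Qg$, which holds almost surely since $\Qg$ and $\II-\alpha\Qg$ commute and $(1-\alpha\lambda)^2\lambda\leq\lambda$ for all eigenvalues $\lambda\in[0,\beta]$ when $\alpha\beta\leq 1$ (and the $\tfrac12\Ee_\gamma[\sigmag]$ constant cancels on both sides, as you note). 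This is more elementary, avoids the envelope-style derivative computation, and makes the mechanism transparent — MAML could at worst reuse the DRS minimizer and already loses nothing because one gradient step contracts the error in the $\Qg$-metric — at the cost of not recovering the paper's monotonicity-in-$\alpha$ statement, which the theorem does not require anyway.
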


Theorem~\ref{thm:loss} shows that the smaller modeling error of meta-learning indeed translates to improved performance. 
Meta-learning can utilize the geometry implied by the distribution of the tasks to reduce the expected loss given the ability to optimize at meta-testing.

%In the asymptotic regime where $M\to\infty$ and $N\to\infty$, by Theorems~\ref{thm:drbound}\&\ref{thm:mamlbound}, $\drest\p\drsol$ and $\mamlest\p\mamlsol$, i.e. the statistical errors of the DRS and MAML estimates are zero. 
%By Theorem~\ref{thm:loss}, the DRS estimate would be superior to the MAML estimate before test time adaptation and the MAML estimate superior to the DRS estimate after test time adaptation, with the MAML estimate after test time adaptation being the best overall. However, when $\beta$ is large, the worse dependence of the statistical error for MAML on $N^{-1/2}$ suggests that for small $N$, the DRS estimate may have smaller expected loss than the MAML estimate before \emph{and} after adaptation.

Combining the results of all theorems, the optimization error is worse for MAML when $\alpha\beta\lessapprox 1$ from Theorem \ref{thm:drbound}\&\ref{thm:mamlbound}. DRS does not model the meta structure, hence it has worse modeling error/greater expected loss by Theorem~\ref{thm:loss}. As expected, there is no clear winner in practice as the choice depends on the trade-off between optimization error and modeling error. Next, we show this empirically.

\subsection{Empirical Results}
We carried out simulations to empirically study the trade-off in the linear regression case. We chose the following specific distribution of tasks and data model:
\begin{align}\label{model:lrsimulation}
    \begin{split}
    y_\gamma &= \Thetag\trans \xx_\gamma+\epsilon_\gamma \quad\quad \epsilon_\gamma \sim \nN(0, \sigmag) \quad\quad \xx_\gamma\sim \nN(0,\Qg) \\
    \Thetag &\sim U([0,2]^p) \quad\quad \sigmag\sim U([0,2]) \quad\quad \Qg=\VV\diag(\Thetag)\VV^T \quad\quad \VV\sim U(\mathds{SO}(p))
    \end{split}
\end{align}

We present the case for $p=1$; larger $p$ lead to similar qualitative results. We compute approximations of $\rR^{drs}(\btheta)$ (expected loss before meta-testing optimization) as a function of $\btheta$ and $\Ee_\gamma[\Ee_{\oO_\gamma}[\rR(\gamma;\tilde{\btheta}_{\gamma}(\alpha))]$ (expected loss after meta-testing optimization) as a function of $\btheta$ and $\alpha$.
For various $M$ and $N$, we generate a collection of datasets $\dD$; for each $\dD$, over a grid of values for $\alpha\in [0,1]$, we compute the corresponding DRS and MAML estimates and, using the aforementioned functions, calculate whether the MAML estimate has lower expected loss than the DRS estimate before and after meta-testing optimization. Figure~\ref{fig:p1} shows contour plots of the fraction of the datasets for which the MAML estimate has lower expected loss before meta-testing optimization (left three figures) and after (right three figures), for several values of $\alpha$.
The axes are the number of tasks ($M$) and the data set size for meta-testing optimization ($N$).

\begin{figure}[tbh]
    \includegraphics[width=\textwidth]{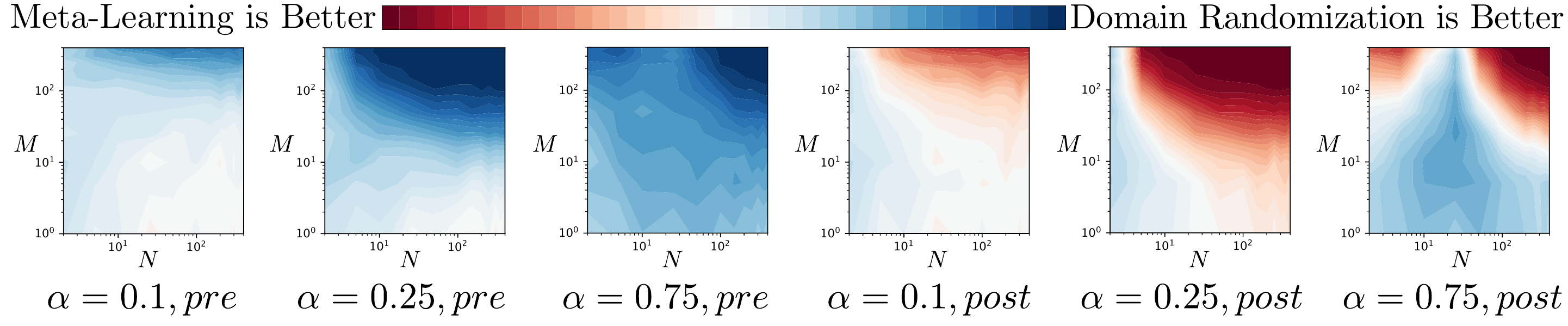}
    \caption{$p=1$: Contour plots of the probability that the MAML estimate has lower expected loss than the DR estimate before meta-testing optimization (pre) and after (post). The axes are the number of tasks ($M$) and the number of data points used for meta-testing optimization ($N$), and $\alpha$ is fixed.}
    \label{fig:p1}
\end{figure}

From Figure~\ref{fig:p1}, for very high $M$ and $N$, the DRS estimate has lower expected loss before meta-testing optimization and the MAML estimate has lower expected loss after meta-testing optimization with very high certainty. This is expected as asymptotically meta-learning is expected to work well by Theorem~\ref{thm:loss}; the practical question of interest is the finite sample case.

From Figure~\ref{fig:p1}-post, we observe that for small $M$ and $N$, the probability that the MAML estimate has lower expected loss after meta-testing optimization can be substantially lower than $0.5$, \ie the DRS estimate is superior.
%In addition, the thresholds of $M$ and $N$ beyond which the MAML estimate has lower expected loss after test time adaptation with high probability are higher for more extreme values of $\alpha$.
We conclude that in this case the increased optimization error from MAML dominates the decreased modeling error. Hence, unless the geometry is strongly skewed, DRS is desirable for smaller datasets in meta-linear regression.

\section{Related Work}\label{sec:related}

Recent work on few-shot image classification has shown that features from learning a deep network classifier on a large training set combined with a simple classifier at meta-testing may outperform many meta-learning algorithms~\citep{simpleshot, chen2020new, tian2020rethinking}; a similar observation has been made for few-shot object detection~\citep{wang2020frustratingly}.
\citet{benchmarkinggeneralization} show that DRS outperforms RL$^2$ \citep{rl2} on simple reinforcement learning environments where tasks correspond to different system dynamics.
Our meta-RL experiments complement these works and theoretical studies partially explain them.
We argue that there is a larger picture to be considered; the trade-off between modeling accuracy and optimization ease depend on characteristics of the dataset, model, and optimization, and should be studied on a case-by-case basis.

Previous theoretical studies of MAML have primarily focused on the meta-training stage.
\citet{onlinemetalearning} prove that the MAML objective is smooth and convex if the task risk functions are smooth and convex and derive the DRS and MAML estimates if those functions are known for linear regression.
\citet{mamlconvergence} characterize the meta-training sample complexity of SGD for MAML in supervised learning; \citet{mamlconvergencerl} provide analogous results for reinforcement learning.
For regression with overparameterized neural networks, \citet{wang2020convergence} show that gradient descent leads to the global optimum of the empirical MAML objective~\eqref{eq:maml_empirical}.
\citet{franceschi2018bilevel} study the meta-learning problem~\eqref{eq:maml_empirical} when $\texttt{OPT}(\gamma, \bm{\btheta})$ is a minimization operator instead of an iterative optimization procedure, proposing an algorithm with convergence guarantees.
\citet{implicitmaml} propose implicit MAML with analysis of its training sample complexity, providing an alternative method to estimate the gradient of the MAML objective; \citet{grazzi2020iteration} compare the quality of the estimate to that of the original MAML algorithm.

As a counterpart to \citet{mamlconvergence} and \citet{mamlconvergencerl}, \citet{wang2020optimality} provides bounds on the meta-testing performance of an $\epsilon$-stationary point of the MAML objective, assuming that the same set of tasks is used for meta-training and meta-testing.
In contrast to these previous works, we analyze the meta-testing performance and overall optimization behavior of MAML when the meta-training and meta-testing tasks are not identical, merely drawn from the same distribution, and compare them to those of DRS.

\section{Conclusion}\label{sec:discussion}
This paper introduces an important trade-off in meta-learning, that of accurately modeling the meta-learning problem and complexity of the optimization problem.
Classic meta-learning algorithms account for the structure of the problem space but define complex optimization objectives.
Domain randomized search (DRS) does not account for the structure of the meta-learning problem and solves a single level optimization objective.

Taking MAML to be the representative meta-learning algorithm, we study this trade-off empirically and theoretically.
On meta-reinforcement learning benchmarks, the optimization complexity appears to be more important; DRS is competitive with and often outperforms MAML, especially for fewer environment steps.
Through an analysis of the sample complexity for smooth nonconvex risk functions, we show that DRS and MAML both solve the meta-learning problem and delineate the roles of optimization complexity and modeling accuracy.
For meta-linear regression, we prove theoretically and verify in simulations that while MAML can utilize the geometry of the distribution of task losses to improve performance through meta-testing optimization, this modeling gain can be counterbalanced by its greater optimization error for small sample sizes.
All three studies show that the balance of the trade-off is not only determined by the sample sizes but characteristics of the meta-learning problem, such as the smoothness of the task risk functions.
%Lastly, we prove that DRS and MAML both solve the meta-learning problem and illustrate the trade-off through a novel analysis of the optimization behavior of both algorithms during meta-training and meta-testing.

There are several interesting directions for future work.
What is the trade-off exhibited by other algorithms, such as Reptile~\citep{reptile} or ANIL~\citep{anil}, that were designed to be more computationally efficient than MAML?
Our theory may also be extended to more complex scenarios such as supervised learning with deep networks, the Linear Quadratic Regulator, more than one inner optimization step in MAML, or the use of part of the data to select hyperparameters.

\clearpage

\section*{Broader Impact}

The massive progress made by machine learning in artificial intelligence has been partly driven by massive amounts of data and compute~\citep{aicompute}.
By sharing inductive biases across tasks, meta-learning aims to speed up learning on novel tasks, thereby reducing their data and computational burden.
In this paper, we have argued that the data and computational cost of the meta-training procedure matters in addition to that of the meta-test procedure. 
We have shown that domain randomized search, a computationally cheaper approach compared to classic meta-learning methods such as MAML, solves the meta-learning problem and is competitive with MAML when the budget of meta-training data/compute is small. 
Thus, it can be an effective meta-learning approach in practice when obtaining data is expensive and/or one would like to reduce the carbon footprint of meta-training, with some cost in performance at meta-test.

\begin{ack}

We are very grateful to Vladlen Koltun for providing guidance during the project and to Charles Packer for help in setting up the RL experiments.
We would like to thank the other members of the Intelligent Systems Lab at Intel and Amir Zamir for feedback on the first draft of this paper, and Sona Jeswani for assistance in running an early version of the RL experiments. There are no external funding or conflicts of interest to disclose.

\end{ack}

\bibliography{refs}

\newpage
\appendix

\section{Details on meta-RL experiments}\label{sec:metarl_details}
\subsection{Setup}

\paragraph{Environments}
We consider four robotic locomotion and four manipulation environments, all with continuous action spaces.
The robotic locomotion environments, based on MuJoCo~\citep{mujoco} and OpenAI Gym~\citep{gym}, fall into two categories.
\begin{itemize}
    \item Varying reward functions: HalfCheetahRandVel, Walker2DRandVel
    
    The HalfCheetahRandVel environment was introduced in~\citet{maml}.
    The distribution of tasks is a distribution of HalfCheetah robots with different goal velocities, and remains the same for meta-training and meta-testing.
    The Walker2DRandVel environment, defined similarly to HalfCheetahRandVel, is found in the codebase for~\citet{promp}.
    
    \item Varying system dynamics: HopperRandParams, Walker2DRandParams
    
    The HopperRandParams and Walker2DRandParams were introduced in~\citet{promp}.
    For HopperRandParams, the distribution of tasks is a distribution of Hopper robots with different body mass, body inertia, damping, and friction, and remains the same for meta-training and meta-testing.
    The Walker2DRandParams environment is defined similarly.
\end{itemize}

We briefly describe the four manipulation environments from Metaworld; for more details please refer to~\citet{metaworld}.
\begin{itemize}
    \item ML1-Push and ML1-Reach: ML1-Push considers the manipulation task of pushing a puck to a goal position. The distribution of tasks is a collection of initial puck and goal positions, and differs for meta-training and meta-testing. ML1-Reach is defined similarly to ML1-Push but with the manipulation task of reaching a goal position.
    \item ML10 and ML45: For both environments, the meta-training and meta-testing distributions of tasks are collections of manipulation tasks and the corresponding initial object/goal positions. The manipulation tasks in the meta-training versus meta-testing distributions do not overlap; there are $10$ manipulation tasks in the training distribution for ML10, and $45$ for ML45.
\end{itemize}

\paragraph{Algorithms}
We consider four policy gradient algorithms, ProMP~\citep{promp}, which approximately combines MAML and PPO~\citep{ppo}, DRS+PPO, a combination of DRS and PPO, TRPO-MAML~\citep{maml}, and DRS+TRPO, a combination of DRS and TRPO~\citep{trpo}.
For full descriptions of the ProMP and TRPO-MAML algorithms, please refer to the cited papers. We use the implementations in the codebase provided by~\citet{promp}.
To combine DRS and PPO/TRPO, it suffices to take the original PPO/TRPO algorithm and maximize the objective using generated trajectories from a sampled set of tasks instead of a single task.
This follows from the fact that we can approximate an expectation over a distribution of tasks by a Monte Carlo sample of tasks.

\paragraph{Meta-training}
ProMP and TRPO-MAML use the same meta-training procedure~\citep{promp, maml}.
At each iteration, a set of $M$ tasks are sampled from the meta-training distribution of tasks.
For each task, ProMP (TRPO-MAML) generate $L$ episodes under the current policy, computes an adapted policy using policy gradient, and generates $L$ episodes under the adapted policy; all $M\times L$ episodes generated under adapted policies are used to compute its objective.
For each task, DRS+PPO (DRS+TRPO) generate $L$ episodes under the current policy; all $M\times L$ episodes are used to compute its objective.

Each iteration of ProMP (TRPO-MAML) requires twice as many steps from the simulator as DRS+PPO (DRS+TRPO).
Therefore, to ensure that each algorithm utilizes the same amount of data, we run ProMP (TRPO-MAML) for half as many iterations as DRS+PPO (DRS+TRPO).
More specifically, for the robotic locomotion environments, we run ProMP (TRPO-MAML) for $1000$ iterations and DRS+PPO (DRS+TRPO) for $2000$.
For the manipulation environments, we run ProMP (TRPO-MAML) for $10000$ iterations and DRS+PPO (DRS+TRPO) for $20000$.
These go beyond the number of training steps used in~\citet{promp} and~\citet{metaworld}.

\paragraph{Meta-testing}
The meta-testing procedure, described next, are carried out at $21$ checkpoints during meta-training.
We sample $1000$ tasks from the meta-testing distribution of tasks.
For each task and ProMP/DRS+PPO (TRPO-MAML/DRS+TRPO), starting at a meta-trained policy, we repeat the following five times: 1) generate $L$ episodes from the current policy and 2) update the policy with the same policy gradient algorithms used to compute the adapted policies while training ProMP (TRPO-MAML).
We compute the average episodic reward after $t$ policy updates, for $t=0,1,\dots,5$.
These statistics are then compiled over all $1000$ sampled tasks, using the procedure outlined below.

For ProMP and TRPO-MAML, the learning rate used to update the policy is the inner learning rate used to compute the adapted policies during meta-training. For DRS+PPO, it is the learning rate during meta-training (our heuristic), and for DRS+TRPO, it is zero.

\paragraph{Hyperparameters}
To choose the meta-training learning rates, step sizes, and the inner learning rate used to compute the adapted policies, we conduct grid search. 
For the robotic locomotion environments, the learning rate for ProMP and DRS+PPO was chosen from $[0.0001, 0.001, 0.01]$, the step size for TRPO-MAML and DRS+TRPO from $[0.001, 0.01, 0.1]$, and the inner learning rate for ProMP and TRPO-MAML from $[0.01, 0.05, 0.1]$. 
For the manipulation environments, the learning rate for ProMP and DRS+PPO was chosen from $[0.000001, 0.00001, 0.0001, 0.001]$, the step size for TRPO-MAML and DRS+TRPO from $[0.001, 0.01, 0.1]$, and the inner learning rate for ProMP and TRPO-MAML from $[0.00001, 0.0001, 0.001, 0.01]$.
These ranges include the values given in the codebases for~\citet{promp} and~\citet{metaworld}.
The chosen values are given in Table~\ref{tab:lrs}.
%For TRPO-MAML, we use the inner learning rates and step sizes given by~\citet{promp} and personal communication with the authors of~\citet{metaworld}.
%Specifically, for all environments, the step size is set to $0.01$ and the inner learning rate is $0.01$.
%The step size for DRS+TRPO is $0.01$ for all environments.

\begin{table}[ht]
    \centering
    \ra{1.3}
    \begin{tabular}{@{}ccccccc@{}}
    \toprule
    Environment & \multicolumn{2}{c}{ProMP} & DRS+PPO & \multicolumn{2}{c}{TRPO-MAML} & DRS+TRPO \\
    \cmidrule{2-3} \cmidrule{4-4} \cmidrule{5-6} \cmidrule{7-7}
    & LR & Inner LR & LR & Step Size & Inner LR & Step Size \\
    \midrule
    HopperRandParams & 0.001 & 0.01 & 0.001 & 0.1 & 0.01 & 0.1 \\
    Walker2DRandParams & 0.001 & 0.01 & 0.001 & 0.1 & 0.01 & 0.1 \\
    HalfCheetahRandVel & 0.0001 & 0.01 & 0.01 & 0.001 & 0.01 & 0.01 \\
    Walker2DRandVel & 0.001 & 0.01 & 0.001 & 0.1 & 0.01 & 0.01 \\
    ML1-Push & 0.0001 & 0.0001 & 0.0001 & 0.1 & 0.0001 & 0.1 \\
    ML1-Reach & 0.0001 & 0.0001 & 0.0001 & 0.001 & 0.00001 & 0.001 \\
    ML10 & 0.0001 & 0.001 & 0.0001 & 0.01 & 0.001 & 0.001 \\
    ML45 & 0.0001 & 0.00001 & 0.0001 & 0.1 & 0.001 & 0.01 \\
    \bottomrule
    \end{tabular}
    \caption{Learning rates (LR), step sizes, and inner learning rates chosen by grid search.}
    \label{tab:lrs}
\end{table}

For the remaining hyperparameters, we used the values given in~\citet{promp} and~\citet{metaworld}.
We list below several of the main ones:
\begin{itemize}
    \item $M$: $40$ for the robotic locomotion environments, $20$ for the manipulation environments
    \item $L$: $20$ for the robotic locomotion environments, $10$ for the manipulation environments
    \item Episode length: $200$ for the robotic locomotion environments, $150$ for the manipulation environments
    \item Policy architecture: a multi-layer perceptron with two hidden layers of $64$ nodes for the robotic locomotion environments, and $100$ nodes for the manipulation environments.
    \item A linear feature baseline is used to compute the advantage values.
\end{itemize}

\paragraph{Result Compilation}
We run meta-training and meta-testing for all environments and algorithms for five random seeds.
For a fixed environment, algorithm, and checkpoint, let the per seed estimates of the average rewards and their variances to be $\hat{R}_s, \hat{V}_s$, $s=1,\dots,5$, and $R$ to be the corresponding random variable with mean $\mu$ and variance $\sigma^2$.
The estimated mean of $R$, $\hat\mu$, is computed as the average of the $\hat{R}_s$.
Using the formula $\var[R]=\Ee[\var[R\mid s]]+\var[\Ee[R\mid s]]$, $\hat{\sigma}^2$, the estimated variance of $R$, is computed as the sum of 1) the average of the $\hat{V}_s$ and 2) the variance of the $\hat{R}_s$.

To compute the probability that DRS is better than MAML, we use the one sided Welch's t-test. Although the t-test makes the underlying assumption of Gaussianity, it is an acceptable assumption as reporting mean and variance is the common practice. Let the estimated average rewards be $\hat\mu_{drs}$ and $\hat\mu_{maml}$ and their estimated standard deviations be $\hat\sigma_{drs}$ and $\hat\sigma_{maml}$. Since we use five random seeds, we compute the t-value and degree of freedom as 
\begin{equation}
    t = \left(\hat\mu_{drs} - \hat\mu_{maml}\right) / \sqrt{\frac{\hat\sigma_{drs}^2}{5} + \frac{\hat\sigma_{maml}^2}{5}} \quad \text{and} \quad \nu= \frac{\left(\hat\sigma_{drs}^2/5 + \hat\sigma_{maml}^2/5\right)^2}{(\hat\sigma_{drs}^2/5)^2/4 + (\hat\sigma_{maml}^2/5)^2/4}
\end{equation}
We compute the probability of $\mu_{drs}>\mu_{maml}$ as one minus the CDF of a t-distribution with $\nu$ degrees of freedom at $t$.

\subsection{Additional Results}

In this section we present results in a way that is more commonly seen in the meta-RL literature.
We plot the average reward after one policy update at meta-testing as a function of the number of steps at meta-training.
Figures~\ref{fig:pporeward} and~\ref{fig:trporeward} shows the plots for each environment for ProMP \& DRS+PPO and TRPO-MAML \& DRS+TRPO, respectively.

For the first two environments with variations in system dynamics only, seen in Figure~\ref{fig:pporeward}-(a,b) and~\ref{fig:trporeward}-(a,b), DRS is superior to MAML throughout training.
For the next four environments with variations in reward functions only, either 1) DRS and MAML are comparable (Figure~\ref{fig:pporeward}-(d) and~\ref{fig:trporeward}-(e,f)), or 2) while DRS is initially superior, as the amount of training data increases the difference between the two algorithms usually diminishes, and eventually MAML may surpass DRS  (Figure~\ref{fig:pporeward}-(c,e,f) and~\ref{fig:trporeward}-(c)).
In the final two environments with variations in system dynamics and reward functions, the standard errors are generally too large to make a definite statement (see Figure~\ref{fig:pporeward}-(g,h) and~\ref{fig:trporeward}-(h)). 
This suggests that useful inductive biases are more difficult to learn when the system dynamics vary between tasks, a potentially interesting direction for further study.

\begin{figure}
    \centering
    \begin{subfigure}[b]{0.24\textwidth}
        \centering
        \includegraphics[width=\textwidth]{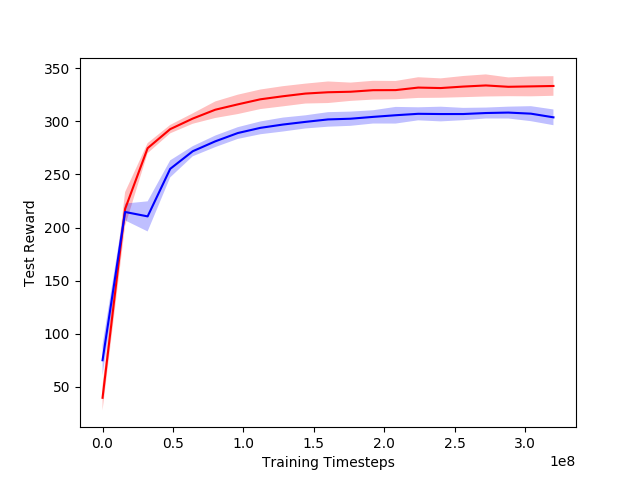}
        \caption{HopperRandParams}
        \label{fig:ppohopperparamsreward}
    \end{subfigure}
    \hfill
    \begin{subfigure}[b]{0.24\textwidth}
        \centering
        \includegraphics[width=\textwidth]{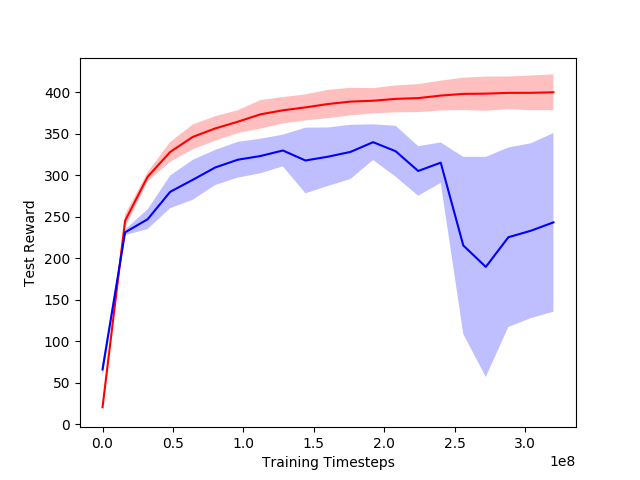}
        \caption{Walker2DRandParams}
        \label{fig:ppowalkerparamsreward}
    \end{subfigure}
    \hfill
    \begin{subfigure}[b]{0.24\textwidth}
        \centering
        \includegraphics[width=\textwidth]{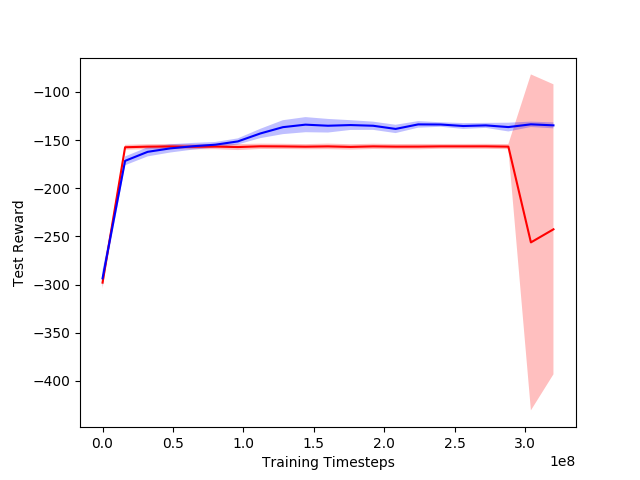}
        \caption{HalfCheetahRandVel}
        \label{fig:ppocheetahgoalreward}
    \end{subfigure}
    \hfill
    \begin{subfigure}[b]{0.24\textwidth}
        \centering
        \includegraphics[width=\textwidth]{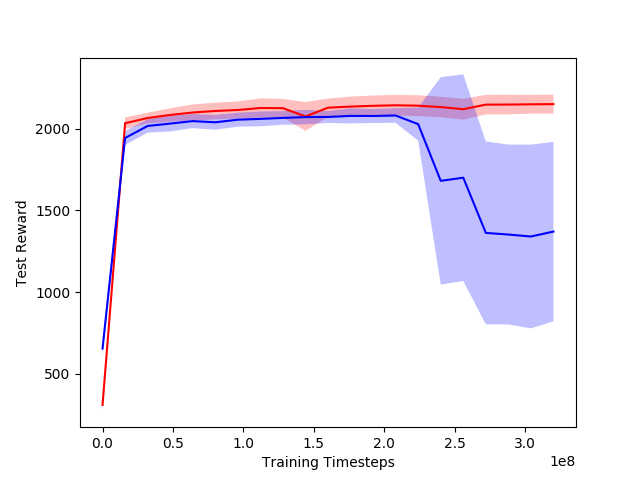}
        \caption{Walker2DRandVel}
        \label{fig:ppowalkergoalreward}
    \end{subfigure}
    \\
    \begin{subfigure}[b]{0.24\textwidth}
        \centering
        \includegraphics[width=\textwidth]{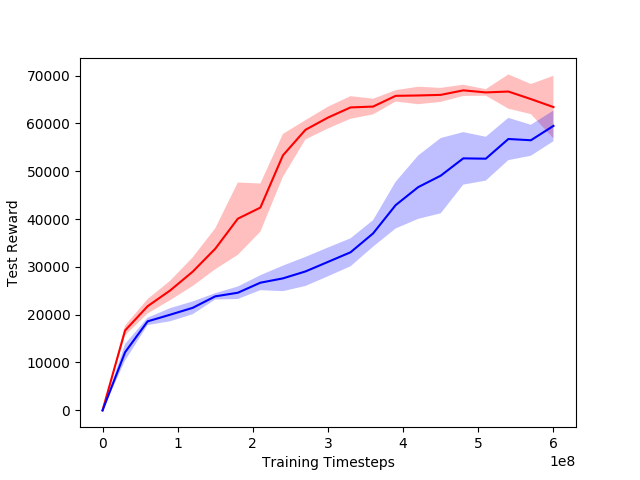}
        \caption{ML1-Push}
        \label{fig:ppoml1pushreward}
    \end{subfigure}
    \hfill
    \begin{subfigure}[b]{0.24\textwidth}
        \centering
        \includegraphics[width=\textwidth]{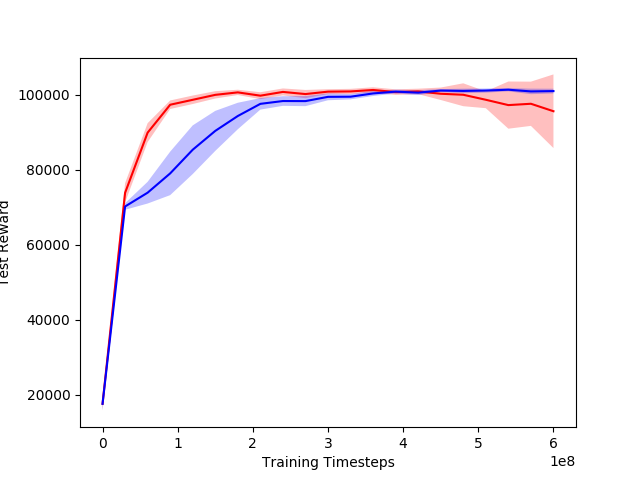}
        \caption{ML1-Reach}
        \label{fig:ppoml1reachreward}
    \end{subfigure}
    \hfill
    \begin{subfigure}[b]{0.24\textwidth}
        \centering
        \includegraphics[width=\textwidth]{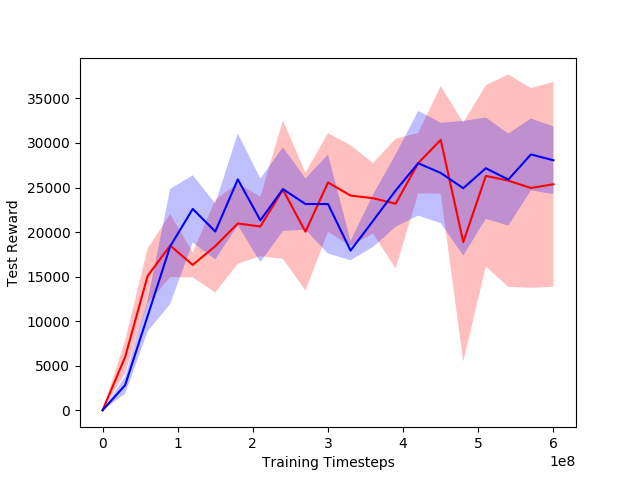}
        \caption{ML10}
        \label{fig:ppoml10reward}
    \end{subfigure}
    \hfill
    \begin{subfigure}[b]{0.24\textwidth}
        \centering
        \includegraphics[width=\textwidth]{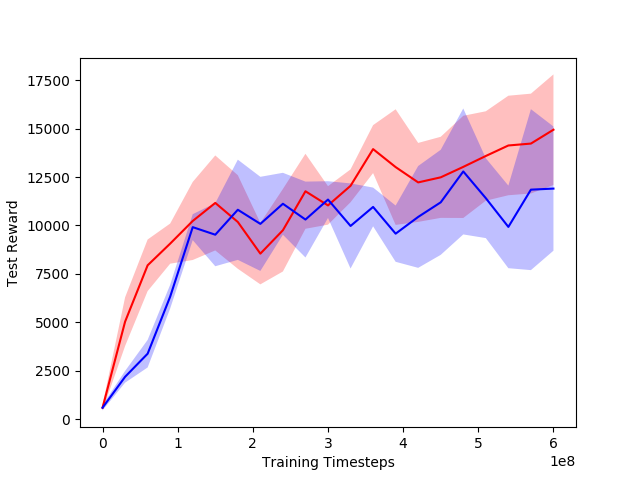}
        \caption{ML45}
        \label{fig:ppoml45reward}
    \end{subfigure}
    \caption{DRS+PPO vs. ProMP: Average episodic rewards after one update of the policy during evaluation, as a function of the number of steps at meta-training. DRS+PPO is \textcolor{red}{red} and ProMP is \textcolor{blue}{blue}.}
    \label{fig:pporeward}
\end{figure}

\begin{figure}
    \centering
    \begin{subfigure}[b]{0.24\textwidth}
        \centering
        \includegraphics[width=\textwidth]{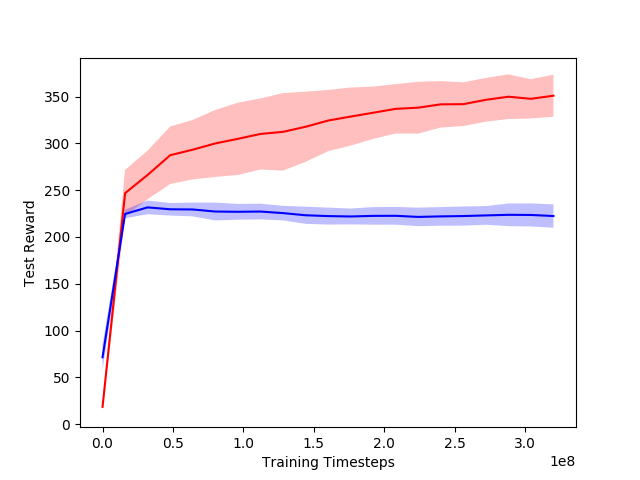}
        \caption{HopperRandParams}
        \label{fig:trpohopperparamsreward}
    \end{subfigure}
    \hfill
    \begin{subfigure}[b]{0.24\textwidth}
        \centering
        \includegraphics[width=\textwidth]{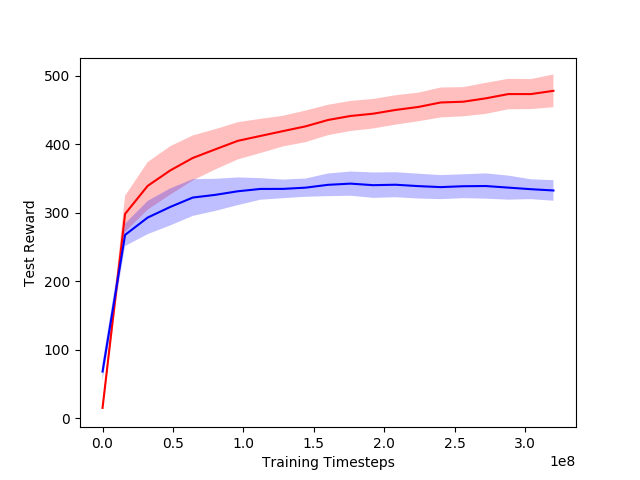}
        \caption{Walker2DRandParams}
        \label{fig:trpowalkerparamsreward}
    \end{subfigure}
    \hfill
    \begin{subfigure}[b]{0.24\textwidth}
        \centering
        \includegraphics[width=\textwidth]{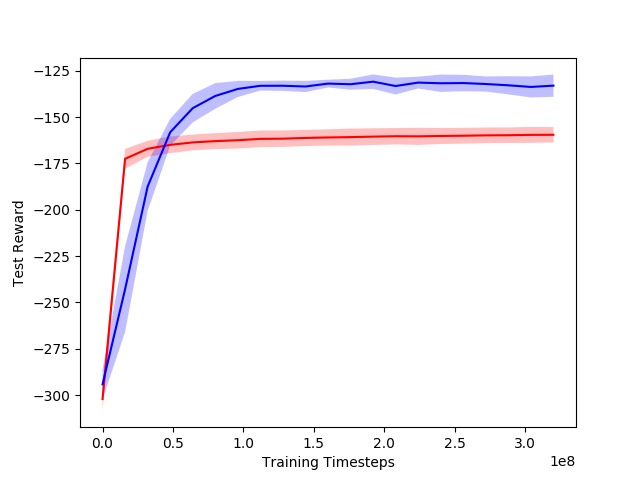}
        \caption{HalfCheetahRandVel}
        \label{fig:trpocheetahgoalreward}
    \end{subfigure}
    \hfill
    \begin{subfigure}[b]{0.24\textwidth}
        \centering
        \includegraphics[width=\textwidth]{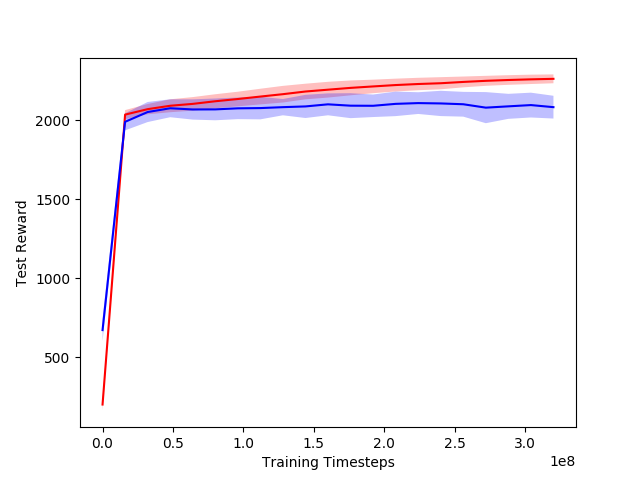}
        \caption{Walker2DRandVel}
        \label{fig:trpowalkergoalreward}
    \end{subfigure}
    \\
    \begin{subfigure}[b]{0.24\textwidth}
        \centering
        \includegraphics[width=\textwidth]{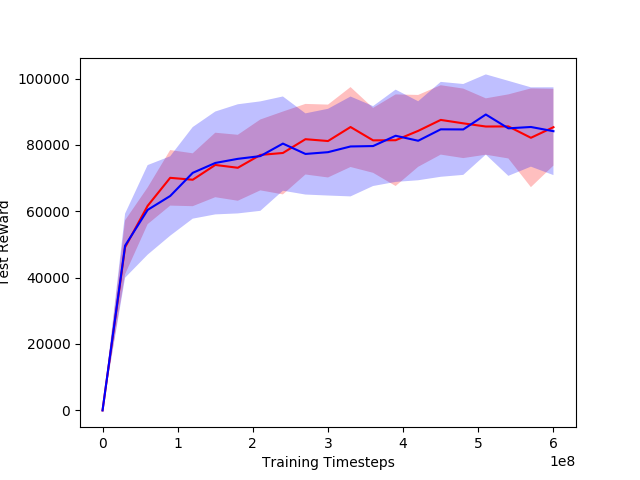}
        \caption{ML1-Push}
        \label{fig:trpoml1pushreward}
    \end{subfigure}
    \hfill
    \begin{subfigure}[b]{0.24\textwidth}
        \centering
        \includegraphics[width=\textwidth]{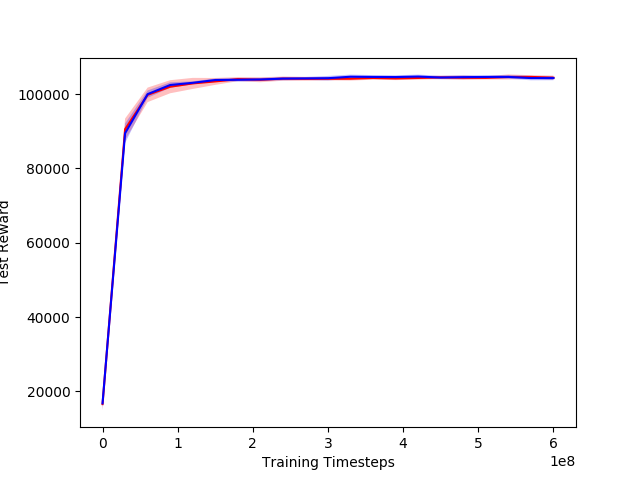}
        \caption{ML1-Reach}
        \label{fig:trpoml1reachreward}
    \end{subfigure}
    \hfill
    \begin{subfigure}[b]{0.24\textwidth}
        \centering
        \includegraphics[width=\textwidth]{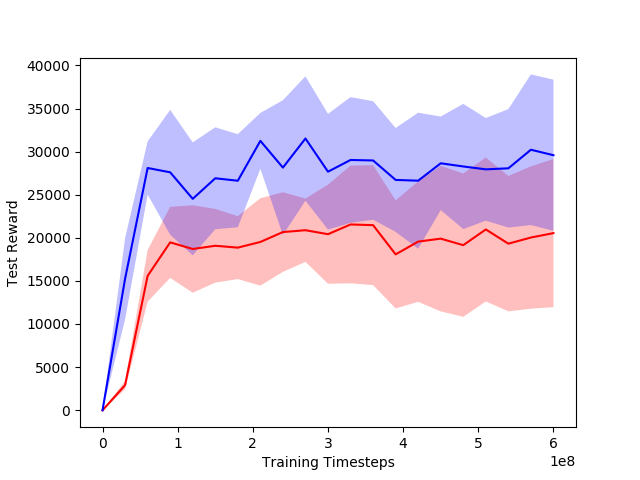}
        \caption{ML10}
        \label{fig:trpoml10reward}
    \end{subfigure}
    \hfill
    \begin{subfigure}[b]{0.24\textwidth}
        \centering
        \includegraphics[width=\textwidth]{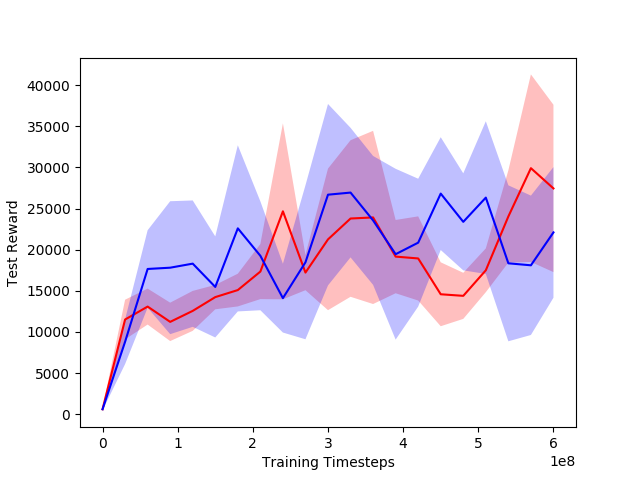}
        \caption{ML45}
        \label{fig:trpoml45reward}
    \end{subfigure}
    \caption{DRS+TRPO vs. TRPO-MAML: Average episodic rewards after one update of the policy during evaluation, as a function of the number of steps at meta-training. DRS+TRPO is \textcolor{red}{red} and TRPO-MAML is \textcolor{blue}{blue}.}
    \label{fig:trporeward}
\end{figure}

\section{Postponed Proofs from Section~\ref{sec:complexity}}\label{sec:complexity_proofs}
\subsection{Proof of Theorem~\ref{thm:drsamplecomplexity}}

\begin{proof}
We start with a generic bound on the gradient norm of a smooth function. Consider one step of SGD on a function $f(\cdot)$ that is $\mu$-smooth with learning rate $\beta^t$.

\begin{equation}
\begin{aligned}
    f(\btheta^{t+1}) &= f(\btheta^{t} - \beta^t\gg^t) \\
    &\leq f(\btheta^t) - \beta^t\nabla_{\btheta}f(\btheta^{t})^\intercal \gg^t + \frac{\mu {\beta^t}^2}{2}\|\gg^t\|^2_2 
\end{aligned}
\end{equation}
where $\gg^t$ is an estimate of the gradient of $f(\btheta^t)$.
Dividing by $\beta^t$ and moving the gradient term,
\begin{equation}
    \nabla_{\btheta}f(\btheta^t)^\intercal \gg^t \leq \frac{f(\btheta^t) - f(\btheta^{t+1})}{\beta^t} + \frac{\mu \beta^t}{2}\|\gg^t\|^2_2 
\end{equation}

During meta-training we take $T^{tr}$ steps of SGD on the loss $\rR^{drs}(\cdot)=\Ee_\gamma[\rR(\cdot;\gamma)]$ with learning rate $\beta^{tr}$ and during meta-testing we take $T^{te}$ steps of SGD on the task loss $\rR(\cdot;\gamma)$ with learning rate $\beta^{te}$.
Both losses are $\mu$-smooth by assumption.
Therefore, we can sum up the previous inequality over all $T$ steps, where $T=T^{tr}+T^{te}$.
\begin{equation}
\begin{aligned}
    &\sum_{t=0}^{T^{tr}-1} \nabla_{\btheta}\rR^{drs}(\btheta^t)^\intercal \gg^t+\sum_{t=T^{tr}}^{T-1} \nabla_{\btheta}\rR(\btheta^t;\gamma)^\intercal \gg^t \\
    &\leq \frac{\rR^{drs}(\btheta^0)-\rR^{drs}(\btheta^{T^{tr}})}{\beta^{tr}} + \frac{\rR(\btheta^{T^{tr}};\gamma)-\rR(\btheta^{T};\gamma)}{\beta^{te}} + \frac{\mu\beta^{tr}}{2}\sum_{t=0}^{T^{tr}-1}\|\gg^{t}\|_2^2 +
    \frac{\mu\beta^{te}}{2}\sum_{t=T^{tr}}^{T-1}\|\gg^{t}\|_2^2 \\
    &\leq \frac{\Delta}{\beta^{tr}} +  \frac{\rR(\btheta^{T^{tr}};\gamma) - \rR^{drs}(\btheta^{T^{tr}})}{\min\{\beta^{tr},\beta^{te}\}}  +
    \frac{ \rR(\btheta^*_{drs};\gamma) - \rR(\btheta^{T};\gamma)}{\beta^{te}} +
   \frac{\mu\beta^{tr}}{2}\sum_{t=0}^{T^{tr}-1}\|\gg^{t}\|_2^2 +
    \frac{\mu\beta^{te}}{2}\sum_{t=T^{tr}}^{T-1}\|\gg^{t}\|_2^2 \\
    &\leq \frac{\Delta}{\beta^{tr}} +  \frac{\rR(\btheta^{T^{tr}};\gamma) - \rR^{drs}(\btheta^{T^{tr}})}{\min\{\beta^{tr},\beta^{te}\}}  +
    \frac{ \rR(\btheta^*_{drs};\gamma)}{\beta^{te}} +
   \frac{\mu\beta^{tr}}{2}\sum_{t=0}^{T^{tr}-1}\|\gg^{t}\|_2^2 +
    \frac{\mu\beta^{te}}{2}\sum_{t=T^{tr}}^{T-1}\|\gg^{t}\|_2^2 \\
    &\leq \frac{\Delta +  \rR(\btheta^*_{drs};\gamma)}{\min\{\beta^{tr},\beta^{te}\}} +  \frac{\rR(\btheta^{T^{tr}};\gamma) - \rR^{drs}(\btheta^{T^{tr}})}{\min\{\beta^{tr},\beta^{te}\}}  +
   \frac{\mu\beta^{tr}}{2}\sum_{t=0}^{T^{tr}-1}\|\gg^{t}\|_2^2 +
    \frac{\mu\beta^{te}}{2}\sum_{t=T^{tr}}^{T-1}\|\gg^{t}\|_2^2 \\
    %&\leq \frac{\Delta}{\min\{\beta^{tr},\beta^{te}\}} + \frac{ \rR^{\star_{drs}}(\gamma) - \rR^{drs}(\btheta^{T};\gamma)}{\beta^{te}} +
   %\frac{\mu\beta^{tr}}{2}\sum_{t=0}^{T^{tr}-1}\|\gg^{t}\|_2^2 +
    %\frac{\mu\beta^{te}}{2}\sum_{t=T^{tr}}^{T-1}\|\gg^{t}\|_2^2 \\
    %&\leq \frac{\Delta+\Lambda^{drs}}{\min\{\beta^{tr},\beta^{te}\}} +\frac{\mu\beta^{tr}}{2}\sum_{t=0}^{T^{tr}-1}\|\gg^{t}\|_2^2 +
    %\frac{\mu\beta^{te}}{2}\sum_{t=T^{tr}}^{T-1}\|\gg^{t}\|_2^2.
\end{aligned}
\end{equation}

Recall that the task losses are $L$-Lipschitz.
For $t=0$ to $t=T^{tr}-1$,
\begin{align*}
    \gg^t &= \frac{1}{2NM}\sum_{j=1}^M\sum_{i=1}^{2N}\gg(\btheta^t,\xi_i;\gamma_j)\\
    \Ee(\gg^t) & = \Ee_\gamma[\nabla_\btheta\rR(\btheta^t;\gamma)]=\nabla_\btheta\rR^{drs}(\btheta^t) \\
    \Ee\norm{\gg^t}^2 &=\norm{\Ee(\gg^t)}^2+\trace(\var(\gg^t)) \leq L^2+V^t/M+V^d/2NM
\end{align*}
For $t=T^{tr}$ to $t=T-1$,
\begin{align*}
    \gg^t &= \frac{1}{N}\sum_{i=1}^{N}\gg(\btheta^t,\xi_i;\gamma)\\
    \Ee(\gg^t) &= \nabla_{\btheta}\rR(\btheta^t;\gamma) \\
    \Ee\norm{\gg^t}^2 &=\norm{\Ee(\gg^t)}^2+\trace(\var(\gg^t)) \leq L^2+V^d/N
\end{align*}

Therefore, taking expectation of both sides of the previous equation over $\gg^t$ and $\gamma$, and using the fact that $\Ee_{\gamma}[\rR(\btheta^{T^{tr}};\gamma)] = \rR^{drs}(\btheta^{T^{tr}})$ as well as $\Ee_{\gamma}[\rR(\btheta^*_{drs};\gamma)] = \Lambda^{drs}$,
\begin{equation}
\begin{aligned}
    &\sum_{t=0}^{T^{tr}-1} \norm{\nabla_{\btheta}\rR^{drs}(\btheta^t)}^2 + \sum_{t=0}^{T^{te}-1} \|\nabla_{\btheta}\rR(\btheta^{t+T^{tr}};\gamma)\|_2^2 \\
    &\leq \frac{\Delta + \Lambda^{drs}}{\min\{\beta^{tr},\beta^{te}\}} + \frac{\mu(\beta^{tr}T^{tr}(\frac{V^{d}}{2NM}+\frac{V^t}{M}+L^2)+\beta^{te}T^{te}(\frac{V^{d}}{N}+L^2))}{2} 
\end{aligned}
\end{equation}
We can further simplify the bound by assuming $\beta^{tr}=\beta^{te}$ and optimizing it over the learning rate.
Doing so, we obtain
\begin{equation}
\begin{aligned}
    \sum_{t=0}^{T^{tr}-1} \norm{\nabla_\btheta \rR^{drs}(\btheta^t)}^2 + \sum_{t=0}^{T^{te}-1} \|\nabla_{\btheta}\rR(\btheta^{t+T^{tr}};\gamma)\|^2  \leq  \sqrt{0.5\big(\Delta + \Lambda^{drs}\big) \big(C^{drs}_{tr}T^{tr} + C^{drs}_{te}T^{te}\big)}
\end{aligned}
\end{equation}
where
\begin{equation}
    C^{drs}_{tr}= 
    \mu\left(L^2+\frac{V^t}{M}+\frac{V^d}{2NM}\right) \text{ and }
C^{drs}_{te}= \mu\left(L^2+\frac{V^d}{N}\right) 
\end{equation}
\end{proof}

\subsection{Proof of Theorem~\ref{thm:mamlsamplecomplexity}}

\begin{proof}

From Corollary A.1 of \citet{mamlconvergence}, $\rR^{maml}(\btheta)$ has smoothness constant $\mu^{\prime} = 4\mu+2\mu^{H}\alpha L$. Thus, using a similar argument as in the proof of Theorem~\ref{thm:drsamplecomplexity},
\begin{equation}\label{eq:mamltraintest}
\begin{aligned}
    &\sum_{t=0}^{T^{tr}-1} \nabla_\btheta \rR^{maml}(\btheta^t;\alpha)^\intercal \gg^t+\sum_{t=T^{tr}}^{T-1} \nabla_{\btheta}\rR(\btheta^t;\gamma)^\intercal \gg^t \\
    &\leq \frac{\rR^{maml}(\btheta^0;\alpha)-\rR^{maml}(\btheta^{T^{tr}};\alpha)}{\beta^{tr}} + \frac{\rR(\btheta^{T^{tr}};\gamma)-\rR(\btheta^{T};\gamma)}{\beta^{te}} + \frac{\mu^\prime\beta^{tr}}{2}\sum_{t=0}^{T^{tr}-1}\|\gg^{t}\|_2^2 +
    \frac{\mu\beta^{te}}{2}\sum_{t=T^{tr}}^{T-1}\|\gg^{t}\|_2^2 \\
    &\leq \frac{\Delta +  \rR(\mamlsol-\alpha\nabla_\btheta\rR(\mamlsol;\gamma);\gamma)}{\min\{\beta^{tr},\beta^{te}\}} +  \frac{\rR(\btheta^{T^{tr}};\gamma) - \rR^{maml}(\btheta^{T^{tr}};\alpha)}{\min\{\beta^{tr},\beta^{te}\}}  \\
    &\quad+
   \frac{\mu^\prime\beta^{tr}}{2}\sum_{t=0}^{T^{tr}-1}\|\gg^{t}\|_2^2 +
    \frac{\mu\beta^{te}}{2}\sum_{t=T^{tr}}^{T-1}\|\gg^{t}\|_2^2 
\end{aligned}
\end{equation}
For $t=0$ to $t=T^{tr}-1$, using results from the proof of Theorem 5.12 in \citet{mamlconvergence},
\begin{align*}
    \gg^t &= \dfrac{1}{M}\sum_{j=1}^M \gg_j^t, \quad \gg_j^t=(I-\dfrac{\alpha}{D}\sum_{d=1}^D\hh(\btheta^t,\xi_d;\gamma_j))\sum_{i=1}^N\gg(\btheta^t-\dfrac{\alpha}{N}\sum_{j=1}^N\gg(\btheta^t,\xi_j;\gamma_j),\xi_i;\gamma_j)\\
    \Ee(\gg^t) &= \Ee(\gg_j^t) = \nabla_\btheta \rR^{maml}(\btheta^t;\alpha)+r^t
\end{align*}
where
\begin{align*}
    \norm{r^t} &\leq (1+\alpha\mu)\alpha\mu\sqrt{V^d/N} \\
    \norm{\nabla_\btheta \rR^{maml}(\btheta^t;\alpha)} &\leq \norm{\Ee_\gamma[(I-\alpha\nabla_\btheta^2\rR(\btheta^t;\gamma))\nabla_\btheta\rR(\btheta^t-\alpha\nabla_\btheta\rR(\btheta^t;\gamma);\gamma)]} \leq (1+\alpha\mu)L
\end{align*}
Define $\rho \triangleq (1+\alpha\mu)$. Then,
\begin{align*}
    \Ee\norm{\gg^t}^2 &\leq \norm{\Ee(\gg_j^t)}^2+\dfrac{1}{M}\Ee[\norm{\gg_j^t}^2]\\
    &\leq \norm{\nabla_\btheta \rR^{maml}(\btheta^t;\alpha)+r^t}^2+\dfrac{1}{M}(40\norm{\nabla_\btheta \rR^{maml}(\btheta^t;\alpha)}^2+14V^t+\dfrac{3}{N}V^d) \\
    &\leq 2\rho^2L^2+2\rho^2\alpha^2\mu^2\dfrac{V^d}{N}+ \dfrac{1}{M}\left(40\rho^2L^2+14V^t+\dfrac{3}{N}V^d\right) \\
    &\leq \left(2+\frac{40}{M}\right)\rho^2L^2+\frac{14V^t}{M}+\frac{3V^d(1 + \alpha^2\mu^2M)}{MN}
\end{align*}
where we used $2\rho^2 \leq \sqrt{8} < 3$ following the assumption $\alpha \leq \frac{1}{6\mu}$. For $t=T^{tr}$ to $t=T-1$,
\begin{align*}
    \gg^t &= \frac{1}{N}\sum_{i=1}^{N}\gg(\btheta^t,\xi_i;\gamma)\\
    \Ee(\gg^t) &= \nabla_{\btheta}\rR(\btheta^t;\gamma) \\
    \Ee\norm{\gg^t}^2 &=\norm{\Ee(\gg^t)}^2+\trace(\var(\gg^t)) \leq L^2+V^d/N
\end{align*}

Using the Lipschitz property, $|\rR(\btheta - \alpha \nabla\rR(\btheta;\gamma);\gamma) - \rR(\btheta)| \leq \alpha L^2$, and we can obtain the bound $|\Ee_{\gamma}[\rR(\btheta^{T^{tr}};\gamma)] - \rR^{maml}(\btheta^{T^{tr}})| \leq \alpha L^2$. Therefore, taking the expectation of both sides of ~\eqref{eq:mamltraintest} over $\gg^t$ and $\gamma$ and using $\Ee_\gamma[\rR(\mamlsol-\alpha\nabla_\btheta\rR(\mamlsol;\gamma);\gamma)]=\Lambda^{maml}(\alpha)$,

\begin{equation}
\begin{aligned}
    &\sum_{t=0}^{T^{tr}-1} \norm{\nabla_\btheta \rR^{maml}(\btheta^t;\alpha)}^2 + \sum_{t=0}^{T^{te}-1} \|\nabla_{\btheta}\rR(\btheta^{t+T^{tr}};\gamma)\|_2^2 \\
    &\leq \frac{\Delta + \Lambda^{maml}(\alpha) + \alpha L^2}{\min\{\beta^{tr},\beta^{te}\}}+\sum_{t=0}^{T^{tr}-1}\norm{\nabla_\btheta \rR^{maml}(\btheta^t;\alpha)} \norm{r^t}+\beta^{te}\dfrac{\mu T^{te}(L^2+V^d/N)}{2} \\
    &\quad+\beta^{tr}(2\mu+\mu^h\alpha L)T^{tr} \left(\left(2+\frac{40}{M}\right)\rho^2L^2+\frac{14V^t}{M}+\frac{3V^d(1 + \alpha^2\mu^2M)}{MN}\right)\\
    &\leq \frac{\Delta+\Lambda^{maml}(\alpha)+\alpha L^2}{\min\{\beta^{tr},\beta^{te}\}}+T^{tr}\alpha\mu \rho^2L\sqrt{\frac{V^d}{N}}+\beta^{te}\dfrac{\mu T^{te}(L^2+V^d/N)}{2} \\
    &\quad+\beta^{tr}(2\mu+\mu^H\alpha L)T^{tr}\left(\left(2+\frac{40}{M}\right)\rho^2L^2+\frac{14V^t}{M}+\frac{3V^d(1 + \alpha^2\mu^2M)}{MN}\right)
\end{aligned}
\end{equation}

Assuming $\beta^{tr}=\beta^{te}$ and optimizing the bound over the learning rate, we obtain
\begin{equation}
\begin{aligned}
    &\sum_{t=0}^{T^{tr}-1} \norm{\nabla_\btheta \rR^{maml}(\btheta^t;\alpha)}^2 + \sum_{t=0}^{T^{te}-1} \|\nabla_{\btheta}\rR(\btheta^{t+T^{tr}};\gamma)\|_2^2 \\ &\leq  T^{tr}\alpha\mu \rho^2L\sqrt{V^d/N} +\sqrt{0.5\big(\Delta + \Lambda^{maml}(\alpha)+\alpha L^2\big) \big(C^{maml}_{tr}T^{tr} + C^{maml}_{te}T^{te}\big)}
\end{aligned}
\end{equation}
where
\begin{equation}
    C^{maml}_{tr}= \mu^\prime \left(\left(2+\frac{40}{M}\right)\rho^2L^2+\frac{14V^t}{M}+\frac{3V^d(1 + \alpha^2\mu^2M)}{MN}\right) \text{ and }
C^{maml}_{te}= \mu \left(L^2+\frac{V^d}{N}\right) 
\end{equation}
\end{proof}

\section{Postponed Proofs and Derivations from Section~\ref{sec:linreg}}\label{sec:linreg_proofs}
\subsection{Derivation for risk function in Equation~\ref{eq:lossfunc}}
\label{app:risk_func}

\begin{align*}
    \lossgt &=\dfrac12\Ee[(y_\gamma-\btheta\trans \xx_\gamma)^2 \mid \gamma] \\
    &= \dfrac12\Ee[(\epsilon_\gamma+(\Thetag-\btheta)\trans \xx_\gamma)^2 \mid \gamma] \\
    &= \dfrac12\Ee[\epsilon_\gamma^2+2(\Thetag-\btheta)\trans \xx_\gamma\epsilon_\gamma+(\Thetag-\btheta)\trans \xx_\gamma \xx_\gamma\trans(\Thetag-\btheta) \mid \gamma] \\
    &=\dfrac12[\sigmag+(\Thetag-\btheta)\trans\Qg(\Thetag-\btheta)] \\
    &=\dfrac12\btheta\trans\Qg\btheta-\Thetag\trans\Qg\btheta+\dfrac12\Thetag\trans\Qg\Thetag+\dfrac12\sigmag
\end{align*}
where the third equality uses the model~\eqref{model:lr}.

\subsection{Derivation for optimal solutions in Equation~\ref{eq:drmamlsols}}
\label{app:optimal_sol}

Recall that $\drsol\triangleq\argmin_\theta \Ee_\gamma[\lossgt]$ and $\mamlsol\triangleq\argmin_\theta \Ee_\gamma[\mamllossgt]$.
Using Equation~\ref{eq:lossfunc},
\begin{align*}
    \Ee_\gamma[\lossgt] &= \Ee_\gamma[\dfrac12\btheta\trans\Qg\btheta-\Thetag\trans\Qg\btheta+\dfrac12\Thetag\trans\Qg\Thetag+\dfrac12\sigmag] \\
    &= \dfrac12\btheta\trans\Ee_\gamma[\Qg]\btheta-\Ee_\gamma[\Qg\Thetag]\trans\btheta+\dfrac12\Ee_\gamma[\Thetag\trans\Qg\Thetag]+\dfrac12\Ee_\gamma[\sigmag] \\
    \nabla_\theta\Ee_\gamma[\lossgt] &= \Ee_\gamma[\Qg]\btheta-\Ee_\gamma[\Qg\Thetag]
\end{align*}
Since $\Ee_\gamma[\lossgt]$ is quadratic in $\btheta$, it is minimized at a first-order stationary point.
Thus, setting $\nabla_\theta\Ee_\gamma[\lossgt]$ equal to zero, we obtain $\drsol=\Ee_\gamma[\Qg]^{-1}\Ee_\gamma[\Qg\Thetag]$. 

Similarly,
\begin{align*}
    &\Ee_\gamma[\mamllossgt] \\
    &=\Ee_\gamma[\rR(\btheta-\alpha(\Qg\btheta-\Qg\Thetag);\gamma)] \\
    &=\Ee_\gamma[\rR((\II-\alpha\Qg)\btheta+\alpha\Qg\Thetag;\gamma)] \\
    &=\dfrac12\Ee_\gamma[((\II-\alpha\Qg)\btheta+\alpha\Qg\Thetag)\trans\Qg((\II-\alpha\Qg)\btheta+\alpha\Qg\Thetag)] \\
    &\quad-\Ee_\gamma[\Thetag\trans\Qg((\II-\alpha\Qg)\btheta+\alpha\Qg\Thetag)]+\dfrac12\Ee_\gamma[\Thetag\trans\Qg\Thetag]+\dfrac12\Ee_\gamma[\sigmag] \\
    &=\dfrac12\btheta\trans\Ee_\gamma[(\II-\alpha\Qg)\Qg(\II-\alpha\Qg)]\btheta-\Ee_\gamma[\Thetag\trans(\II-\alpha\Qg)\Qg(\II-\alpha\Qg)]\btheta\\
    &\quad+\dfrac12\Ee_\gamma[\Thetag\trans(\II-\alpha\Qg)\Qg(\II-\alpha\Qg)\Thetag]+\dfrac12\Ee_\gamma[\sigmag] \\
    &\nabla_\theta\Ee_\gamma[\mamllossgt] \\
    &= \Ee_\gamma[(\II-\alpha\Qg)\Qg(\II-\alpha\Qg)]\btheta-\Ee_\gamma[(\II-\alpha\Qg)\Qg(\II-\alpha\Qg)\Thetag]
\end{align*}
$\Ee_\gamma[\mamllossgt]$ is also quadratic in $\btheta$, so we obtain its minimizer by setting its gradient equal to zero.
Thus, $\mamlsol=\Ee_\gamma[(\II-\alpha\Qg)\Qg(\II-\alpha\Qg)]^{-1}\Ee_\gamma[(\II-\alpha\Qg)\Qg(\II-\alpha\Qg)\Thetag]$.

\subsection{Proof of Theorem~\ref{thm:drbound} and \ref{thm:mamlbound}}
\subsubsection{Useful Lemmas and Preliminary Results}
We start with stating a few useful lemmas to be used in the proof of the main statements. Following them, we analyze $\drest$ and $\mamlest$.

\begin{lemma}\label{prop:drestmamlest}
Let $\XX_{j1}$ be the $p\times N$ matrix with columns $\xgi, i=1,\dots,N$ and $\XX_{j2}$ be the $p\times N$ matrix with columns $\xgi, i=N+1,\dots,2N$. 
Let $\YY_{j1}$ be the $N$-dimensional vector with entries $\ygi, i=1,\dots,N$, and $\YY_{j2}$ be the $N$-dimensional vector with entries $\ygi, i=N+1,\dots,2N$.
Let $\XX=\begin{bmatrix}\dots & \XX_{j1} & \XX_{j2} & \dots\end{bmatrix}$ and $\YY=\begin{bmatrix}\dots & \YY_{j1}\trans & \YY_{j2}\trans &\dots\end{bmatrix}\trans$.
Let $\WW(\alpha)=\begin{bmatrix}\dots & (\II-\dfrac{\alpha}{N}\XX_{j1}\XX_{j2}\trans)\XX_{j2} & \dots\end{bmatrix}$ and $\ZZ(\alpha)=\begin{bmatrix} \dots & (\YY_{j2}-\dfrac{\alpha}{N}\XX_{j2}\trans\XX_{j1}\YY_{j1})\trans & \dots\end{bmatrix}\trans$.
Then,
\begin{align*}
    \drest &= (\XX\trans)^+\YY \quad\text{and}\quad \mamlest = (\WW(\alpha)\trans)^+\ZZ(\alpha)
\end{align*}
where $()^+$ denotes the Moore-Penrose pseudoinverse of a matrix.
\end{lemma}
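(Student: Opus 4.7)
The plan is to recognize both empirical risk minimization problems as standard linear least-squares problems in $\btheta$, then apply the well-known fact that the minimum-norm minimizer of $\lVert\vv-\MM\trans\btheta\rVert^2$ is $(\MM\trans)^+\vv$. The only real work is to rearrange each objective into the form $\lVert\YY-\XX\trans\btheta\rVert^2$ by stacking data across observations and tasks; the MAML case additionally requires substituting the closed-form expression for the inner gradient step before stacking.

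First I would dispatch the DRS case. Since the data points $(\xgi,\ygi)$ contribute independently, the objective~\eqref{eq:drest} can be written as $\sum_{j,i}(\ygi-\btheta\trans\xgi)^2=\lVert\YY-\XX\trans\btheta\rVert^2$, with $\XX$ and $\YY$ as in the lemma. The minimum-norm least-squares solution to this problem is exactly $(\XX\trans)^+\YY$, giving the first formula.

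Next I would tackle the MAML case. Because the inner gradient step~\eqref{eq:mamlest} is a single step of full-batch gradient descent on a quadratic, $\tilde\btheta_j(\alpha)$ is an affine function of $\btheta$: writing the inner loss derivative explicitly,
\begin{equation*}
\tilde\btheta_j(\alpha)=\Bigl(\II-\tfrac{\alpha}{N}\XX_{j1}\XX_{j1}\trans\Bigr)\btheta+\tfrac{\alpha}{N}\XX_{j1}\YY_{j1}.
\end{equation*}
Substituting into the $j$-th query residual gives
\begin{equation*}
\ygi-\tilde\btheta_j(\alpha)\trans\xgi=\Bigl(\ygi-\tfrac{\alpha}{N}\YY_{j1}\trans\XX_{j1}\trans\xgi\Bigr)-\Bigl[\bigl(\II-\tfrac{\alpha}{N}\XX_{j1}\XX_{j1}\trans\bigr)\xgi\Bigr]\trans\btheta
\end{equation*}
for $i=N+1,\dots,2N$. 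Stacking the scalar parts into the vector $\YY_{j2}-\tfrac{\alpha}{N}\XX_{j2}\trans\XX_{j1}\YY_{j1}$ and stacking the vector multipliers of $\btheta$ into the columns of $(\II-\tfrac{\alpha}{N}\XX_{j1}\XX_{j1}\trans)\XX_{j2}$, and then concatenating across tasks $j$, the full MAML objective takes the form $\lVert\ZZ(\alpha)-\WW(\alpha)\trans\btheta\rVert^2$ with $\ZZ(\alpha)$ and $\WW(\alpha)$ as in the lemma (the inner symmetric matrix in the definition of $\WW(\alpha)$ is $\XX_{j1}\XX_{j1}\trans$; the transposition in the statement collapses because $\XX_{j1}\XX_{j1}\trans$ is symmetric). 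The claimed formula $\mamlest=(\WW(\alpha)\trans)^+\ZZ(\alpha)$ then follows from the same pseudoinverse identity.

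The main obstacle is really bookkeeping: keeping the inner/outer data splits straight when expanding $\tilde\btheta_j(\alpha)$, and confirming that the resulting $\WW(\alpha),\ZZ(\alpha)$ agree with the lemma's stacked definitions. A secondary point worth flagging is that when the total sample size is smaller than $p$ (or the features are collinear), the design matrices $\XX\trans$ and $\WW(\alpha)\trans$ need not have full column rank, so $\drest$ and $\mamlest$ are not unique; the Moore--Penrose pseudoinverse convention is what picks out the minimum-norm minimizer, and this is the representative we carry forward into the statistical-error analysis of Theorems~\ref{thm:drbound}--\ref{thm:mamlbound}.
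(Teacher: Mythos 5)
Your proposal is correct and follows essentially the same route as the paper's proof: rewrite each empirical objective as a stacked linear least-squares problem ($\lVert\YY-\XX\trans\btheta\rVert^2$ and, after substituting the affine inner gradient step, $\lVert\ZZ(\alpha)-\WW(\alpha)\trans\btheta\rVert^2$) and invoke the Moore--Penrose characterization of the minimizer. Your closing remark about rank deficiency and the minimum-norm convention (and the implicit correction that the matrix inside $\WW(\alpha)$ should be $\XX_{j1}\XX_{j1}\trans$) is a sensible clarification but does not change the argument.
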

\begin{proof}
Using~\eqref{eq:drest}, we can write the DR estimate as minimizing the objective
\begin{align*}
    \dfrac{1}{4MN}\sum_{j=1}^M\sum_{i=1}^{2N}(\ygi-\btheta\trans \xgi)^2 &=\dfrac{1}{4MN}\sum_{j=1}^M\left(\|\YY_{j1}-\XX_{j1}^T\btheta\|_2^2+\|\YY_{j2}-\XX_{j2}\trans\btheta\|_2^2\right) \\
    &=\dfrac{1}{4MN}\|\YY-\XX\trans\btheta\|_2^2
\end{align*}
From \citet{penrose}, the value of $\btheta$ that minimizes the above is $(\XX\trans)^+\YY$.

Using~\eqref{eq:mamlest}, we can write the MAML estimate as minimizing the objective
\begin{align*}
    &\dfrac{1}{2MN}\sum_{j=1}^M\sum_{i=N+1}^{2N}(\ygi-\tilde{\btheta}_j(\alpha)\trans \xgi)^2 \\
    &= \dfrac{1}{2MN}\sum_{j=1}^M \|\YY_{j2}-\XX_{j2}\trans\tilde{\btheta}_j(\alpha)\|_2^2 \\
    &= \dfrac{1}{2MN}\sum_{j=1}^M \|\YY_{j2}-\XX_{j2}\trans(\btheta-\dfrac{\alpha}{N}(\XX_{j1}\XX_{j1}\trans\btheta-\XX_{j1}\YY_{j1}))\|_2^2 \\
    &= \dfrac{1}{2MN}\sum_{j=1}^M\|\YY_{j2}-\dfrac{\alpha}{N}\XX_{j2}\trans\XX_{j1}\YY_{j1}-((\II-\dfrac{\alpha}{N}\XX_{j1}\XX_{j1}^T)\XX_{j2})\trans\btheta\|_2^2 \\
    &= \dfrac{1}{2MN}\|\ZZ(\alpha)-\WW(\alpha)\trans\btheta\|_2^2
\end{align*}
With the same reasoning as for the DR estimate, this is minimized when $\btheta$ equals $(\WW(\alpha)\trans)^+\ZZ(\alpha)$.
\end{proof}

Next, we obtain some useful high probability concentration inequalities as a direct consequence of matrix Bernstein's inequality~\citep{hdpbook}.
\begin{lemma}\label{lem:qghpbound}
Assume $\norm{\Qg}\leq\beta$ with probability $1$. With probability at least $1-\varrho$,
\begin{align*}
\norm{\sum_{j=1}^M(\QQ_j-\Ee_\gamma[\Qg])}\leq \dfrac{2\beta}{3}\log{\frac{2p}{\varrho}}+\sqrt{2M\norm{\var_\gamma[\Qg]}\log{\frac{2p}{\varrho}}}
\end{align*}
\end{lemma}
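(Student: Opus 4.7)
The plan is a direct application of the matrix Bernstein inequality (the version in \citet{hdpbook}) to the independent, centered, symmetric random matrices $\QQ_j - \Ee_\gamma[\QQ_\gamma]$, $j=1,\ldots,M$.

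To invoke Bernstein I need a uniform almost-sure operator-norm bound and a bound on the matrix variance proxy $\big\|\sum_{j=1}^M \Ee[(\QQ_j - \Ee_\gamma[\QQ_\gamma])^2]\big\|$. For the variance, the $\QQ_j$ are i.i.d., and the paper's footnoted convention defines $\var_\gamma[\QQ_\gamma] = \Ee[(\QQ_\gamma - \Ee_\gamma[\QQ_\gamma])^2]$, so this proxy equals $M\|\var_\gamma[\QQ_\gamma]\|$. For the operator-norm bound, the key observation is that $\QQ_\gamma$ is a conditional second-moment matrix and therefore positive semidefinite; combined with assumption $\textbf{A1}$ this gives $0 \preceq \QQ_j \preceq \beta\II$ and $0 \preceq \Ee_\gamma[\QQ_\gamma] \preceq \beta\II$, so $-\beta\II \preceq \QQ_j - \Ee_\gamma[\QQ_\gamma] \preceq \beta\II$ and hence $\|\QQ_j - \Ee_\gamma[\QQ_\gamma]\| \leq \beta$ almost surely. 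This is sharper than the $2\beta$ one would get from the triangle inequality alone, and is precisely what is required to match the constants in the statement.

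Feeding these two ingredients into matrix Bernstein yields, for every $t \geq 0$,
$$\Pr\!\left[\Big\|\sum_{j=1}^M (\QQ_j - \Ee_\gamma[\QQ_\gamma])\Big\| \geq t\right] \leq 2p\exp\!\left(-\frac{t^2/2}{M\|\var_\gamma[\QQ_\gamma]\| + \beta t/3}\right).$$
Setting the right-hand side equal to $\varrho$ produces a quadratic in $t$ whose positive root, after applying the elementary inequality $\sqrt{a+b} \leq \sqrt{a} + \sqrt{b}$ to decouple its linear and square-root parts, is bounded above by exactly $(2\beta/3)\log(2p/\varrho) + \sqrt{2M\|\var_\gamma[\QQ_\gamma]\|\log(2p/\varrho)}$.

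There is no real obstacle here: the only non-mechanical step is the positive-semidefiniteness argument that sharpens the operator-norm bound on the summands from $2\beta$ to $\beta$, which is what produces the coefficient $2\beta/3$ in the statement rather than $4\beta/3$. Everything else is bookkeeping.
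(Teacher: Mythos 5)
Your proposal is correct and follows essentially the same route as the paper: matrix Bernstein applied to the centered summands with variance proxy $M\norm{\var_\gamma[\Qg]}$, then solving the resulting quadratic in $t$ and splitting the square root to obtain the stated bound. The only difference is that you explicitly justify the almost-sure bound $\norm{\QQ_j-\Ee_\gamma[\Qg]}\leq\beta$ via positive semidefiniteness of $\Qg$, a step the paper uses implicitly (it writes $\beta t/3$ in the Bernstein denominator without comment), so your write-up is if anything slightly more careful.
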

\begin{proof}
Notice that $\QQ_j-\Ee_\gamma[\Qg]$ are $M$ independent, mean zero, symmetric random matrices.
From the matrix Bernstein's inequality~\citep{hdpbook}, for any $t\geq 0$, 
\begin{align*}
    &\Pe\left\{\left\|\sum_{j=1}^M (\QQ_j-\Ee_\gamma[\Qg])\right\|\geq t\right\}\leq 2p\exp\left\{-\dfrac{t^2/2}{s_1^2+\beta t/3}\right\}
\end{align*}
where
\begin{align*}
    &s_1^2=\norm{\sum_{j=1}^M\Ee[(\QQ_j-\Ee_\gamma[\Qg])^2]}\leq M\norm{\Ee_\gamma[(\Qg-\Ee_\gamma[\Qg])^2]}=M\norm{\var_\gamma[\Qg]}.
\end{align*}
Therefore,
\begin{align*}
    &\Pe\left\{\norm{\sum_{j=1}^M (\QQ_j-\Ee_\gamma[\Qg])}\geq t\right\}\leq 2p\exp\left\{-\dfrac{t^2/2}{M\norm{\var_\gamma[\Qg]}+\beta t/3}\right\}.
\end{align*}
Setting $\omega=\dfrac{t^2/2}{M\norm{\var_\gamma[\Qg]}+\beta t/3}$, we solve for $t$. 
$t$ satisfies the quadratic equation $3t^2-2\beta\omega t-6M\norm{\var_\gamma[\Qg]}\omega=0$, which has the positive root $(\beta\omega+\sqrt{\beta^2\omega^2+18M\norm{\var_\gamma[\Qg]}\omega})/3$.
Therefore, the previous inequality becomes
\begin{align*}
    &\Pe\{\norm{\sum_{j=1}^M (\QQ_j-\Ee_\gamma[\Qg])}\geq \dfrac{\beta\omega+\sqrt{\beta^2\omega^2+18M\norm{\var_\gamma[\Qg]}\omega}}{3}\}\leq 2p\exp\{-\omega\}
    \end{align*}
and since $(\beta\omega+\sqrt{\beta^2\omega^2+18M\norm{\var_\gamma[\Qg]}\omega})/3\leq2\beta\omega/3+\sqrt{2M\norm{\var_\gamma[\Qg]}\omega}$,
\begin{align*}
    &\Pe\{\norm{\sum_{j=1}^M (\QQ_j-\Ee_\gamma[\Qg])}\geq \dfrac{2\beta\omega}{3}+\sqrt{2M\norm{\var_\gamma[\Qg]}\omega}\}\leq 2p\exp\{-\omega\}.
\end{align*}
Set $w=\log{\frac{2p}{\varrho}}$ to obtain the final result.
\end{proof}
\mypara{Remark.} A similar proof will also show that if $\norm{\Qg}\leq\beta$ and $\norm{\II-\alpha\Qg}\leq\mu$ with probability $1$, $\norm{\sum_{j=1}^M((\II-\alpha \QQ_j)\QQ_j(\II-\alpha \QQ_j)-\Ee_\gamma[(\II-\alpha\Qg)\Qg(\II-\alpha\Qg)])}\leq\dfrac{2\beta\mu^2}{3}\log{\frac{2p}{\varrho}}+\sqrt{2M\norm{\var_\gamma[(\II-\alpha\Qg)\Qg(\II-\alpha\Qg)]}\log{\frac{2p}{\varrho}}}$ with probability $1-\varrho$.
\vspace{\baselineskip}

\begin{lemma}\label{lem:qgtghpbound}
Assume $\norm{\Qg}\leq\beta$ and $\norm{\Thetag}\leq\eta$ with probability $1$. With probability at least $1-\varrho$,
\begin{align*}
    \norm{\sum_{j=1}^M(\QQ_j\tasktheta-\Ee_\gamma[\Qg\Thetag])} \leq \dfrac{2\beta\eta}{3}\log{\frac{2(p+1)}{\varrho}}+ \sqrt{2M\trace(\var_\gamma[\Qg\Thetag])\log{\frac{2(p+1)}{\varrho}}}
\end{align*}
\end{lemma}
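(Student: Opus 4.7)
The plan is to mirror the proof of Lemma~\ref{lem:qghpbound}, but since $\QQ_j\tasktheta - \Ee_\gamma[\Qg\Thetag]$ is a random \emph{vector} rather than a self-adjoint random matrix, I first convert the problem into a symmetric matrix problem via the Hermitian dilation trick, then apply matrix Bernstein's inequality exactly as before.

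Set $W_j = \QQ_j\tasktheta - \Ee_\gamma[\Qg\Thetag] \in \Re^p$. These are i.i.d.\ mean-zero random vectors, and by the assumptions $\|\QQ_j\|\leq \beta$ and $\|\tasktheta\|\leq\eta$ together with Jensen's inequality applied to $\Ee_\gamma[\Qg\Thetag]$, one has a uniform bound of order $\beta\eta$ on $\|W_j\|$ almost surely. Now define the Hermitian dilation
\[
\tilde W_j = \begin{pmatrix} 0 & W_j \\ W_j\trans & 0 \end{pmatrix} \in \Re^{(p+1)\times(p+1)},
\]
which is self-adjoint, mean zero, and satisfies $\|\tilde W_j\| = \|W_j\|$. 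This dilation is the standard device for turning vector concentration into symmetric matrix concentration, and it is what produces the dimension factor $(p+1)$ (rather than $p$) inside the logarithm.

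The key quantities for Bernstein are then straightforward. A direct block computation gives $\tilde W_j^2 = \operatorname{blockdiag}(W_jW_j\trans,\ \|W_j\|^2)$, so
\[
\bigl\|\Ee\tilde W_j^2\bigr\| \;=\; \max\bigl(\|\Ee[W_jW_j\trans]\|,\ \Ee\|W_j\|^2\bigr) \;\leq\; \trace(\var_\gamma[\Qg\Thetag]),
\]
using $\Ee[W_jW_j\trans]=\var_\gamma[\Qg\Thetag]$, $\Ee\|W_j\|^2 = \trace(\var_\gamma[\Qg\Thetag])$, and the elementary fact that the operator norm of a PSD matrix is at most its trace. Independence over $j$ then yields the variance parameter $s_1^2 \leq M\,\trace(\var_\gamma[\Qg\Thetag])$.

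I then apply matrix Bernstein to $\sum_j \tilde W_j$ in dimension $p+1$, obtaining
\[
\Pe\!\left\{\Big\|\sum_{j=1}^M \tilde W_j\Big\|\geq t\right\}\leq 2(p+1)\exp\!\left(-\frac{t^2/2}{M\,\trace(\var_\gamma[\Qg\Thetag]) + \beta\eta\,t/3}\right),
\]
use $\|\sum_j \tilde W_j\| = \|\sum_j W_j\|$, set the right-hand side equal to $\varrho$ with $\omega = \log\frac{2(p+1)}{\varrho}$, solve the resulting quadratic in $t$, and invoke $\sqrt{a^2+b}\leq a+\sqrt{b}$ to split into the two terms stated in the lemma; this is the same algebraic manoeuvre used at the end of Lemma~\ref{lem:qghpbound}.

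The one subtlety I expect to be slightly fiddly is matching the constant $\tfrac{2\beta\eta}{3}$ exactly. The naive centering gives a uniform bound of $2\beta\eta$ on $\|W_j\|$, which would yield a leading constant of $\tfrac{4\beta\eta}{3}$. As in the Lemma~\ref{lem:qghpbound} proof (which similarly used $\beta$ rather than $2\beta$ by exploiting the PSD structure of $\Qg$), matching the stated constant requires using the sharper uncentered bound $\|\QQ_j\tasktheta\|\leq \beta\eta$ inside the version of matrix Bernstein that accommodates a uniform bound on the uncentered summand together with the second-moment quantity above. Everything else is book-keeping parallel to the preceding lemma.
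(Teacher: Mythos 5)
Your proof is correct and essentially the paper's own: the paper simply invokes the matrix Bernstein inequality for rectangular matrices from \citet{hdpbook} applied to the $p\times 1$ summands $\QQ_j\tasktheta-\Ee_\gamma[\Qg\Thetag]$, which is exactly the Hermitian-dilation argument you spell out, with the same variance bound $M\trace(\var_\gamma[\Qg\Thetag])$, the same dimension factor $2(p+1)$, and the same quadratic-solving step at the end. The factor-of-two subtlety you flag is real but is glossed over in the paper as well---its proof plugs the uncentered bound $\norm{\Qg\Thetag}\leq\beta\eta$ directly into the Bernstein denominator where a bound on the centered summand is required---and it is immaterial downstream, since this term is absorbed into the $\tilde{o}(1)$ corrections in Theorem~\ref{thm:drbound}.
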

\begin{proof}
Notice that $\QQ_j\tasktheta-\Ee_\gamma[\Qg\Thetag]$ are $M$ independent, mean zero, random vectors. 
From the matrix Bernstein's inequality for rectangular matrices~\citep{hdpbook}, since $\norm{\Qg\Thetag}\leq\norm{\Qg}\norm{\Thetag}\leq\beta\eta$, for any $t\geq 0$,
\begin{align*}
&\Pe\left\{\norm{\sum_{j=1}^M (\QQ_j\tasktheta-\Ee_\gamma[\Qg\Thetag])}\geq t\right\}\leq 2(p+1)\exp\left\{-\dfrac{t^2/2}{s_2^2+\beta\eta t/3}\right\}
\end{align*}
where
\begin{align*}
    s_2^2&=\max\left(\norm{\sum_{j=1}^M\Ee[(\QQ_j\tasktheta-\Ee_\gamma[\Qg\Thetag])(\QQ_j\tasktheta-\Ee_\gamma[\Qg\Thetag])\trans]},\right. \\
    &\left.\quad\quad\quad\quad\norm{\sum_{j=1}^M\Ee[(\QQ_j\tasktheta-\Ee_\gamma[\Qg\Thetag])\trans(\QQ_j\tasktheta-\Ee_\gamma[\Qg\Thetag])]}\right) \\
    &\leq M\max(\norm{\Ee_\gamma[(\Qg\Thetag-\Ee_\gamma[\Qg\Thetag])(\Qg\Thetag-\Ee_\gamma[\Qg\Thetag])\trans]}, \\
    &\quad\quad\quad\quad\quad\norm{\Ee_\gamma[(\Qg\Thetag-\Ee_\gamma[\Qg\Thetag])\trans(\Qg\Thetag-\Ee_\gamma[\Qg\Thetag])]}) \\
    &=M\max(\norm{\var_\gamma[\Qg\Thetag]},\norm{\trace(\var_\gamma[\Qg\Thetag])}) \\
    &\leq M\trace(\var_\gamma[\Qg\Thetag])
\end{align*}
Therefore,
\begin{align*}
    &\Pe\left\{\norm{\sum_{j=1}^M (\QQ_j\tasktheta-\Ee_\gamma[\Qg\Thetag])}\geq t\right\}\leq 2(p+1)\exp\left\{-\dfrac{t^2/2}{M\trace(\var_\gamma[\Qg\Thetag])+\beta\eta t/3}\right\}.
\end{align*}
Using the same argument as in the proof of Lemma~\ref{lem:qghpbound}, we have that
\begin{align*}
    &\Pe\left\{\norm{\sum_{j=1}^M (\QQ_j\tasktheta-\Ee_\gamma[\Qg\Thetag])}\geq \dfrac{2\beta\eta\omega}{3}+\sqrt{2M\trace(\var_\gamma[\Qg\Thetag])\omega}\right\}\leq 2(p+1)\exp\{-\omega\}.
\end{align*}
Set $w=\log{\frac{2(p+1)}{\varrho}}$ to obtain the final result.

\end{proof}
\mypara{Remark.} A similar argument will also show that if $\norm{\Qg}\leq\beta$, $\norm{\II-\alpha\Qg}\leq\mu$, and $\norm{\Thetag}\leq\eta$,
$\norm{\sum_{j=1}^M((\II-\alpha \QQ_j)\QQ_j(\II-\alpha \QQ_j)\tasktheta-\Ee_\gamma[(\II-\alpha\Qg)\Qg(\II-\alpha\Qg)\Thetag])}\leq\dfrac{2\beta\mu^2\eta}{3}\log{\frac{2(p+1)}{\varrho}}+\sqrt{2M\trace(\var_\gamma[(\II-\alpha\Qg)\Qg(\II-\alpha\Qg)\Thetag])\log{\frac{2(p+1)}{\varrho}}}$ with probability at least $1-\varrho$.
\vspace{\baselineskip}

\begin{lemma}\label{lem:covbound}
Fix the task $j$. Assume $\norm{\Qg}\leq\beta$ with probability $1$ and the distribution of $\xx_{\gamma,i}$ conditional on $\gamma$ is sub-Gaussian with parameter $K$. With probability at least $1-\varrho$,
\begin{align*}
\norm{\dfrac{\XX_{j1}\XX_{j1}\trans}{N}-\QQ_j}\leq \beta CK^2\left(\sqrt{\dfrac{p+\log{\frac{2}{\varrho}}}{N}}+\dfrac{p+\log{\frac{2}{\varrho}}}{N}\right)
\end{align*}
\end{lemma}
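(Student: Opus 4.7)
The plan is to reduce the statement to the standard covariance concentration inequality for i.i.d.\ sub-Gaussian vectors and then insert the uniform bound $\norm{\QQ_j}\leq\beta$. First I would condition on the task $\gamma_j$: under this conditioning the $N$ columns $\xx_{j,1},\dots,\xx_{j,N}$ of $\XX_{j1}$ become i.i.d.\ sub-Gaussian with parameter $K$ and population covariance $\QQ_j$ by assumption \textbf{A3} and the data model \eqref{model:lr}. In this conditional world $\XX_{j1}\XX_{j1}\trans/N$ is exactly the empirical covariance and $\QQ_j$ is its mean, so the quantity to bound is a standard sample-covariance deviation.

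Second, I would invoke the standard sub-Gaussian covariance concentration bound of the type found in \citet{hdpbook} (Theorem 4.7.1). To match the exact form of the stated bound I would whiten by writing $\xx_{j,i}=\QQ_j^{1/2}\zz_{j,i}$ with $\zz_{j,i}$ isotropic and sub-Gaussian, apply the isotropic version of the theorem to $\hat{\II}_N \triangleq \frac{1}{N}\sum_{i=1}^N \zz_{j,i}\zz_{j,i}\trans$, and then sandwich
\begin{equation*}
\norm{\dfrac{\XX_{j1}\XX_{j1}\trans}{N}-\QQ_j} \;=\; \norm{\QQ_j^{1/2}(\hat{\II}_N - \II)\QQ_j^{1/2}} \;\leq\; \norm{\QQ_j}\,\norm{\hat{\II}_N - \II} \;\leq\; \beta\,\norm{\hat{\II}_N - \II}.
\end{equation*}
The isotropic theorem supplies, with probability at least $1-2e^{-t}$, a bound on $\norm{\hat{\II}_N-\II}$ of the form $CK^2\bigl(\sqrt{(p+t)/N}+(p+t)/N\bigr)$ for an absolute constant $C$.

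Third, I would pick $t=\log(2/\varrho)$, which makes the failure probability exactly $\varrho$ and plants the term $\log(2/\varrho)$ inside the square-root and linear contributions as required by the statement. Since the resulting right-hand side is deterministic and does not depend on $\gamma_j$, the bound that holds conditionally on $\gamma_j$ immediately holds unconditionally. The only point that calls for care is matching the convention for the sub-Gaussian parameter $K$ between assumption \textbf{A3} and the invoked theorem (i.e.\ whether $K$ refers to $\norm{\xx_{j,i}}_{\psi_2}$ or to the sub-Gaussian norm of the whitened vector $\zz_{j,i}$); this is routine bookkeeping rather than a real obstacle, so I do not anticipate any hard step here.
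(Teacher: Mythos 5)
Your proposal is correct and follows essentially the same route as the paper: the paper likewise invokes the sub-Gaussian covariance estimation result from \citet{hdpbook}, replaces $\norm{\QQ_j}$ by $\beta$, and sets the tail parameter to $\log(2/\varrho)$. Your explicit whitening step is just the standard derivation of that non-isotropic result from the isotropic version, so there is no substantive difference.
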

\begin{proof}
From results on covariance estimation from~\citet{hdpbook},
\begin{align*}
    &\Pe\{\norm{\dfrac{\XX_{j1}\XX_{j1}\trans}{N}-\QQ_j}\geq CK^2(\sqrt{\dfrac{p+\omega}{N}}+\dfrac{p+\omega}{2N})\norm{\QQ_j}\}\leq 2\exp\{-\omega\}.
\end{align*}
Since $\QQ_j\leq\beta$ with probability $1$,
\begin{align*}
    &\Pe\{\norm{\dfrac{\XX_{j1}\XX_{j1}\trans}{N}-\QQ_j}\geq \beta CK^2(\sqrt{\dfrac{p+\omega}{N}}+\dfrac{p+\omega}{N})\}\leq 2\exp\{-\omega\}.
\end{align*}
Set $w=\log{\frac{2}{\varrho}}$ to obtain the final result.
\end{proof}

\begin{lemma}\label{lem:xgegbound}
Fix the task $j$ and let $\ee_{j1}$ be the $N$-dimensional vector with entries $\epsilon_{j,i},i=1,\dots,N$. If $\norm{\xx_{\gamma,i}}\leq\xi$ and $\abs{\epsilon_{\gamma,i}}\leq\phi$ with probability $1$,
$\norm{\XX_{j1}\ee_{j1}}\geq\dfrac{2\xi\phi\omega}{3}+\sqrt{2N\trace(\Ee_\gamma[\sigmag\Qg])\omega}$ with probability at most $2(p+1)e^{-\omega}$.
\end{lemma}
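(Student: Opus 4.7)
The plan is to apply the rectangular matrix Bernstein inequality to the vector sum $\XX_{j1}\ee_{j1}=\sum_{i=1}^N \xx_{j,i}\epsilon_{j,i}$, following the same template used in the proof of Lemma~\ref{lem:qgtghpbound}. Setting $\vv_{j,i}\triangleq \xx_{j,i}\epsilon_{j,i}$, I would first verify the three Bernstein hypotheses. Conditional on $\gamma_j$, the features $\xx_{j,i}$ are independent of the noise $\epsilon_{j,i}$ (by the data model~\eqref{model:lr}) and $\Ee[\epsilon_{j,i}\mid\gamma_j]=0$, so $\Ee[\vv_{j,i}]=\Ee_\gamma[\Ee[\xx_{j,i}\epsilon_{j,i}\mid\gamma_j]]=0$. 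The uniform norm bound $\norm{\vv_{j,i}}\leq\norm{\xx_{j,i}}\,|\epsilon_{j,i}|\leq\xi\phi$ holds almost surely by assumption.

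Next I would compute the matrix variance parameter. Using the same conditional independence,
\[
\Ee[\vv_{j,i}\vv_{j,i}\trans]=\Ee_\gamma\bigl[\Ee[\xx_{j,i}\xx_{j,i}\trans\epsilon_{j,i}^2\mid\gamma_j]\bigr]=\Ee_\gamma[\sigmag\Qg],
\]
and $\Ee[\vv_{j,i}\trans\vv_{j,i}]=\trace(\Ee_\gamma[\sigmag\Qg])$. Hence the Bernstein variance is the maximum of $\norm{\sum_i \Ee[\vv_{j,i}\vv_{j,i}\trans]}$ and $\norm{\sum_i \Ee[\vv_{j,i}\trans\vv_{j,i}]}$, both bounded by $N\trace(\Ee_\gamma[\sigmag\Qg])$.

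Plugging these into the rectangular matrix Bernstein inequality yields
\[
\Pe\bigl\{\norm{\XX_{j1}\ee_{j1}}\geq t\bigr\}\leq 2(p+1)\exp\Bigl(-\dfrac{t^2/2}{N\trace(\Ee_\gamma[\sigmag\Qg])+\xi\phi t/3}\Bigr).
\]
Setting the right-hand side equal to $2(p+1)e^{-\omega}$, solving the resulting quadratic for the threshold $t$, and applying the elementary bound $\bigl(\xi\phi\omega+\sqrt{\xi^2\phi^2\omega^2+18N\trace(\Ee_\gamma[\sigmag\Qg])\omega}\bigr)/3 \leq 2\xi\phi\omega/3+\sqrt{2N\trace(\Ee_\gamma[\sigmag\Qg])\omega}$ (identical to the step used at the end of the proof of Lemma~\ref{lem:qgtghpbound}) gives the claimed tail inequality.

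The one delicate point is that $\vv_{j,1},\dots,\vv_{j,N}$ are only conditionally independent given $\gamma_j$, not marginally, whereas the standard matrix Bernstein inequality requires independent summands. The cleanest way to handle this is to apply Bernstein conditional on $\gamma_j$ (where the conditional second moment is $\sigma_{\gamma_j}^2\QQ_{\gamma_j}$) and then pass to the marginal bound in terms of $\Ee_\gamma[\sigmag\Qg]$ via the tower property together with the almost-sure boundedness of $\norm{\sigmag\Qg}$; this conditional-to-marginal step is the single subtle ingredient in the otherwise routine Bernstein calculation.
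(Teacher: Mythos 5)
Your main calculation reproduces the paper's proof: both apply the rectangular matrix Bernstein inequality to $\XX_{j1}\ee_{j1}=\sum_{i=1}^N\xx_{j,i}\epsilon_{j,i}$ with the almost-sure bound $\norm{\xx_{j,i}\epsilon_{j,i}}\leq\xi\phi$, bound the variance parameter by $N\max\bigl(\norm{\Ee_\gamma[\sigmag\Qg]},\trace(\Ee_\gamma[\sigmag\Qg])\bigr)=N\trace(\Ee_\gamma[\sigmag\Qg])$, and then solve the quadratic in $t$ and relax the root exactly as at the end of Lemmas~\ref{lem:qghpbound} and~\ref{lem:qgtghpbound}. Up to that point you have the same argument, at the same level of rigor, as the paper.

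Where you go beyond the paper is the ``delicate point,'' and there your proposed patch does not work as sketched. You are right that $\xx_{j,1}\epsilon_{j,1},\dots,\xx_{j,N}\epsilon_{j,N}$ are i.i.d.\ only conditionally on $\gamma_j$ (the paper simply calls them independent and plugs in marginal expectations). But applying Bernstein conditionally on $\gamma_j$ yields the variance proxy $N\sigma_{\gamma_j}^2\trace(\QQ_j)$, which depends on the realized task and is \emph{not} almost surely dominated by its mean $N\trace(\Ee_\gamma[\sigmag\Qg])$; and the tower property cannot move the expectation over $\gamma_j$ inside the nonlinear tail bound, i.e.\ $\Ee_\gamma\bigl[\exp\bigl\{-\tfrac{t^2/2}{N\sigma_{\gamma}^2\trace(\QQ_\gamma)+\xi\phi t/3}\bigr\}\bigr]$ is not bounded by $\exp\bigl\{-\tfrac{t^2/2}{N\trace(\Ee_\gamma[\sigmag\Qg])+\xi\phi t/3}\bigr\}$ in general, since tasks with above-average $\sigmag\trace(\Qg)$ dominate the average. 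The almost-sure boundedness of $\norm{\sigmag\Qg}$ only lets you replace the variance proxy by a worst-case (supremum) bound, which changes the constant in the threshold from $\trace(\Ee_\gamma[\sigmag\Qg])$ to an essential-supremum quantity; it does not recover the stated bound. (Contrast with Lemma~\ref{lem:covbound}, where the conditional-to-marginal step is harmless because the threshold there is uniform in $\gamma_j$ via $\norm{\QQ_j}\leq\beta$.) So the conditional-then-marginalize route you describe as the ``single subtle ingredient'' is not a valid justification; obtaining the constant $\trace(\Ee_\gamma[\sigmag\Qg])$ would require an extra assumption controlling $\sigmag\trace(\Qg)$ almost surely by its mean, or restating the lemma conditionally on $\gamma_j$ with the task-dependent variance. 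To be fair, the paper's own proof is silent on this point and implicitly makes the same leap, so your core derivation is faithful to it --- but the repair you propose should not be presented as closing that gap.
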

\begin{proof}
Notice that $\XX_{j1}\ee_{j1}=\sum_{i=1}^{N}\xgi\epsilon_{j,i}$ is the sum of $N$ independent, mean zero, random vectors.
From the matrix Bernstein's inequality for rectangular matrices, for any $t\geq 0$,
\begin{align*}
    &\Pe\{\norm{\XX_{j1}\ee_{j1}} \geq t\} \leq 2(p+1)\exp\{-\dfrac{t^2}{s_3^2+\xi\phi t/3}\}
\end{align*}
for
\begin{align*}
    s_3^2 &= \max(\norm{\sum_{i=1}^{N}\Ee[\xgi\epsilon_{j,i}^2\xgi\trans]},\norm{\sum_{i=1}^{N}\Ee[\epsilon_{j,i}\xgi\trans\xgi\epsilon_{j,i}]}) \\
    &\leq N\max(\norm{\Ee_\gamma[\sigmag\Qg]},\norm{\Ee_\gamma[\sigmag\Ee[\xx_{\gamma,i}\trans \xx_{\gamma,i}\mid\gamma]]}) \\
    &= N\max(\norm{\Ee_\gamma[\sigmag\Qg]},\norm{\Ee_\gamma[\sigmag \trace(\Qg)]}) \\
    &= N\max(\norm{\Ee_\gamma[\sigmag\Qg]},\trace(\Ee_\gamma[\sigmag\Qg])) \\
    &= N \trace(\Ee_\gamma[\sigmag\Qg])
\end{align*}
where the last equality follows from the fact that for a symmetric positive semidefinite matrix such as $\Ee_\gamma[\sigmag\Qg]$, the norm is the largest eigenvalue and the trace is the sum of the eigenvalues.
Therefore,
\begin{align*}
    &\Pe\{\norm{\XX_{j1}\ee_{j1}}\geq t\}\leq 2(p+1)\exp\{-\dfrac{t^2/2}{N\trace(\Ee_\gamma[\sigmag\Qg])+\xi\phi t/3}\}.
\end{align*}
Using the same argument as Lemma~\ref{lem:qghpbound}, we have that
\begin{align*}
    &\Pe\{\norm{\XX_{j1}\ee_{j1}}\geq \dfrac{2\xi\phi\omega}{3}+\sqrt{2N\trace(\Ee_\gamma[\sigmag\Qg])\omega}\}\leq 2(p+1)\exp\{-\omega\}.
\end{align*}
\end{proof}

\subsubsection{Proof of Theorem~\ref{thm:drbound}}

\begin{proof}
Let $\XX_j=\begin{bmatrix}\XX_{j1} & \XX_{j2} \end{bmatrix}$ and
$\YY_j = \begin{bmatrix}\YY_{j1}\trans & \YY_{j2}\trans\end{bmatrix}\trans$.
Let $\ee_{j}$ be the $2N$-dimensional vector with entries $\epsilon_{j,i}, i=1,\dots,2N$.
Let the singular value decomposition of $\XX$ be $UDV\trans$.
Then, from Proposition~\ref{prop:drestmamlest} and using the identity $\YY_j=\XX_j\trans\tasktheta+\ee_{j}$,
\begin{align*}
    \drest-\drsol &= (\XX\trans)^+\YY-\drsol \\
    &= (\XX\trans)^+\begin{bmatrix}\vdots \\ \XX_j\trans\tasktheta+\ee_{j} \\ \vdots\end{bmatrix}-\drsol \\
    &= (\XX\trans)^+\begin{bmatrix}\vdots \\  \XX_j\trans(\tasktheta-\drsol)+\ee_{j} \\ \vdots\end{bmatrix}+(\XX\trans)^+\begin{bmatrix}\vdots \\ \XX_j\trans\drsol \\ \vdots\end{bmatrix}-\drsol \\
    &= (\XX\XX\trans)^+\XX\begin{bmatrix}\vdots \\ \XX_j\trans(\tasktheta-\drsol)+\ee_{j} \\ \vdots\end{bmatrix}+(\XX\XX\trans)^+\XX\XX\trans\drsol-\drsol \\
    &= U(DD\trans)^+U\trans\sum_{j=1}^M\left(\XX_j\XX_j\trans(\tasktheta-\drsol)+\XX_j\ee_{j}\right) \\
    &\quad+U(DD\trans)^+(DD\trans)U\trans\drsol-\drsol
\end{align*}
Note that $DD\trans$ is a diagonal matrix of eigenvalues of $\XX\XX\trans$; $(DD\trans)^+$ is a diagonal matrix with entries the reciprocals of the nonzero eigenvalues of $\XX\XX\trans$ and the rest zeros.
Then, $(DD\trans)^+(DD\trans)$ is a diagonal matrix with $r$ ones, where $r$ is the number of nonzero eigenvalues of $\XX\XX\trans$, and $U(DD\trans)^+(DD\trans)U\trans=diag(I_r,0)$. 
Let $\lambda_{min}()$ denote the smallest eigenvalue of a matrix.
Thus,
\begin{align}\label{eq:normdiffdr}
    \begin{split}
    \norm{\drest-\drsol}_2 &\leq \lambda_{min}(\XX\XX\trans)^{-1}\norm{\sum_{j=1}^M(\XX_j\XX_j\trans(\tasktheta-\drsol)+\XX_j\ee_{j})} \\
    &\quad+\norm{\drsol}\mathbb{I}\{\lambda_{min}(\XX\XX\trans)=0\}
    \end{split}
\end{align}
where
\begin{align}\label{eq:xxtmineig}
    & \lambda_{min}(\XX\XX\trans) = \lambda_{min}\left(\sum_{j=1}^M\XX_j\XX_j\trans\right) \notag \\
    &= \lambda_{min}\left(2N\sum_{j=1}^M(\dfrac{\XX_j\XX_j\trans}{2N}-\QQ_j)+2N\sum_{j=1}^M (\QQ_j-\Ee_\gamma[\Qg])+2NM\Ee_\gamma[\Qg]\right) \notag \\
    &\geq 2NM\lambda_{min}(\Ee_\gamma[\Qg])+2N\lambda_{min}(\sum_{j=1}^M (\QQ_j-\Ee_\gamma[\Qg]))+2N\sum_{j=1}^M\lambda_{min}(\dfrac{\XX_j\XX_j\trans}{2N}-\QQ_j) \notag \\
    &\geq 2NM\lambda_{min}(\Ee_\gamma[\Qg])-2N\norm{\sum_{j=1}^M (\QQ_j-\Ee_\gamma[\Qg])}-2N\sum_{j=1}^M\norm{\dfrac{\XX_j\XX_j\trans}{2N}-\QQ_j},
\end{align}
{\small
\begin{align}\label{eq:xynormbound}
    \norm{\sum_{j=1}^M\XX_j(\XX_j\trans(\tasktheta-\drsol)+\ee_{j})} \notag 
    &= 2N\norm{\sum_{j=1}^M(\dfrac{\XX_j\XX_j\trans}{2N}-\QQ_j)(\tasktheta-\drsol)+\sum_{j=1}^M \QQ_j(\tasktheta-\drsol)+\sum_{j=1}^M\frac{\XX_j\ee_j}{2N}} \notag \\
    \begin{split}
    &\leq 2N\sum_{j=1}^M\norm{\dfrac{\XX_j\XX_j\trans}{2N}-\QQ_j}\norm{\tasktheta-\drsol}+2N\norm{\sum_{j=1}^M(\QQ_j\tasktheta-\Ee_\gamma[\Qg\Thetag])} \\
    &\quad+2N\norm{\sum_{j=1}^M(\QQ_j-\Ee_\gamma[\Qg])}\norm{\drsol}+\sum_{j=1}^M\norm{\XX_j\ee_j}.
    \end{split}
\end{align}}

Let $\omega>0$. Consider the following probabilistic events. Using union bound, at least one of them occurs with probability at most $2(2p+1+M+(p+1)M)e^{-\omega}$ 

\begin{itemize}[noitemsep,topsep=1pt,parsep=1pt,partopsep=1pt]
    \item (E1): $\norm{\sum_{j=1}^M(\QQ_j-\Ee_\gamma[\Qg])}\geq \dfrac{2\beta\omega}{3}+\sqrt{2M\norm{\var_\gamma[\Qg]}\omega}$, where $\norm{\Qg}\leq\beta$ with probability $1$.  This event occurs with probability at most $2pe^{-\omega}$, by Lemma~\ref{lem:qghpbound}.
    
    \item (E2): $\norm{\sum_{j=1}^M(\QQ_j\tasktheta-\Ee_\gamma[\Qg\Thetag])}\geq \dfrac{2\beta\eta\omega}{3}+\sqrt{2M\trace(\var_\gamma[\Qg\Thetag])\omega}$. Occurs with probability at most $2(p+1)e^{-\omega}$, by Lemma~\ref{lem:qgtghpbound}.
    
    \item (E3-1, $\dots$, E3-M): For $j=1,\dots,M$, $\norm{\dfrac{\XX_j\XX_j\trans}{2N}-\QQ_j}\geq \beta CK^2(\sqrt{\dfrac{p+\omega}{2N}}+\dfrac{p+\omega}{2N})$, where $\norm{\Qg}\leq \beta$ with probability $1$, the distribution of $\xx_{\gamma,i}$ conditional on $\gamma$ is sub-Gaussian with parameter $K$, and $C$ is a constant. For each $j$, this occurs with probability at most $2e^{-\omega}$, from extending Lemma~\ref{lem:covbound} to $\XX_j$.
    
    \item (E4-1, $\dots$, E4-M):
    For $j=1,\dots,M$, $\norm{\XX_j\ee_j}\geq\dfrac{2\xi\phi\omega}{3}+\sqrt{4N\trace(\Ee_\gamma[\sigmag\Qg])\omega}$, where $\norm{\xx_{\gamma,i}}\leq\xi$ and $\abs{\epsilon_{\gamma,i}}\leq\phi$ with probability $1$. For each $j$, this occurs with probability at most $2(p+1)e^{-\omega}$, from Lemma~\ref{lem:xgegbound} to $\XX_j\ee_j$.
\end{itemize}

From~\eqref{eq:xxtmineig} and~\eqref{eq:xynormbound}, with probability at least $1-2(pM+2M+2p+1)e^{-\omega}$,
{\small \begin{align*}
    \lambda_{min}(\XX\XX\trans) 
    &\geq 2NM\lambda_{min}(\Ee_\gamma[\Qg])-2N(\dfrac{2\beta\omega}{3}+\sqrt{2M\norm{\var_\gamma[\Qg]}\omega}+M(\beta CK^2(\sqrt{\dfrac{p+\omega}{2N}}+\dfrac{p+\omega}{2N})))) \\
    &= 2NM\left(\lambda_{min}(\Ee_\gamma[\Qg])-(\sqrt{\dfrac{2\norm{\var_\gamma[\Qg]}\omega}{M}}+\dfrac{2\beta\omega}{3M})-\beta CK^2(\sqrt{\dfrac{p+\omega}{2N}}+\dfrac{p+\omega}{2N})\right),
\end{align*}}
\begin{align*}
    &\norm{\sum_{j=1}^M(\XX_j\XX_j\trans(\tasktheta-\drsol)+\XX_j\ee_{j})} \\
    &\leq 2NM\tau\beta CK^2(\sqrt{\dfrac{p+\omega}{2N}}+\dfrac{p+\omega}{2N})+2N(\dfrac{2\beta\eta\omega}{3}+\sqrt{2M\trace(\var_\gamma[\Qg\Thetag])\omega}) \\
    &\quad+2N\norm{\drsol}(\dfrac{2\beta\omega}{3}+\sqrt{2M\norm{\var_\gamma[\Qg]}\omega})+M(\dfrac{2\xi\phi\omega}{3}+\sqrt{4N\trace(\Ee_\gamma[\sigmag\Qg])\omega}) \\
    &\leq 2NM\bigg(\tau\beta CK^2(\sqrt{\dfrac{p+\omega}{2N}}+\dfrac{p+\omega}{2N})+(\sqrt{\dfrac{2\trace(\var_\gamma[\Qg\Thetag])\omega}{M}}+\dfrac{2\beta\eta\omega}{3M}) \\
    &\quad\quad\quad\quad\quad+\norm{\drsol}(\sqrt{\dfrac{2\norm{\var_\gamma[\Qg]}\omega}{M}}+\dfrac{2\beta\omega}{3M})+(\sqrt{\dfrac{\trace(\Ee_\gamma[\sigmag\Qg])\omega}{N}}+\dfrac{\xi\phi\omega}{3N})\bigg).
\end{align*}
%where $\norm{\Thetag-\drsol}\leq\tau$ with probability $1$.

Thus, by~\eqref{eq:normdiffdr}, if $\lambda_{min}(\Ee_\gamma[\Qg])-(\sqrt{\dfrac{2\norm{\var_\gamma[\Qg]}\omega}{M}}+\dfrac{2\beta\omega}{3M})-\beta CK^2(\sqrt{\dfrac{p+\omega}{2N}}+\dfrac{p+\omega}{2N})>0$, with probability at least $1-2(pM+2M+2p+1)e^{-\omega}$, $\norm{\drest-\drsol}$ is bounded above by

\begin{equation*}
    \resizebox{\textwidth}{!}{$
    \dfrac{\tau\beta CK^2(\sqrt{\dfrac{p+\omega}{2N}}+\dfrac{p+\omega}{2N})+\sqrt{\dfrac{2\trace(\var_\gamma[\Qg\Thetag])\omega}{M}}+\dfrac{2\beta\eta\omega}{3M}+\norm{\drsol}(\sqrt{\dfrac{2\norm{\var_\gamma[\Qg]}\omega}{M}}+\dfrac{2\beta\omega}{3M})+\sqrt{\dfrac{\trace(\Ee_\gamma[\sigmag\Qg])\omega}{N}}+\dfrac{\xi\phi\omega}{3N}}{\lambda_{min}(\Ee_\gamma[\Qg])-(\sqrt{\dfrac{2\norm{\var_\gamma[\Qg]}\omega}{M}}+\dfrac{2\beta\omega}{3M})-\beta CK^2(\sqrt{\dfrac{p+\omega}{2N}}+\dfrac{p+\omega}{2N})}$}
\end{equation*}

Defining $\delta=2(pM+2M+2p+1)e^{-\omega}$, we let $\omega=\ln(pM+2M+2p+1)-\ln(\delta/2)$.
We have that $c_1(\omega,\norm{\var_\gamma[\Qg]},\trace(\var_\gamma[\Qg\Thetag]),\drsol)=\norm{\drsol}\sqrt{2\norm{\var_\gamma[\Qg]}\omega}+\sqrt{2\trace(\var_\gamma[\Qg\Thetag])\omega}$, $c_2(\omega)=\beta CK^2\sqrt{p+\omega}$, and $c_3(\omega)=\sqrt{\trace(\Ee_\gamma[\sigmag\Qg])\omega}$.

Hence, if $\lambda_{min}(\Ee_\gamma[\Qg])-\tilde{o}(1)>0$, with probability at least $1-\delta$, $\norm{\drest-\drsol}$ is bounded by
\begin{align*}
    &(\lambda_{min}(\Ee_\gamma[\Qg])-\tilde{o}(1))^{-1}(\dfrac{c_1(\omega,\norm{\var_\gamma[\Qg]},\trace(\var_\gamma[\Qg\Thetag]),\drsol)}{\sqrt{M}}+\dfrac{\tau}{\sqrt{2}}\dfrac{c_2(\omega)}{\sqrt{N}}+\dfrac{c_3(\omega)}{\sqrt{N}} \\
    &\hspace{4cm}+\tilde{o}(\dfrac{1}{\sqrt{M}})+\tilde{o}(\dfrac{1}{\sqrt{N}}))
\end{align*}
where $\omega=\ln(pM+2M+2p+1)-\ln(\delta/2)$.
\end{proof}

\subsubsection{Theorem~\ref{thm:mamlbound}}

\begin{proof}
Let $\ee_{j1}$ be the $N$-dimensional vector with entries $\epsilon_{j,i}, i=1,\dots,N$ and $\ee_{j2}$ be the $N$-dimensional vector with entries $\epsilon_{j,i}, i=N+1,\dots,2N$.
Let the SVD of $\WW$ be $UDV\trans$.
Then, from Proposition~\ref{prop:drestmamlest} and using the identities $\YY_{j1}=\XX_{j1}\trans\tasktheta+\ee_{j1}$ and $\YY_{j2}=\XX_{j2}\trans\tasktheta+\ee_{j2}$,
\begin{align*}
    &\mamlest-\mamlsol \\ &= (\WW\trans)^+\ZZ-\mamlsol \\
    &= (\WW\trans)^+\begin{bmatrix}\vdots \\ \XX_{j2}\trans\tasktheta+\ee_{j2}-\alpha\XX_{j2}\trans\XX_{j1}(\XX_{j1}\trans\tasktheta+\ee_{j1})/N \\ \vdots\end{bmatrix}-\mamlsol \\
    &= (\WW\trans)^+\begin{bmatrix}\vdots \\ \XX_{j2}\trans(\II-\dfrac{\alpha}{N}\XX_{j1}\XX_{j1}\trans)(\tasktheta-\mamlsol)+\ee_{j2}-\dfrac{\alpha}{N}\XX_{j2}\trans\XX_{j1}\ee_{j1} \\ \vdots\end{bmatrix} \\
    &\quad+(\WW\trans)^+\begin{bmatrix}\vdots \\ \XX_{j2}\trans(\II-\dfrac{\alpha}{N}\XX_{j1}\XX_{j1}\trans)\mamlsol \\ \vdots\end{bmatrix}-\mamlsol \\
    &= (\WW\WW\trans)^+\WW\begin{bmatrix}\vdots \\ \XX_{j2}\trans(\II-\dfrac{\alpha}{N}\XX_{j1}\XX_{j1}\trans)(\tasktheta-\mamlsol)+\ee_{j2}-\dfrac{\alpha}{N}\XX_{j2}\trans\XX_{j1}\ee_{j1} \\ \vdots\end{bmatrix} \\
    &\quad+(\WW\WW\trans)^+\WW\WW\trans\mamlsol-\mamlsol \\
    &= (UDD\trans U\trans)^+\sum_{j=1}^M((\II-\dfrac{\alpha}{N}\XX_{j1}\XX_{j1}\trans)\XX_{j2}\XX_{j2}\trans(\II-\dfrac{\alpha}{N}\XX_{j1}\XX_{j1}\trans)(\tasktheta-\mamlsol) \\
    &\hspace{3.2cm}+(\II-\dfrac{\alpha}{N}\XX_{j1}\XX_{j1}\trans)(\XX_{j2}\ee_{j2}-\dfrac{\alpha}{N}\XX_{j2}\XX_{j2}\trans\XX_{j1}\ee_{j1})) \\
    &\quad+(UDD\trans U\trans)^+UDD\trans U\trans\mamlsol-\mamlsol
\end{align*}

Note that $DD\trans$ is a diagonal matrix of eigenvalues of $\WW\WW\trans$; $(DD\trans)^+$ is a diagonal matrix with entries the reciprocals of the nonzero eigenvalues of $\WW\WW\trans$ and the rest zeros.
Then, $(DD\trans)^+(DD\trans)$ is a diagonal matrix with $r$ ones, where $r$ is the number of nonzero eigenvalues of $\WW\WW\trans$, and $U(DD\trans)^+(DD\trans)U\trans=diag(I_r,0)$. 
Let $\lambda_{min}()$ denote the smallest eigenvalue of a matrix.
Thus,
\begin{align}\label{eq:normdiffmaml}
    \begin{split}
    &\norm{\mamlest-\mamlsol}  \\
    &\leq \lambda_{min}(\WW\WW\trans)^{-1}\bigg\lVert\sum_{j=1}^M((\II-\dfrac{\alpha}{N}\XX_{j1}\XX_{j1}\trans)\XX_{j2}\XX_{j2}\trans(\II-\dfrac{\alpha}{N}\XX_{j1}\XX_{j1}\trans)(\tasktheta-\mamlsol) \\
    &\hspace{3cm}+(\II-\dfrac{\alpha}{N}\XX_{j1}\XX_{j1}\trans)(\XX_{j2}\ee_{j2}-\dfrac{\alpha}{N}\XX_{j2}\XX_{j2}\trans\XX_{j1}\ee_{j1}))\bigg\rVert\\
    &\quad+\norm{\mamlsol}\mathbb{I}\{\lambda_{min}(\WW\WW\trans)=0\}
    \end{split}
\end{align}
where

\begin{equation}
\begin{aligned}\label{eq:wwtmineig}
    & \lambda_{min}(\WW\WW\trans)  \\
    &= \lambda_{min}\left(\sum_{j=1}^M(\II-\dfrac{\alpha}{N}\XX_{j1}\XX_{j1}\trans)\XX_{j2}\XX_{j2}\trans(\II-\dfrac{\alpha}{N}\XX_{j1}\XX_{j1}\trans)\right) \notag \\
    &\geq\lambda_{min}\bigg(N\sum_{j=1}^M(\II-\alpha \QQ_j)\QQ_j(\II-\alpha \QQ_j)-N\alpha\sum_{j=1}^M(\II-\alpha \QQ_j)\QQ_j(\dfrac{\XX_{j1}\XX_{j1}\trans}{N}-\QQ_j) \notag \\
    &\quad-N\alpha\sum_{j=1}^M(\dfrac{\XX_{j1}\XX_{j1}\trans}{N}-\QQ_j)\QQ_j(\II-\alpha \QQ_j)+N\sum_{j=1}^M(\II-\alpha \QQ_j)(\dfrac{\XX_{j2}\XX_{j2}\trans}{N}-\QQ_j)(\II-\alpha \QQ_j) \notag \\
    &\quad-N\alpha\sum_{j=1}^M(\II-\alpha \QQ_j)(\dfrac{\XX_{j2}\XX_{j2}\trans}{N}-\QQ_j)(\dfrac{\XX_{j1}\XX_{j1}\trans}{N}-\QQ_j) \notag  \\
    &\quad-N\alpha\sum_{j=1}^M(\dfrac{\XX_{j1}\XX_{j1}\trans}{N}-\QQ_j)(\dfrac{\XX_{j2}\XX_{j2}\trans}{N}-\QQ_j)(\II-\alpha \QQ_j) \notag \\
    &\quad+N\alpha^2\sum_{j=1}^M(\dfrac{\XX_{j1}\XX_{j1}\trans}{N}-\QQ_j)\dfrac{\XX_{j2}\XX_{j2}\trans}{N}(\dfrac{\XX_{j1}\XX_{j1}\trans}{N}-\QQ_j) \bigg)\notag \\
    &\geq MN\lambda_{min}(\Ee_\gamma[(\II-\alpha\Qg)\Qg(\II-\alpha\Qg)]) \\
    &\quad-N\norm{\sum_{j=1}^M((\II-\alpha \QQ_j)\QQ_j(\II-\alpha \QQ_j)-\Ee_\gamma[(\II-\alpha\Qg)\Qg(\II-\alpha\Qg)])} \\
    &\quad-2N\alpha\sum_{j=1}^M\norm{\II-\alpha \QQ_j}\norm{\QQ_j}\norm{\dfrac{\XX_{j1}\XX_{j1}\trans}{N}-\QQ_j} \\
    &\quad-N\sum_{j=1}^M\norm{\II-\alpha \QQ_j}^2\norm{\dfrac{\XX_{j2}\XX_{j2}\trans}{N}-\QQ_j} \\
    &\quad-2N\alpha\sum_{j=1}^M\norm{\II-\alpha \QQ_j}\norm{\dfrac{\XX_{j1}\XX_{j1}\trans}{N}-\QQ_j}\norm{\dfrac{\XX_{j2}\XX_{j2}\trans}{N}-\QQ_j}
\end{aligned}
\end{equation}

and
\begin{align}\label{eq:wznormbound}
    &\bigg\lVert\sum_{j=1}^M((\II-\dfrac{\alpha}{N}\XX_{j1}\XX_{j1}\trans)\XX_{j2}\XX_{j2}\trans(\II-\dfrac{\alpha}{N}\XX_{j1}\XX_{j1}\trans)(\tasktheta-\mamlsol) \notag \\
    &\quad\quad\quad+(\II-\dfrac{\alpha}{N}\XX_{j1}\XX_{j1}\trans)(\XX_{j2}\ee_{j2}-\dfrac{\alpha}{N}\XX_{j2}\XX_{j2}\trans\XX_{j1}\ee_{j1}))\bigg\rVert \notag \\
    \iffalse
    &=\bigg\lVert N\sum_{j=1}^M((\II-\dfrac{\alpha}{N}\XX_{j1}\XX_{j1}\trans)\dfrac{\XX_{j2}\XX_{j2}\trans}{N}(\II-\dfrac{\alpha}{N}\XX_{j1}\XX_{j1}\trans)-(\II-\alpha \QQ_j)\QQ_j(\II-\alpha \QQ_j))(\tasktheta-\mamlsol) \notag \\
    &\quad\quad+N\sum_{j=1}^M((\II-\alpha \QQ_j)\QQ_j(\II-\alpha \QQ_j)\tasktheta-\Ee_\gamma[(\II-\alpha\Qg)\Qg(\II-\alpha\Qg)\Thetag]) \notag \\
    &\quad\quad-N\sum_{j=1}^M((\II-\alpha \QQ_j)\QQ_j(\II-\alpha \QQ_j)-\Ee_\gamma[(\II-\alpha\Qg)\Qg(\II-\alpha\Qg)])\mamlsol \notag \\
    &\quad\quad+\sum_{j=1}^M(\II-\alpha \QQ_j)(\XX_{j2}\ee_{j2}-(\dfrac{\alpha}{N}\XX_{j2}\XX_{j2}\trans-\alpha \QQ_j+\alpha \QQ_j)\XX_{j1}\ee_{j1}) \notag \\
    &\quad\quad-\alpha\sum_{j=1}^M(\dfrac{\XX_{j1}\XX_{j1}\trans}{N}-\QQ_j)(\XX_{j2}\ee_{j2}-(\dfrac{\alpha}{N}\XX_{j2}\XX_{j2}\trans-\alpha \QQ_j+\alpha \QQ_j)\XX_{j1}\ee_{j1})\bigg\rVert \notag \\
    \fi
    \begin{split}
    &\leq N\norm{\sum_{j=1}^M((\II-\alpha \QQ_j)\QQ_j(\II-\alpha \QQ_j)\tasktheta-\Ee_\gamma[(\II-\alpha\Qg)\Qg(\II-\alpha\Qg)\Thetag])} \\
    &\quad+N\norm{\sum_{j=1}^M((\II-\alpha \QQ_j)\QQ_j(\II-\alpha \QQ_j)-\Ee_\gamma[(\II-\alpha\Qg)\Qg(\II-\alpha\Qg)])}\norm{\mamlsol} \\
    &\quad+2N\alpha\sum_{j=1}^M\norm{\II-\alpha \QQ_j}\norm{\QQ_j}\norm{\dfrac{\XX_{j1}\XX_{j1}\trans}{N}-\QQ_j}\norm{\tasktheta-\mamlsol} \\
    &\quad+N\sum_{j=1}^M\norm{\II-\alpha \QQ_j}^2\norm{\dfrac{\XX_{j2}\XX_{j2}\trans}{N}-\QQ_j}\norm{\tasktheta-\mamlsol}  \\
    &\quad+2N\alpha\sum_{j=1}^M\norm{\II-\alpha \QQ_j}\norm{\dfrac{\XX_{j1}\XX_{j1}\trans}{N}-\QQ_j}\norm{\dfrac{\XX_{j2}\XX_{j2}\trans}{N}-\QQ_j}\norm{\tasktheta-\mamlsol} \\
    &\quad+N\alpha^2\sum_{j=1}^M\norm{\QQ_j}\norm{\dfrac{\XX_{j1}\XX_{j1}\trans}{N}-\QQ_j}^2\norm{\tasktheta-\mamlsol} \\
    &\quad+N\alpha^2\sum_{j=1}^M\norm{\dfrac{\XX_{j1}\XX_{j1}\trans}{N}-\QQ_j}^2\norm{\dfrac{\XX_{j2}\XX_{j2}\trans}{N}-\QQ_j}\norm{\tasktheta-\mamlsol} \\
    &\quad+\sum_{j=1}^M\norm{\II-\alpha \QQ_j}\norm{\XX_{j2}\ee_{j2}}+\alpha\sum_{j=1}^M\norm{\II-\alpha \QQ_j}\norm{\dfrac{\XX_{j2}\XX_{j2}\trans}{N}-\QQ_j}\norm{\XX_{j1}\ee_{j1}} \\
    &\quad+\alpha\sum_{j=1}^M\norm{\II-\alpha \QQ_j}\norm{\QQ_j}\norm{\XX_{j1}\ee_{j1}}+\alpha\sum_{j=1}^M\norm{\dfrac{\XX_{j1}\XX_{j1}\trans}{N}-\QQ_j}\norm{\XX_{j2}\ee_{j2}} \\
    &\quad+\alpha^2\sum_{j=1}^M\norm{\dfrac{\XX_{j1}\XX_{j1}\trans}{N}-\QQ_j}\norm{\dfrac{\XX_{j2}\XX_{j2}\trans}{N}-\QQ_j}\norm{\XX_{j1}\ee_{j1}} 
    +\alpha^2\sum_{j=1}^M\norm{\dfrac{\XX_{j1}\XX_{j1}\trans}{N}-\QQ_j}\norm{\QQ_j}\norm{\XX_{j1}\ee_{j1}}
    \end{split}
\end{align}

Let $\omega>0$. Consider the following probabilistic events:
\begin{itemize}
    \item (E1): $\norm{\sum_{j=1}^M((\II-\alpha \QQ_j)\QQ_j(\II-\alpha \QQ_j)-\Ee_\gamma[(\II-\alpha\Qg)\Qg(\II-\alpha\Qg)])}\geq\dfrac{2\beta\mu^2\omega}{3}+\sqrt{2M\norm{\var_\gamma[\SSS_\gamma]}\omega}$, where $\norm{\Qg}\leq\beta$ and $\norm{\II-\alpha\Qg}\leq\mu$ with probability $1$.
    
    This occurs with probability at most $2pe^{-\omega}$, from the remark after Lemma~\ref{lem:qghpbound}.
    
    \item (E2): $\norm{\sum_{j=1}^M((\II-\alpha \QQ_j)\QQ_j(\II-\alpha \QQ_j)\tasktheta-\Ee_\gamma[(\II-\alpha\Qg)\Qg(\II-\alpha\Qg)\Thetag])}\geq\dfrac{2\beta\mu^2\eta\omega}{3}+\sqrt{2M\trace(\var_\gamma[\SSS_\gamma\Thetag])\omega}$ where $\norm{\Qg}\leq\beta$, $\norm{\II-\alpha\Qg}\leq\mu$, and $\norm{\Thetag}\leq\eta$ with probability $1$.
    
    This occurs with probability at most $2(p+1)e^{-\omega}$, from the remark after Lemma~\ref{lem:qgtghpbound}.
    
    \item (E3-1, $\dots$, E3-2M): For $j=1,\dots,M$, $\norm{\dfrac{\XX_{j1}\XX_{j1}\trans}{N}-\QQ_j}\geq \beta CK^2(\sqrt{\dfrac{p+\omega}{N}}+\dfrac{p+\omega}{N})$ and $\norm{\dfrac{\XX_{j2}\XX_{j2}\trans}{N}-\QQ_j}\geq \beta CK^2(\sqrt{\dfrac{p+\omega}{N}}+\dfrac{p+\omega}{N})$, where $\norm{\Qg}\leq \beta$ with probability $1$, the distribution of $\xx_{\gamma,i}$ conditional on $\gamma$ is sub-Gaussian with parameter $K$, and $C$ is a constant.
    
    Each of the $2M$ events occurs with probability at most $2e^{-\omega}$, from Lemma~\ref{lem:covbound}.
    
    \item (E4-1, $\dots$, E4-2M):
    For $j=1,\dots,M$, $\norm{\XX_{j1}\ee_{j1}}\geq\dfrac{2\xi\phi\omega}{3}+\sqrt{2N\trace(\Ee_\gamma[\sigmag\Qg])\omega}$ and $\norm{\XX_{j2}\ee_{j2}}\geq\dfrac{2\xi\phi\omega}{3}+\sqrt{2N\trace(\Ee_\gamma[\sigmag\Qg])\omega}$, where $\norm{\xx_{\gamma,i}}\leq\xi$ and $\abs{\epsilon_{\gamma,i}}\leq\phi$ with probability $1$.
    
    Each of the $2M$ events occurs with probability at most $2(p+1)e^{-\omega}$, from Lemma~\ref{lem:xgegbound}.
\end{itemize}
From the union bound, at least one of the events (E1), (E2), (E3-1), $\dots$, (E3-2M), (E4-1), $\dots$, (E4-2M) occurs with probability at most $2(2p+1+2M+2M(p+1))e^{-\omega}$.
That is, none of the events occur with probability at least $1-2(2pM+4M+2p+1)e^{-\omega}$.

From~\eqref{eq:wwtmineig} and~\eqref{eq:wznormbound}, with probability at least $1-2(2pM+4M+2p+1)e^{-\omega}$,
\begin{align*}
    &\lambda_{min}(\WW\WW\trans) \\
    &\geq MN\lambda_{min}(\Ee_\gamma[(\II-\alpha\Qg)\Qg(\II-\alpha\Qg)])-N(\dfrac{2\beta\mu^2\omega}{3}+\sqrt{2M\norm{\var_\gamma[\SSS_\gamma]}\omega}) \\
    &\quad-2\alpha MN\beta^2\mu CK^2(\sqrt{\dfrac{p+\omega}{N}}+\dfrac{p+\omega}{N})-MN\beta\mu^2CK^2(\sqrt{\dfrac{p+\omega}{N}}+\dfrac{p+\omega}{N}) \\
    &\quad-2\alpha MN\beta^2\mu C^2K^4(\sqrt{\dfrac{p+\omega}{N}}+\dfrac{p+\omega}{N})^2 \\
    &=MN\bigg(\lambda_{min}(\Ee_\gamma[\SSS_\gamma])-(\sqrt{\dfrac{2\norm{\var_\gamma[\SSS_\gamma]}\omega}{M}}+\dfrac{2\beta\mu^2\omega}{3M}) \\
    &\quad\quad\quad\quad-(2\alpha\beta^2\mu +\beta\mu^2)CK^2(\sqrt{\dfrac{p+\omega}{N}}+\dfrac{p+\omega}{N})-2\alpha\beta^2\mu C^2K^4(\sqrt{\dfrac{p+\omega}{N}}+\dfrac{p+\omega}{N})^2\bigg)
\end{align*}
and
\begin{align*}
    &\lVert\sum_{j=1}^M((\II-\dfrac{\alpha}{N}\XX_{j1}\XX_{j1}\trans)\XX_{j2}\XX_{j2}\trans(\II-\dfrac{\alpha}{N}\XX_{j1}\XX_{j1}\trans)(\tasktheta-\mamlsol) \notag \\
    &\quad\quad\quad+(\II-\dfrac{\alpha}{N}\XX_{j1}\XX_{j1}\trans)(\XX_{j2}\ee_{j2}-\dfrac{\alpha}{N}\XX_{j2}\XX_{j2}\trans\XX_{j1}\ee_{j1}))\bigg\rVert \\
    \iffalse
    &\leq N(\dfrac{2\beta\mu^2\eta\omega}{3}+\sqrt{2M\trace(\var_\gamma[\SSS_\gamma\Thetag])\omega})+N(\dfrac{2\beta\mu^2\omega}{3}+\sqrt{2M\norm{\var_\gamma[\SSS_\gamma]}\omega})\norm{\mamlsol} \\
    &\quad+2\alpha MN\tau'\beta^2\mu CK^2(\sqrt{\dfrac{p+\omega}{N}}+\dfrac{p+\omega}{N})+MN\tau'\beta\mu^2 CK^2(\sqrt{\dfrac{p+\omega}{N}}+\dfrac{p+\omega}{N}) \\
    &\quad+2\alpha MN\tau'\beta^2\mu C^2K^4(\sqrt{\dfrac{p+\omega}{N}}+\dfrac{p+\omega}{N})^2+\alpha^2 MN\tau'\beta^3C^2K^4(\sqrt{\dfrac{p+\omega}{N}}+\dfrac{p+\omega}{N})^2 \\
    &\quad+\alpha^2 MN\tau'\beta^3C^3K^6(\sqrt{\dfrac{p+\omega}{N}}+\dfrac{p+\omega}{N})^3+M\mu(\dfrac{2\xi\phi\omega}{3}+\sqrt{2N\trace(\Ee_\gamma[\sigmag\Qg])\omega}) \\
    &\quad+\alpha M\beta\mu CK^2(\sqrt{\dfrac{p+\omega}{N}}+\dfrac{p+\omega}{N})(\dfrac{2\xi\phi\omega}{3}+\sqrt{2N\trace(\Ee_\gamma[\sigmag\Qg])\omega}) \\
    &\quad+\alpha M\beta\mu(\dfrac{2\xi\phi\omega}{3}+\sqrt{2N\trace(\Ee_\gamma[\sigmag\Qg])\omega}) \\
    &\quad+\alpha M\beta CK^2(\sqrt{\dfrac{p+\omega}{N}}+\dfrac{p+\omega}{N})(\dfrac{2\xi\phi\omega}{3}+\sqrt{2N\trace(\Ee_\gamma[\sigmag\Qg])\omega}) \\
    &\quad+\alpha^2 M\beta^2C^2K^4(\sqrt{\dfrac{p+\omega}{N}}+\dfrac{p+\omega}{N})^2(\dfrac{2\xi\phi\omega}{3}+\sqrt{2N\trace(\Ee_\gamma[\sigmag\Qg])\omega}) \\
    &\quad+\alpha^2 M\beta^2CK^2(\sqrt{\dfrac{p+\omega}{N}}+\dfrac{p+\omega}{N})(\dfrac{2\xi\phi\omega}{3}+\sqrt{2N\trace(\Ee_\gamma[\sigmag\Qg])\omega}) \\
    \fi
    &\leq MN\bigg((\sqrt{\dfrac{2\trace(\var_\gamma[\SSS_\gamma\Thetag])\omega}{M}}+\dfrac{2\beta\mu^2\eta\omega}{3M})+\norm{\mamlsol}(\sqrt{\dfrac{2\norm{\var_\gamma[\SSS_\gamma]}\omega}{M}}+\dfrac{2\beta\mu^2\omega}{3M}) \\
    &\quad\quad\quad\quad+(2\alpha\beta+\mu)\tau'\beta\mu CK^2(\sqrt{\dfrac{p+\omega}{N}}+\dfrac{p+\omega}{N})+(2\mu+\alpha\beta)\alpha\tau'\beta^2C^2K^4(\sqrt{\dfrac{p+\omega}{N}})^2 \\
    &\quad\quad\quad\quad+(1+\alpha\beta)\mu(\sqrt{\dfrac{2\trace(\Ee_\gamma[\sigmag\Qg])\omega}{N}}+\dfrac{2\xi\phi\omega}{3N}) \\
    &\quad\quad\quad\quad+(\mu+1+\alpha\beta)\alpha\beta CK^2\sqrt{\dfrac{p+\omega}{N}}\sqrt{\dfrac{2\trace(\Ee_\gamma[\sigmag\Qg])\omega}{N}}+o(\dfrac{1}{N})\bigg)
\end{align*}
where $\norm{\Thetag-\mamlsol}\leq\tau'$ with probability $1$.

Thus, by~\eqref{eq:normdiffmaml}, if 
\begin{align*}
    &\lambda_{min}(\Ee_\gamma[\SSS_\gamma])-(\sqrt{\dfrac{2\norm{\var_\gamma[\SSS_\gamma]}\omega}{M}}+\dfrac{2\beta\mu^2\omega}{3M}) \\
    &\quad\quad-(2\alpha\beta^2\mu +\beta\mu^2)CK^2(\sqrt{\dfrac{p+\omega}{N}}+\dfrac{p+\omega}{N})-2\alpha\beta^2\mu C^2K^4(\sqrt{\dfrac{p+\omega}{N}}+\dfrac{p+\omega}{N})^2>0,
\end{align*} 
with probability at least $1-2(2pM+4M+2p+1)e^{-\omega}$, $\norm{\mamlest-\mamlsol}$ is bounded above by
\begin{align*}
    &(\lambda_{min}(\Ee_\gamma[\SSS_\gamma])-o(1))^{-1}\bigg(\sqrt{\dfrac{2\trace(\var_\gamma[\SSS_\gamma\Thetag])\omega}{M}}+\norm{\mamlsol}\sqrt{\dfrac{2\norm{\var_\gamma[\SSS_\gamma]}\omega}{M}} \\
    &\quad+(2\alpha\beta+\mu)\tau'\beta\mu CK^2\sqrt{\dfrac{p+\omega}{N}}+(1+\alpha\beta)\mu\sqrt{\dfrac{2\trace(\Ee_\gamma[\sigmag\Qg])\omega}{N}}+\dfrac{2\beta\mu^2\eta\omega}{3M} \\
    &\quad+\norm{\mamlsol}\dfrac{2\beta\mu^2\omega}{3M}+((2\alpha\beta+\mu)\mu+(2\mu+\alpha\beta)\alpha\beta CK^2)\tau'\beta CK^2\dfrac{p+\omega}{N} \\
    &\quad+(1+\alpha\beta)\mu\dfrac{2\xi\phi\omega}{3N}+(\mu+1+\alpha+\beta)\alpha\beta CK^2\dfrac{\sqrt{p+\omega}\sqrt{2\trace(\Ee_\gamma[\sigmag\Qg])\omega}}{N}+o(\dfrac{1}{N})\bigg)
\end{align*}

Setting $\delta=2(2pM+4M+2p+1)e^{-\omega}$, we obtain $\omega=\ln(2pM+4M+2p+1)-\ln(\delta/2)$.

Thus, if $\lambda_{min}(\Ee_\gamma[\SSS_\gamma])-\tilde{o}(1)>0$, with probability at least $1-\delta$, $\norm{\mamlest-\mamlsol}$ is bounded above by
\begin{align*}
    &(\lambda_{min}(\Ee_\gamma[\SSS_\gamma])-\tilde{o}(1))^{-1}\bigg(\dfrac{c_1(\omega,\norm{\var_\gamma[\SSS_\gamma]},\trace(\var_\gamma[\SSS_\gamma\Thetag]),\mamlsol)}{\sqrt{M}} \\
    &\hspace{3.5cm}+\dfrac{(2\alpha\beta+\mu)\mu\tau'c_2(\omega)+\sqrt{2}(1+\alpha\beta)\mu c_3(\omega)}{\sqrt{N}}+\tilde{o}(\dfrac{1}{\sqrt{M}})+\tilde{o}(\dfrac{1}{\sqrt{N}})\bigg)
\end{align*}

Note that $\norm{
\II-\alpha\Qg}\leq 1+\alpha\beta$, so we can replace $\mu$ in the above bound by $1+\alpha\beta$ to get the desired result.

\end{proof}

\subsection{Theorem~\ref{thm:loss}}

The precise form of the losses are as follows.
For arbitrary $\theta$,
\begin{align*}
    \Ee_\gamma[\lossgt] &= \dfrac12\btheta\trans\Ee_\gamma[\Qg]\btheta-\Ee_\gamma[\Qg\Thetag]\trans\btheta+\dfrac12\Ee_\gamma[\Thetag\trans\Qg\Thetag]+\dfrac12\Ee_\gamma[\sigmag] \\
    \Ee_\gamma[\Ee_{\oO_\gamma}[\rR(\tilde{\btheta}_\gamma(\alpha);\gamma)]] &= \dfrac12\btheta\trans\Ee_\gamma[\Ag]\btheta-\Ee_\gamma[\Ag\Thetag]\trans\btheta+\dfrac12\Ee_\gamma[\Thetag\trans\Ag\Thetag] \\
    &\quad+\dfrac12\Ee_\gamma[\sigmag]+\dfrac12\dfrac{\alpha^2}{N}\Ee_\gamma[\sigmag \trace(\Qg^2)] \\
    \text{where}\quad\Ag &= (\II-\alpha\Qg)\Qg(\II-\alpha\Qg)+\dfrac{\alpha^2}{N}(\Ee[\xx_{\gamma,i}\xx_{\gamma,i}\trans\Qg \xx_{\gamma,i}\xx_{\gamma,i}\trans]-\Qg^3)
\end{align*}

\begin{proof}
The result for $\Ee_\gamma[\lossgt]$ follows from the proof of Equation~\ref{eq:drmamlsols}.
$\Ee_\gamma[\lossgt]$ is minimized by $\drsol$ by definition.

Let $\XX_\gamma$ be the $p\times N$ matrix with columns $\xx_{\gamma,i}$ and $\YY_\gamma$ be the $N$-dimensional vector with entries $y_{\gamma,i}$. 
Recalling the definition of $\tilde{\btheta}_\gamma(\alpha)$, from~\eqref{eq:lossfunc}
\begin{align*}
    \rR(\tilde{\btheta}_\gamma(\alpha);\gamma) &= \rR(\btheta-\dfrac{\alpha}{N}(\XX_\gamma\XX_\gamma\trans\btheta-\XX_\gamma\YY_\gamma);\gamma) = \rR((\II-\dfrac{\alpha}{N}\XX_\gamma\XX_\gamma\trans)\btheta+\dfrac{\alpha}{N}\XX_\gamma\YY_\gamma);\gamma) \\
    &= \dfrac12((\II-\dfrac{\alpha}{N}\XX_\gamma\XX_\gamma\trans)\btheta+\dfrac{\alpha}{N}\XX_\gamma\YY_\gamma))\trans\Qg((\II-\dfrac{\alpha}{N}\XX_\gamma\XX_\gamma\trans)\btheta+\dfrac{\alpha}{N}\XX_\gamma\YY_\gamma))\\
    &\quad-\Thetag\trans\Qg((\II-\dfrac{\alpha}{N}\XX_\gamma\XX_\gamma\trans)\btheta+\dfrac{\alpha}{N}\XX_\gamma\YY_\gamma))+\dfrac12\Thetag\trans\Qg\Thetag+\dfrac12\sigmag \\
    &=\dfrac12\btheta\trans(\II-\dfrac{\alpha}{N}\XX_\gamma\XX_\gamma\trans)\Qg(\II-\dfrac{\alpha}{N}\XX_\gamma\XX_\gamma\trans)\btheta+\btheta\trans(\II-\dfrac{\alpha}{N}\XX_\gamma\XX_\gamma\trans)\Qg\dfrac{\alpha}{N}\XX_\gamma\YY_\gamma \\
    &\quad+\dfrac12\dfrac{\alpha^2}{N^2}\YY_\gamma\trans\XX_\gamma\trans\Qg\XX_\gamma\YY_\gamma-\Thetag\trans\Qg(\II-\dfrac{\alpha}{N}\XX_\gamma\XX_\gamma\trans)\btheta-\dfrac{\alpha}{N}\Thetag\trans\Qg\XX_\gamma\YY_\gamma \\
    &\quad+\dfrac12\Thetag\trans\Qg\Thetag+\dfrac12\sigmag
\end{align*}

First take the expectation with respect to $\YY_\gamma$.
Let $\ee_\gamma$ be the $N$-dimensional vector with entries $\epsilon_{\gamma,i}$.
\begin{align*}
    \Ee_{\YY_\gamma}[\rR(\tilde{\btheta}_\gamma(\alpha);\gamma)] &= \dfrac12\btheta\trans(\II-\dfrac{\alpha}{N}\XX_\gamma\XX_\gamma\trans)\Qg(\II-\dfrac{\alpha}{N}\XX_\gamma\XX_\gamma\trans)\btheta+\btheta\trans(\II-\dfrac{\alpha}{N}\XX_\gamma\XX_\gamma\trans)\Qg\dfrac{\alpha}{N}\XX_\gamma\XX_\gamma\trans\Thetag \\
    &\quad+\dfrac12\dfrac{\alpha^2}{N^2}\Ee_{\ee_\gamma}[(\Thetag\trans\XX_\gamma\XX_\gamma\trans+\ee_\gamma\trans\XX_\gamma\trans)\Qg(\XX_\gamma\XX_\gamma\trans\Thetag+\XX_\gamma\ee_\gamma)] \\
    &\quad-\Thetag\trans\Qg(\II-\dfrac{\alpha}{N}\XX_\gamma\XX_\gamma\trans)\btheta-\dfrac{\alpha}{N}\Thetag\trans\Qg\XX_\gamma\XX_\gamma\trans\Thetag+\dfrac12\Thetag\trans\Qg\Thetag+\dfrac12\sigmag \\
    &=\dfrac12\btheta\trans(\II-\dfrac{\alpha}{N}\XX_\gamma\XX_\gamma\trans)\Qg(\II-\dfrac{\alpha}{N}\XX_\gamma\XX_\gamma\trans)\btheta \\
    &\quad-\btheta\trans(\II-\dfrac{\alpha}{N}\XX_\gamma\XX_\gamma\trans)\Qg(\II-\dfrac{\alpha}{N}\XX_\gamma\XX_\gamma\trans)\Thetag \\
    &\quad+\dfrac12\Thetag\trans(\II-\dfrac{\alpha}{N}\XX_\gamma\XX_\gamma\trans)\Qg(\II-\dfrac{\alpha}{N}\XX_\gamma\XX_\gamma\trans)\Thetag+\dfrac12\sigmag+\dfrac12\dfrac{\alpha^2}{N^2}\sigmag \trace(\XX_\gamma\XX_\gamma\trans\Qg)
\end{align*}

Using the linearity of trace and expectation, as well as the cyclic property of trace,
\begin{align*}
    \Ee_{\ee_\gamma}[\ee_\gamma\trans\XX_\gamma\trans\Qg\XX_\gamma\ee_\gamma] &= \Ee_{\ee_\gamma}[\trace(\XX_\gamma\trans\Qg\XX_\gamma\ee_\gamma\ee_\gamma\trans)] = \trace(\XX_\gamma\trans\Qg\XX_\gamma\Ee_{\ee_\gamma}[\ee_\gamma\ee_\gamma\trans]) %\\
    %&= \trace(\XX_\gamma\trans\Qg\XX_\gamma\sigmag \II) 
    = \sigmag \trace(\XX_\gamma\XX_\gamma\trans\Qg)
\end{align*}

Using the previous equality,
\begin{align*}
    &\Ee_{\oO_\gamma}[\rR(\tilde{\btheta}_\gamma(\alpha);\gamma)] = \Ee_{\XX_\gamma}[\Ee_{\YY_\gamma}[\rR(\tilde{\btheta}_\gamma(\alpha);\gamma)]] \\
    &= \dfrac12\btheta\trans\Ee_{\XX_\gamma}[(\II-\dfrac{\alpha}{N}\XX_\gamma\XX_\gamma\trans)\Qg(\II-\dfrac{\alpha}{N}\XX_\gamma\XX_\gamma\trans)]\btheta 
    -\btheta\trans\Ee_{\XX_\gamma}[(\II-\dfrac{\alpha}{N}\XX_\gamma\XX_\gamma\trans)\Qg(\II-\dfrac{\alpha}{N}\XX_\gamma\XX_\gamma\trans)]\Thetag \\
    &\quad+\dfrac12\Thetag\trans\Ee_{\XX_\gamma}[(\II-\dfrac{\alpha}{N}\XX_\gamma\XX_\gamma\trans)\Qg(\II-\dfrac{\alpha}{N}\XX_\gamma\XX_\gamma\trans)]\Thetag 
    +\dfrac12\sigmag+\dfrac12\dfrac{\alpha^2}{N^2}\sigmag\Ee_{\XX_\gamma}[\trace(\XX_\gamma\XX_\gamma\trans\Qg)]
\end{align*}

We compute
\begin{align*}
    &\Ee_{\XX_\gamma}[(\II-\dfrac{\alpha}{N}\XX_\gamma\XX_\gamma\trans)\Qg(\II-\dfrac{\alpha}{N}\XX_\gamma\XX_\gamma\trans)] \\
    &=\Qg-2\alpha\Qg^2+\dfrac{\alpha^2}{N^2}\Ee_{\XX_\gamma}[\sum_{i=1}^N\sum_{j=1}^N \xx_{\gamma,i}\xx_{\gamma,i}\trans\Qg \xx_{\gamma,j}\xx_{\gamma,j}\trans] \\
    &=\Qg-2\alpha\Qg^2+\dfrac{\alpha^2}{N^2}\left(N^2\Qg^3+\sum_{i=1}^N(\Ee_{\xx_{\gamma,i}}[\xx_{\gamma,i}\xx_{\gamma,i}\trans\Qg \xx_{\gamma,i}\xx_{\gamma,i}\trans]-\Qg^3)\right) \\
    &= (\II-\alpha\Qg)\Qg(\II-\alpha\Qg)+\dfrac{\alpha^2}{N}(\Ee_{\xx_{\gamma,i}}[\xx_{\gamma,i}\xx_{\gamma,i}\trans\Qg \xx_{\gamma,i}\xx_{\gamma,i}\trans]-\Qg^3) \equiv \Ag
\end{align*}
and
\begin{align*}
    \Ee_{\XX_\gamma}[\trace(\XX_\gamma\XX_\gamma\trans\Qg)] &= \trace(\Ee_{\XX_\gamma}[\XX_\gamma\XX_\gamma\trans]\Qg) = N \trace(\Qg^2)
\end{align*}

Combining the previous two expressions,
\begin{align*}
    \Ee_{\oO_\gamma}[\rR(\tilde{\btheta}_\gamma(\alpha);\gamma)] &=\dfrac12\btheta\trans\Ag\btheta-\btheta\trans\Ag\Thetag+\dfrac12\Thetag\trans\Ag\Thetag+\dfrac12\sigmag+\dfrac12\dfrac{\alpha^2}{N}\sigmag \trace(\Qg^2)
\end{align*}
Taking the expectation with respect to $\gamma$ gives
\begin{align*}
    \Ee_\gamma[\Ee_{\oO_\gamma}[\rR(\tilde{\btheta}_\gamma(\alpha);\gamma)]] &=\dfrac12\btheta\trans\Ee_\gamma[\Ag]\btheta-\btheta\trans\Ee_\gamma[\Ag\Thetag]+\dfrac12\Ee_\gamma[\Thetag\trans\Ag\Thetag] \\
    &\quad+\dfrac12\sigmag+\dfrac12\dfrac{\alpha^2}{N}\Ee_\gamma[\sigmag \trace(\Qg^2)]
\end{align*}

Similar to the derivation of~\eqref{eq:drmamlsols}, $\Ee_\gamma[\Ee_{\oO_\gamma}[\rR(\tilde{\btheta}_\gamma(\alpha);\gamma)]]$ is minimized by $\Ee_\gamma[\Ag]^{-1}\Ee_\gamma[\Ag\Thetag]$.

From now on, assume $N\to\infty$.
$\Ag\to(\II-\alpha\Qg)\Qg(\II-\alpha\Qg)=\SSS_\gamma(\alpha)$, so $\Ee_\gamma[\Ag]^{-1}\Ee_\gamma[\Ag\Thetag]\to\mamlsol=\Ee_\gamma[\SSS_\gamma(\alpha)]^{-1}\Ee_\gamma[\SSS_\gamma(\alpha)\Thetag]$.
The expected loss after adaptation of $\mamlsol$ is given by
\begin{align}\label{eq:mamlsolloss}
    &\dfrac12\mamlsol\trans\Ee_\gamma[\SSS_\gamma(\alpha)]\mamlsol-\Ee_\gamma[\SSS_\gamma(\alpha)\Thetag]\trans\mamlsol+\dfrac12\Ee_\gamma[\Thetag\trans \SSS_\gamma(\alpha)\Thetag] \notag \\
    &\quad+\dfrac12\Ee_\gamma[\sigmag] \notag \\
    &=\dfrac12\Ee_\gamma[\Thetag\trans \SSS_\gamma(\alpha)\Thetag]-\dfrac12\Ee_\gamma[\SSS_\gamma(\alpha)\Thetag]\trans\Ee_\gamma[\SSS_\gamma(\alpha)]^{-1}\Ee_\gamma[\SSS_\gamma(\alpha)\Thetag]+\dfrac12\Ee_\gamma[\sigmag]
\end{align}
For $\alpha=0$, \eqref{eq:mamlsolloss} is equal to the expected loss before adaptation of $\drsol$, $\Ee_\gamma[\rR(\drsol;\gamma)]$, i.e. the minimum possible loss before adaptation.
Therefore, to show that for $0<\alpha\leq 1/\beta$, the expected loss after adaptation of $\mamlsol$ is at most $\Ee_\gamma[\rR(\drsol;\gamma)]$, it suffices to show that \eqref{eq:mamlsolloss} is nonincreasing in $\alpha$ on $[0,1/\beta]$.
We do so by computing its derivative with respect to $\alpha$ and showing that it is nonpositive on $[0,1/\beta]$.

Using the chain rule of matrix calculus,
\begin{align*}
    \dfrac{d\eqref{eq:mamlsolloss}}{d\alpha} &= \alpha\Ee_\gamma[\Thetag\trans\Qg^3\Thetag]-\Ee_\gamma[\Thetag\trans\Qg^2\Thetag]-\dfrac12\dfrac{d\Ee_\gamma[\SSS_\gamma(\alpha)\Thetag]}{d\alpha}\trans\Ee_\gamma[\SSS_\gamma(\alpha)]^{-1}\Ee_\gamma[\SSS_\gamma(\alpha)\Thetag] \\
    &\quad-\dfrac12\Ee_\gamma[\SSS_\gamma(\alpha)\Thetag]\trans\dfrac{d\Ee_\gamma[\SSS_\gamma(\alpha)]^{-1}}{d\alpha}\Ee_\gamma[\SSS_\gamma(\alpha)\Thetag] \\
    &\quad-\dfrac12\Ee_\gamma[\SSS_\gamma(\alpha)\Thetag]\trans\Ee_\gamma[\SSS_\gamma(\alpha)]^{-1}\dfrac{d\Ee_\gamma[\SSS_\gamma(\alpha)\Thetag]}{d\alpha} \\
    &=-\Ee_\gamma[\Thetag\trans\Qg(\II-\alpha\Qg)\Qg\Thetag]+\Ee_\gamma[\Qg^2\Thetag-\alpha\Qg^3\Thetag]\trans\Ee_\gamma[\SSS_\gamma(\alpha)]^{-1}\Ee_\gamma[\SSS_\gamma(\alpha)\Thetag] \\
    &\quad-\Ee_\gamma[\SSS_\gamma(\alpha)\Thetag]\trans\Ee_\gamma[\SSS_\gamma(\alpha)]^{-1}\Ee_\gamma[\Qg^2-\alpha\Qg^3]\Ee_\gamma[\SSS_\gamma(\alpha)]^{-1}\Ee_\gamma[\SSS_\gamma(\alpha)\Thetag] \\
    &\quad+\Ee_\gamma[\SSS_\gamma(\alpha)\Thetag]\trans\Ee_\gamma[\SSS_\gamma(\alpha)]^{-1}\Ee_\gamma[\Qg^2\Thetag-\alpha\Qg^3\Thetag] \\
    &=-\Ee_\gamma[\Thetag\trans\Qg(\II-\alpha\Qg)\Qg\Thetag]+\Ee_\gamma[\Qg(\II-\alpha\Qg)\Qg\Thetag]\trans\Ee_\gamma[\SSS_\gamma(\alpha)]^{-1}\Ee_\gamma[\SSS_\gamma(\alpha)\Thetag] \\
    &\quad-\Ee_\gamma[\SSS_\gamma(\alpha)\Thetag]\trans\Ee_\gamma[\SSS_\gamma(\alpha)]^{-1}\Ee_\gamma[\Qg(\II-\alpha\Qg)\Qg]\Ee_\gamma[\SSS_\gamma(\alpha)]^{-1}\Ee_\gamma[\SSS_\gamma(\alpha)\Thetag] \\
    &\quad+\Ee_\gamma[\SSS_\gamma(\alpha)\Thetag]\trans\Ee_\gamma[\SSS_\gamma(\alpha)]^{-1}\Ee_\gamma[\Qg(\II-\alpha\Qg)\Qg\Thetag] \\
    &=-\Ee_\gamma\left[(\Thetag-\Ee_\gamma[\SSS_\gamma(\alpha)]^{-1}\Ee_\gamma[\SSS_\gamma(\alpha)\Thetag])\trans\Qg(\II-\alpha\Qg)\Qg(\Thetag-\Ee_\gamma[\SSS_\gamma(\alpha)]^{-1}\Ee_\gamma[\SSS_\gamma(\alpha)\Thetag])\right]
\end{align*}
which is the negative of an expectation of a quadratic form with inner matrix $\Qg(\II-\alpha\Qg)\Qg$.
When $0<\alpha\leq 1/\beta$, $\Qg(\II-\alpha\Qg)\Qg$ is positive semidefinite for all $\gamma$, and the derivative is nonpositive.

\end{proof}

\end{document}